\newcommand{\NRM}[1]{{{\left\| #1\right\|}}} 
\newcommand{\esp}[1]{\mathbb{E}\left[#1\right]}
\theoremstyle{plain}
\newtheorem{lemma}{Lemma}
\newtheorem{proposition}{Proposition}
\newtheorem{corollary}{Corollary}
\theoremstyle{definition}
\newtheorem{definition}{Definition}
\newtheorem{assumption}{Assumption}
\newtheorem{remark}{Remark}
\newtheorem{example}{Example}
\newtheorem{property}{Property}
\newcommand{\Sec}[1]{Section~\ref{sec:#1}}
\newcommand{\Appendix}[1]{Appendix~\ref{appendix:#1}}
\newcommand{\Eq}[1]{Eq.~(\ref{eq:#1})}
\newcommand{\Alg}[1]{Alg.~(\ref{alg:#1})}
\newcommand{\Tab}[1]{Table~\ref{tab:#1}}
\newcommand{\Corollary}[1]{Corollary~\ref{cor:#1}}
\newcommand{\Lemma}[1]{Lemma~\ref{lem:#1}}
\newcommand{\Example}[1]{Example~\ref{ex:#1}}
\newcommand{\Proposition}[1]{Proposition~\ref{prop:#1}}
\newcommand{\Definition}[1]{Definition~\ref{def:#1}}
\newcommand{\Assumption}[1]{Assumption~\ref{ass:#1}}
\newcommand{\BEAS}{\begin{eqnarray*}}
\newcommand{\EEAS}{\end{eqnarray*}}
\newcommand{\BEA}{\begin{eqnarray}}
\newcommand{\EEA}{\end{eqnarray}}
\newcommand{\BAL}{\begin{align}}
\newcommand{\EAL}{\end{align}}
\newcommand{\BEQ}{\begin{equation}}
\newcommand{\EEQ}{\end{equation}}
\newcommand{\BEQe}{\begin{equation*}}
\newcommand{\EEQe}{\end{equation*}}
\newcommand{\BIT}{\begin{itemize}}
\newcommand{\EIT}{\end{itemize}}
\newcommand{\BNUM}{\begin{enumerate}}
\newcommand{\ENUM}{\end{enumerate}}
\newcommand{\BA}{\begin{array}}
\newcommand{\EA}{\end{array}}
\newcommand{\one}{\mathds{1}}
\newcommand{\scprod}[2]{\langle#1,#2\rangle}
\newcommand{\Var}{\mathop{ \rm var{}}}
\newcommand{\argmin}{\mathop{\rm argmin}}
\newcommand{\var}{\mathop{ \rm var}}
\def \bF{{\mathbb F}}
\def \bM{{\mathbb M}}
\def \cA{{\mathcal A}}
\def \cB{{\mathcal B}}
\def \cD{{\mathcal D}}
\def \cH{{\mathcal H}}
\def \cG{{\mathcal G}}
\def \cO{{\mathcal O}}
\def \cL{{\mathcal L}}
\def \cX{{\mathcal X}}
\def \cY{{\mathcal Y}}
\def \cZ{{\mathcal Z}}
\def \cF{{\mathcal F}}
\def \cR{{\mathcal R}}
\def \sE{{\mathsf E}}
\def \sV{{\mathsf V}}
\def \sA{{\mathsf A}}
\def \sO{{\mathsf O}}
\def \E{{\mathbb E}}
\def \P{{\mathbb P}}
\def \R{{\mathbb R}}
\def \N{{\mathbb N}}
\newcommand{\Exp}[1]{\E\left[#1\right]}
\newcommand{\ExpUnder}[2]{\E_{#1}\left[#2\right]}
\newcommand{\Prob}[1]{\P\left(#1\right)}
\newcommand{\ProbUnder}[2]{\P_{#1}\left(#2\right)}
\newcommand{\set}[1]{\llbracket#1\rrbracket}
\newcommand{\ie}{i.e.\ }
\newcommand{\eg}{e.g.\ }
\newcommand{\wrt}{w.r.t.\ }
\newcommand{\st}{\mbox{\ s.t.\ }}
\newcommand{\iid}{i.i.d.\ }
\newcommand{\Loja}{\L{}ojasiewicz}
\newcommand{\condnum}[1]{\kappa}
\newcommand{\ub}{\sigma_{2,\cG}\left( \sE_\cD | \sO \right)}
\newcommand{\lb}{\sigma_{\cG,2}\left( \sE_\cD | \sO \right)}
\newcommand{\mmratesc}[1]{\varepsilon_{\mbox{\rm\tiny sc}}(#1)}
\newcommand{\mmratepl}[1]{\varepsilon_{\mbox{\rm\tiny pl}}(#1)}
\newcommand{\sON}[1]{\sO^{\mbox{\rm\tiny #1}}_n}
\newcommand{\cGN}[1]{\cG_{\mbox{\rm\tiny #1}}}
\newcommand{\dN}[1]{d_{\mbox{\rm\tiny #1}}}
\newcommand{\Arand}{\cA_{\mbox{\rm\tiny rand}}}
\newcommand{\SV}[2]{\|\sV_{#1}\|_{#2}}
\begin{document}

%
\runningtitle{Minimax Excess Risk of First-Order Methods with Data-Dependent Oracles}

%

\twocolumn[

\aistatstitle{Minimax Excess Risk of First-Order Methods\\for Statistical Learning with Data-Dependent Oracles}

\aistatsauthor{ Kevin Scaman \And Mathieu Even \And  Batiste Le Bars \And Laurent Massoulié }


\aistatsaddress{ Inria Paris - Département d’informatique de l’ENS, PSL Research University } ]

\begin{abstract}
In this paper, our aim is to analyse the generalization capabilities of first-order methods for statistical learning in multiple, different yet related, scenarios including supervised learning, transfer learning, robust learning and federated learning.
To do so, we provide sharp upper and lower bounds for the minimax excess risk of strongly convex and smooth statistical learning when the gradient is accessed through partial observations given by a \emph{data-dependent} oracle. This novel class of oracles can query the gradient with any given data distribution, and is thus well suited to scenarios in which the training data distribution does not match the target (or test) distribution.
In particular, our upper and lower bounds are proportional to the smallest mean square error achievable by gradient estimators, thus allowing us to easily derive multiple sharp bounds in the aforementioned scenarios using the extensive literature on parameter estimation.
\end{abstract}

\section{INTRODUCTION}

In statistical learning, one is often interested in minimizing a population risk, also known as test loss, of the form $\cL(x)=\E_{\xi\sim\cD}[\ell(x,\xi)]$ for some loss function $\ell$ and (unknown) test data distribution $\cD$. The question that arises then is how small can the \emph{excess risk} $\cL(\hat x)-\inf_x \cL(x)$ be, for $\hat x$ computed using some given restricted information?

In classical supervised learning settings, $\hat x$ is typically computed with $n$ \iid samples drawn from $\cD$ and usually corresponds to the minimizer of the empirical counterpart of $\cL(x)$ computed with those samples. In that setting, the excess risk can be controlled through the concept of \emph{generalization error}, which quantifies the degree to which minimizing the empirical risk, also known as the training loss, is similar to minimizing the test loss. Among the several approaches that have been proposed to bound generalization errors, the most prominent ones are based on the complexity of the hypothesis class like the Vapnik-Chervonenkis dimension \citep{Vapnik2000,Vapnik2015,blumer1989learnability} or Rademacher complexity \citep{bartlett2003rademacher,Bousquet2004}, algorithmic stability \citep{Mukherjee2006,bousquet2002stability}, PAC-Bayesian bounds \citep{mcallester98, alquier2021user}, or more recently information-theoretic generalization bounds \citep{xu2017information}. Nowadays, it is commonly accepted that, in this context, the generalization error alone is not sufficient to control the excess risk. As the empirical risk minimizer cannot always be computed in an exact manner, the \emph{optimization error} must also be taken into account, measuring the algorithm's ability to properly minimize the empirical risk, and shedding light on a generalization-optimization trade-off \citep{NIPS2007_0d3180d6}. Over the last few years, a substantial amount of work have therefore been dedicated in controlling these errors, notably through the study of the generalization properties of \emph{optimization algorithms} \citep{lin2016generalization, london2017pac, zhou2018generalization, amir2021sgd, neu2021information}, where approaches based on algorithmic stability have encountered a large success \citep{hardt2016train,kuzborskij2018data,bassily2020stability,lei2020fine, lei2020sharper, schliserman2022stability}.

Above approaches are however mostly tailored for standard supervised learning and empirical risk minimization. Hence, an additional analysis is required for all the different variations and flavors of this problem, such as transfer learning / domain adaptation \citep{bendavid2006domain} in which the training distribution differs from that of the testing distribution, or robust learning in which a small portion of the training data may be corrupted by an arbitrary noise. Note further that, while there exist a large panel of upper bounds on the generalization error, the optimization error \citep{Arjevani2023,bubeck2015convex,drori2022oracle} or, more generally, the excess risk, the question of their optimality with respect to some lower bounds is most of the time lacking or specific to a particular algorithm or class of data-distribution \citep{zhang2022stability,schliserman2023tight}. For instance, \citet{Arjevani2023} proved lower bounds for the non-convex stochastic case, under an oracle framework that inspired our formalism; \citet{Devolder2013inexact} considers \textit{inexact oracles}. However, \citet{Devolder2013inexact}'s work is quite different from ours, since their goal is to analyze different algorithms under a unified framework. \looseness = -1

\paragraph{Contributions.} In this work, we propose a unified framework to analyse, among others, the aforementioned statistical learning problems in a more systematic manner.
Our general framework goes beyond the decomposition between generalization and optimization error and analyzes instead directly the ability of an optimization algorithm to minimize the population risk given partial, and possibly biased, information.
Our contributions can be summarized as follows:

\begin{itemize}
    \item We tackle the problem of controlling the test loss $\cL$ trough the lens of first-order optimization methods with gradient oracles. Contrary to the stochastic optimization setting \citep{agarwal_information-theoretic_2011, Arjevani2023}, we introduce the novel notion of \textbf{data-dependent oracle}, more adapted to the case where the gradients are computed over a fixed data set, used possibly several times during optimization. 
    \item The data used by the oracle being arbitrary, we show that \textbf{our setting is rather generic} and contains supervised learning, transfer learning, robust learning, and federated learning. 
    \item We provide \textbf{upper and lower bounds} for the minimax excess risk of statistical learning problems and show that they are sharp for deterministic and for more classical \iid oracles. Our bounds shed light on a novel quantity called \textbf{best approximation error}, generalizing conditional expectations and conditional standard deviations.
    \item We show that our general bounds can be applied to \textbf{several learning settings}, allowing to obtain problem-specific excess risk bounds and recover some known results of the literature. In particular, we show that in the case of standard supervised learning, mini-batch gradient descent with increasing batch sizes and a warm start can reach an excess risk that is optimal up to a multiplicative factor, thus motivating the use of this optimization scheme in practical applications.
\end{itemize}

\paragraph{Outline of the paper.} In \Sec{problem_setup} we introduce our statistical learning setting, where we define data-dependent oracles, the algorithms considered, as well as our set of assumptions.
We also introduce the aforementioned quantity called \emph{best approximation error}. 
In \Sec{gen_bounds}, we derive upper and lower bounds for the minimax excess risk of statistical learning with any given data-dependant oracle and discuss their optimality.
Finally, in \Sec{applications} we apply our general bounds to supervised learning, transfer learning, federated learning, robust learning and learning from a fixed predetermined dataset.

\paragraph{Notations.}
In what follows, we denote as $\bF(\cX,\cY)$ (resp. $\bM(\cX,\cY)$) the space of functions (resp. measurable functions) from $\cX$ to $\cY$ (both measurable spaces).
Let $\|x\| = \sqrt{\sum_i x_i^2}$ be the canonical norm in $\R^d$, and $\rho(A)$ the nuclear norm of the matrix $A\in\R^{d\times D}$.
A function $f$ is $B$-Lipschitz if $\|f(x) - f(x')\| \leq B \|x - x'\|$ for all $x,x'\in\cX$.
A differentiable function $f:\R^d\to\R$ is $\mu$-strongly convex (where $\mu\geq0$) if $\forall x,y\in\R^d$, we have $f(x)-f(y)\geq \langle \nabla f(y),x-y\rangle +\frac{\mu}{2}\NRM{x-y}^2$, and convex if this holds for $\mu=0$.
$f$ is $L$-smooth if it is differentiable and its gradient is $L$-Lipschitz.
Finally, for two functions $a,b:\cZ\to \R^+$, we write $a=\Theta(b)$ (resp. $a=O(b)$) if there exists $c,C>0$ such that for all $z\in\cZ$, $cb(z)\leq a(z)\leq Cb(z)$ (resp. $a(z)\leq Cb(z)$).

\section{PROBLEM SETUP}\label{sec:problem_setup}

We now provide precise definitions for statistical learning under data-dependent oracles, as well as the minimax estimation error used in our analysis.

\subsection{Statistical learning}
\label{sec:stat_learn}
Consider the population risk minimization problem:
\BEQ\label{eq:stat_learn}
\inf_{x\in\R^d} \cL(x) \triangleq \ExpUnder{\xi\sim\cD}{\ell(x, \xi)}\,,
\EEQ
where $\cD$ is a probability distribution over the measurable space $\Xi$ and $\ell:\R^d\times \Xi\to\R$ is a loss function that takes as input a model parameter $x\in\R^d$ and a data point $\xi\in\Xi$.
For simplicity, for any $y\in\R^d$, we denote as $\nabla\ell_y:\xi\mapsto\nabla_x\ell(y,\xi)$ the gradient of the loss \wrt its first coordinate.
Moreover, our analysis focuses on strongly-convex and smooth objective functions whose gradients belong to a given function class.

\begin{definition}[function class]
    Let $\cG\subset\bF(\Xi,\R^d)$ be a class of functions taking data points as input. We denote as $\cF_{\mbox{\tiny sc}}(\cG,\cD, \mu, L)$ the set of $\mu$-strongly convex and $L$-smooth objective functions $\cL(x) = \ExpUnder{\xi\sim\cD}{\ell(x,\xi)}$ such that $\forall x\in\R^d$, $\nabla\ell_x \in \cG$.
\end{definition}

The function class $\cG$ is used to encode the regularity of the gradient of the loss with respect to input data, for example (assuming $\Xi=\R^D$ for the first two):
1) affine functions: $\cGN{Aff} = \{ \xi\mapsto A\xi + b :A\in\R^{d\times D},b\in\R^d, \rho(A)\leq B\}$,
2) Lipschitz functions: $\cGN{Lip}=\{ g:\Xi\to\R^d :\forall \xi,\xi'\in\Xi, \|g(\xi) - g(\xi')\| \leq B\|\xi - \xi'\| \}$, and
3) bounded variations: $\cGN{Bnd}=\{ g:\Xi\to\R^d :\exists c_g\in\R^d,\forall \xi\in\Xi, \|g(\xi) - c_g\| \leq B \}$.
Note that all these function spaces are invariant by translation by a constant, a key property for our analysis (see \Assumption{O}).

\begin{remark}
All our results, lower and upper bounds, also apply to the minimax excess risk of non-convex smooth and $\mu$-PL functions (see \Appendix{PL}).
\end{remark}

\begin{example}[Least squares regression]
Let $\ell(x,\xi=(M,v))=\frac{1}{2}x^\top M x - v^\top x$ for $M\in\R^{d\times d}$ and $v\in\R^d$, leading to $\cG=\cGN{Aff}$ where $B$ is the diameter of the space over which we optimize. 
\end{example} 

\begin{example}[Regularized Lipschitz losses]\label{ex:reglip}
Let $\cL(x)= \lambda\Omega(x) + \E_\cD[\ell(x,\xi)]$, for some Lipschitz continuous and convex loss $\ell$ (in its first argument) and a convex regularizer $\Omega$, yielding $\cG=\cGN{Bnd}$.
\end{example} 

\subsection{Data-dependent oracles and first-order optimization algorithms}
Our objective is to minimize \Eq{stat_learn} using optimization algorithms that access $\nabla\cL$ via a \emph{data-dependent oracle}, in a setup similar to that of \citet{Arjevani2023}.

\begin{definition}[Data-dependent oracle]
Let $\cO,\cZ$ be two measurable spaces and $\bF(\Xi,\R^d)$ a measurable space of functions.
A \emph{data-dependent oracle} is a tuple $(\sO,P_z)$ where $\sO: \bF(\Xi,\R^d)\times\cZ\to\cO$ is a measurable function and $P_z$ is a probability distribution over $\cZ$. 
\end{definition}

At each iteration, optimization algorithms will only be able to access the gradient of the objective function through the \emph{observation} $\sO(\nabla\ell_x,z)$, where $x\in\R^d$ is the current model parameter and $z\sim P_z$ is a random seed drawn \emph{prior} to the optimization.
In other words, an oracle provides a partial (and possibly random) view of the gradient, for example by accessing the gradient at \iid sampled data points $\xi_i'\sim\cD'$ drawn according to a source data distribution $\cD'\neq\cD$. In such a case, we have $\cO = \R^{d\times n}$, $\cZ = \Xi^n$, and $\sO(g,(\xi_1',\dots,\xi_n')) = (g(\xi_1'),\dots,g(\xi_n'))$. Note that, contrary to the online setting of \citet{Arjevani2023}, the randomness is fixed prior to the optimization, and thus each iteration of the optimization will have access to the same data points $\xi_1,\dots,\xi_n$. 
We now define more precisely the class of algorithms that we will consider in this analysis.

\begin{definition}[Optimization algorithm]\label{def:alg}
Let $\cO,\cR$ be two measurable spaces. An \emph{optimization algorithm} is a tuple $\sA=(\{q^{(t)}, s^{(t)}\}_{t\geq 0},P_r)$ where $q^{(t)}\in\bM(\cO^t\times\cR,\R^d)$ is a query function, $s^{(t)}\in\bM(\cO^t\times\cR,\{0,1\})$ is a stopping criterion, and $P_r$ is a distribution over $\cR$.
\end{definition}

For a given data-dependent oracle $(\sO,P_z)$ and optimization algorithm $\sA=(\{q^{(t)}, s^{(t)}\}_{t\geq 0},P_r)$, we consider the following optimization protocol:
\BNUM
\item We first draw two random seeds: $r\sim P_r$ for the algorithm, and $z\sim P_z$ for the oracle.
\item At each iteration $t \geq 0$, we update the iterates:
\BEQ
\BA{lll}
x_{\sA[\sO]}^{(t)} &=& q^{(t)}\left( m_{\sA[\sO]}^{(t)}, r \right)\\
s_{\sA[\sO]}^{(t)} &=& s^{(t)}\left( m_{\sA[\sO]}^{(t)}, r \right)\\
\EA
\EEQ
where $m_{\sA[\sO]}^{(t)} = (\sO( \nabla\ell_{x_{\sA[\sO]}^{(0)}}, z ),\dots,\sO( \nabla\ell_{x_{\sA[\sO]}^{(t-1)}}, z ))$.
\item The algorithm stops and returns the current iterate $x_{\sA[\sO]} = x_{\sA[\sO]}^{(t)}$ as soon as $s_{\sA[\sO]}^{(t)} = 1$.
\ENUM

In other words, at each iteration, the algorithm updates the model parameter based on all past observations, and then decides to stop (and return the current model parameter) or continue the optimization. If so, the algorithm receives a new observation of the gradient for the current model parameter and proceeds to the next iteration.
Note that the algorithm may not terminate, in which case we consider the loss as infinite.
Moreover, as discussed in \citet{Arjevani2023}, fixing the randomness to a single seed $r$ instead of drawing random seeds $r^{(t)}$ for each iteration does not lose any generality.
Finally, we denote as $\Arand$ the class of all optimization algorithms as defined above and, in order to prove lower bounds on the error of optimization algorithms, we assume that the information extracted by the oracle is \emph{invariant} with respect to translations in the following sense.

\begin{assumption}[Translation invariance]\label{ass:O}
There exists a measurable function $\varphi:\cG\times\R^d\to\cO$ such that, for any function $g\in\cG$ and constant $c\in\R^d$, $g+c\in\cG$ and $\forall z\in\cZ$, $\sO(g+c, z) = \varphi(\sO(g, z),c)$.
\end{assumption}

Intuitively, \Assumption{O} means that translations do not add any information to the oracle, as the translated oracles $\sO(g+c, z)$ can be retrieved as a function of the untranslated oracle $\sO(g, z)$. This assumption is verified in most settings of interest (see \Sec{applications}).

\subsection{Minimax excess risk}
We evaluate the difficulty of optimizing functions in $\cF_{\mbox{\tiny sc}}(\cG,\cD,\mu,L)$ (abbreviated to $\cF_{\mbox{\tiny sc}}$ below) with a given oracle $\sO$ via the \emph{minimax excess risk} defined by
\BEQe
\varepsilon_{\mbox{\tiny sc}}(\cG,\sO,\cD,\mu,L) = \inf_{\sA\in\Arand} \sup_{\cL\in\cF_{\mbox{\tiny sc}}} \Exp{\cL\left(x_{\sA[\sO]}\right) - \cL^*}\,,
\EEQe
where $\cL^* = \inf_{x\in\R^d} \cL(x)$ is the minimum value of the objective function.
In other words, the minimax excess risk measures the best worst-case error that a first-order optimization algorithm can achieve on the objective function $\cL$, despite only accessing to the gradients via the oracle $\sO$.
For simplicity, as the terms $\cD$, $\mu$, $L$ will be fixed throughout the paper, we will from now on omit them and only write $\mmratesc{\cG,\sO}$.

\subsection{Minimax estimation error}
\label{sec:cond_std}
Our upper and lower bounds on the minimax excess risk will depend on the ability to create estimators of the expectation over $\cD$ of any function in $\cG$. This notion, denoted as \emph{minimax estimation error}, is defined via \emph{best approximation errors}, a novel notion that extends conditional standard deviation to measurable functions equipped with arbitrary semi-norms.

\begin{definition}[Best approximation error]
Let $\cX$ and $\cY$ be two measurable spaces, $\cZ$ a measurable vector space, and $\|\cdot\|_\nu$ a (possibly infinite) semi-norm over $\bM(\cX,\cZ)$. For $f\in\bM(\cX,\cZ)$ and $h\in\bM(\cX,\cY)$ two measurable functions, we denote as \emph{best approximation error} of $f$ knowing $h$ the quantity
\BEQ\label{eq:best_approx}
\sigma_\nu(f|h) \,=\, \inf_{\varphi\in\bM(\cY,\cZ)}\|f - \varphi\circ h\|_\nu\,.
\EEQ
\end{definition}
In other words, $\sigma_\nu(f|h)$ 
measures how well can $f$ be approximated using $g$, and is thus tightly connected to estimation theory (see \eg \citealp{polyanskiy2022information}).

\begin{example}[Conditional standard deviation]
When $\cX$ is a probability space and $\cY=\cZ=\R^d$, the measurable functions $f,h\in\bM(\cX,\R^d)$ are random variables and we recover that $\sigma_2(f|h) = \sqrt{\Exp{\|f - \Exp{f|h}\|^2}}$.
\end{example}

\begin{example}[Deviations and barycenters]
When $h$ is constant (\eg $\cY=\R$ and $h(x)=1$), then $\sigma_\nu(f|h) = \sigma_\nu(f|1) = \inf_{c\in\cZ} \|f - c\|_\nu$ can encode multiple notions of distance to the \emph{barycenter} of the values $\{f(x)\}_{x\in\cX}$, including the the median ($\nu=1$), mean ($\nu=2$) and Chebyshev center ($\nu=+\infty$) \citep{Amir1984}.
\end{example}

In what follows, we will mainly use this quantity for the semi-norms\footnote{The supremum in $\|f\|_{2,\cG}$ is a lattice supremum, \ie the smallest measurable function that is almost everywhere larger than all the considered functions, thus ensuring measurability of $\sup_{g\in\cG} \|f(g,z)\|^2$.}
$\|f\|_{\cG,2} = \sup_{g\in\cG} \sqrt{\Exp{\|f(g,z)\|^2}}$
and
$\|f\|_{2,\cG} = \sqrt{\Exp{\sup_{g\in\cG} \|f(g,z)\|^2}}$. 
Let $\sE_\cD: g \mapsto \ExpUnder{\xi\sim\cD}{g(\xi)}$ be the expectation over the distribution $\cD$. We denote as \emph{minimax estimation error} the quantity
\BEQ\label{eq:var_est}
\lb \,=\, \inf_{\varphi\in\bM(\cY,\cZ)} \|\sE_\cD - \varphi\circ \sO\|_{\cG,2}\,.
\EEQ
This quantity measures how well one can approximate the expectation of \emph{any} function in $\cG$ over the target distribution $\cD$ using the oracle $\sO$ as input. As we will see below, this quantity is tightly connected to the minimax excess risk.

\section{EXCESS RISK BOUNDS OF DATA-DEPENDENT ORACLES}
\label{sec:gen_bounds}

We now detail our upper and lower bounds on the minimax excess risk in various settings of interest.

\subsection{General data-dependent oracles}

We first provide a lower bound on the minimax excess risk that provides a link between this quantity and the minimax estimation error.
The proofs of all propositions are available in the supplementary material.

\begin{proposition}\label{prop:lb_GL}
For any distribution $\cD$, function class $\cG$ and data-dependent oracle $\sO$ verifying \Assumption{O}, we have
\BEQ\label{eq:lb_GL}
\mmratesc{\cG,\sO} \,\geq\, \frac{\lb^2}{2\mu}\,.
\EEQ
\end{proposition}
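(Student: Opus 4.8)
The plan is to prove the lower bound via a two-point (or rather two-function) reduction argument, exploiting translation invariance to confuse any algorithm. First I would fix an arbitrary algorithm $\sA\in\Arand$ and construct a family of objective functions in $\cF_{\mbox{\tiny sc}}$ that are indistinguishable through the oracle but have well-separated minimizers. The natural candidates are perturbations of a fixed quadratic: take $\cL_0(x) = \frac{\mu}{2}\|x\|^2$ (or any fixed function in the class) and consider $\cL_v(x) = \cL_0(x) - \langle v, x\rangle$ for a perturbation vector $v\in\R^d$, so that $\nabla\ell_{v,x} = \nabla\ell_{0,x} - v$ where the constant shift $-v$ is exactly the kind of translation controlled by \Assumption{O}. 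The minimizer of $\cL_v$ is shifted by $v/\mu$, so $\cL_v(x_{\sA[\sO]}) - \cL_v^* = \frac{\mu}{2}\|x_{\sA[\sO]} - v/\mu\|^2 \geq \frac{1}{2\mu}\|\mu\, x_{\sA[\sO]} - v\|^2$ after rescaling — more precisely $\cL_v(x)-\cL_v^* \ge \frac{\mu}{2}\|x - x_v^*\|^2$ with $x_v^* = v/\mu$.

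Second, the key step is to translate ``distinguishing $\cL_v$ from $\cL_{v'}$ through $\sO$'' into the best-approximation-error language. By \Assumption{O}, the observation $\sO(\nabla\ell_{v,x},z) = \varphi(\sO(\nabla\ell_{0,x},z), -v)$, so the entire transcript $m_{\sA[\sO]}^{(t)}$ under $\cL_v$ is a deterministic function of the transcript under $\cL_0$ together with $v$. Here I want to randomize $v$ (say $v$ drawn so that its ``direction'' is hard to estimate) and observe that the algorithm's output is, in effect, an estimator of $\sE_\cD$ applied to the relevant function; the constant-shift structure means that the best the algorithm can do is recover $\sE_\cD(g)$ up to the ambiguity quantified by $\lb$. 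Concretely, I would argue: for the distribution $\cD$ and the function $g = \nabla\ell_{0,x}$, the minimax estimation error $\lb = \inf_\varphi \sup_{g\in\cG}\sqrt{\E\|\sE_\cD(g) - \varphi(\sO(g,z))\|^2}$ lower-bounds the root-mean-square error of any procedure that tries to output $\E_\cD[\nabla\ell_{0,x}]$ from the oracle output — and since $\nabla\cL_v(x) = \E_\cD[\nabla\ell_{0,x}] - v$, pinning down the minimizer is equivalent to estimating this expectation. Averaging the excess risk over a suitable prior on $v$ (or on the hardest $g\in\cG$) and using $\sup \ge \E$ converts the per-instance bound $\frac{\mu}{2}\|x_{\sA[\sO]} - x_v^*\|^2$ into $\frac{1}{2\mu}$ times a mean-square estimation error, which is at least $\lb^2/(2\mu)$.

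Third, I would need to be careful that the perturbed functions genuinely remain in $\cF_{\mbox{\tiny sc}}(\cG,\cD,\mu,L)$: strong convexity and smoothness are unaffected by a linear term, and membership of the gradient in $\cG$ holds because $\cG$ is invariant by translation by a constant (the explicitly noted property of $\cGN{Aff}$, $\cGN{Lip}$, $\cGN{Bnd}$, and formally guaranteed by \Assumption{O}). One subtlety is that $\lb$ is defined with a supremum over $g\in\cG$ inside, so to realize it I should let the hard instance's gradient field be (close to) the worst-case $g$ achieving the near-infimum/supremum in \eqref{eq:var_est}, and build $\cL$ so that $\nabla\ell_x$ equals that $g$ (this is why affine or bounded-variation classes are the intended examples — one can have $\nabla\ell_x$ independent of $x$, or nearly so). A clean way is to take $\ell(x,\xi) = \frac{\mu}{2}\|x\|^2 + \langle x, g_0(\xi)\rangle$ for a chosen $g_0\in\cG$ with $\sE_\cD(g_0)$ the quantity to be estimated, plus the randomized constant shift.

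The main obstacle I anticipate is the measurability/information-theoretic bookkeeping needed to rigorously pass from ``the algorithm is an arbitrary measurable map of the transcript and its own seed $r$'' to ``the algorithm's output is a $\varphi\circ\sO$ for some admissible $\varphi$'' — in particular handling the randomized seed $r\sim P_r$ and the possibility that the algorithm makes multiple queries at different points $x^{(0)},\dots,x^{(t-1)}$. The standard fix is to note that the randomness $r$ can be conditioned on (a randomized algorithm is a mixture of deterministic ones, so the worst case is deterministic, as the paper already remarks), and that \Assumption{O} lets every later observation be reconstructed from the first one plus the known shift, collapsing the multi-query transcript to something the infimum in the definition of $\lb$ already ranges over. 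Getting the constant factor exactly $2\mu$ (rather than a worse constant) will require choosing the prior on the shift $v$ optimally and using the equality case in the strong-convexity lower bound $\cL_v(x) - \cL_v^* \ge \frac{\mu}{2}\|x-x_v^*\|^2$, which is tight for the quadratic $\cL_0$.
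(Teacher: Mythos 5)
Your proposal follows essentially the same route as the paper's proof: the hard instances are $\ell^g(x,\xi)=\frac{\mu}{2}\|x\|^2+\langle x,g(\xi)\rangle$ for $g\in\cG$, \Assumption{O} collapses the whole query transcript into a measurable function of $(r,\sO(g,z))$, and the identity $\cL^g(x_{\sA[\sO]})-\inf\cL^g=\|\mu x_{\sA[\sO]}+\sE_\cD(g)\|^2/2\mu$ exhibits $-\mu x_{\sA[\sO]}$ as an estimator of $\sE_\cD(g)$, so that taking $\sup_{g\in\cG}$ and Jensen's inequality over $r$ gives $\lb^2/2\mu$. The only cosmetic difference is your detour through a prior on the shift $v$ and a two-point flavour of argument, which is unnecessary here (that mechanism belongs to \Proposition{lecam}): since $\lb$ is already an $\inf_\varphi\sup_g$, the derandomized output of the algorithm directly instantiates a candidate $\varphi$ and the bound follows without averaging over instances.
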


The proof of \Proposition{lb_GL} relies on simple well-chosen quadratic functions for which the observations of the gradient through the iterations of the optimization algorithm do not significantly change, and whose minimization requires to find a good estimator of the expectation over $\cD$ (\ie a solution to \Eq{var_est}).
Intuitively, \Proposition{lb_GL} shows that optimizing functions in $\cF_{\mbox{\tiny sc}}(\cG,\cD,\mu,L)$ is at least as difficult as estimating their gradient.
Moreover, the quantity $\lb$ can be lower bounded using any information theoretic lower bound on the variance of estimators. In particular, we will use a slight variation of Le Cam's two point method (see, e.g., Section 31.1 in \citealp{polyanskiy2022information}) adapted to our setting.

\begin{proposition}\label{prop:lecam}
For any distribution $\cD$, function class $\cG$ and data-dependent oracle $\sO$, we have
\BEQ
\lb^2 \geq \sup_{g,g'\in\cG} \frac{c_{g,g'}}{4}\,\|\sE_\cD(g) - \sE_\cD(g')\|^2\,,
\EEQ
where $c_{g,g'} = 1 - \dN{LC}(\sO(g,z), \sO(g',z))$ and $\dN{LC}(p,q)$ is Le Cam's distance (see \Appendix{lecam}).
\end{proposition}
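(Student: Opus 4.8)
The plan is to bound the infimum defining $\lb$ from below by restricting attention to a pair of functions $g,g'\in\cG$ and lower-bounding the approximation error by a two-point (Le Cam) testing argument. Fix any $g,g'\in\cG$. By definition of the semi-norm $\|\cdot\|_{\cG,2}$, for any candidate $\varphi\in\bM(\cY,\cZ)$ we have $\|\sE_\cD - \varphi\circ\sO\|_{\cG,2}^2 = \sup_{h\in\cG}\Exp{\|\sE_\cD(h) - \varphi(\sO(h,z))\|^2} \geq \max\big(\Exp{\|\sE_\cD(g) - \varphi(\sO(g,z))\|^2},\, \Exp{\|\sE_\cD(g') - \varphi(\sO(g',z))\|^2}\big)$, and hence is at least the average of these two terms. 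So it suffices to show that for every $\varphi$,
\[
\tfrac12\Big(\Exp{\|\sE_\cD(g) - \varphi(\sO(g,z))\|^2} + \Exp{\|\sE_\cD(g') - \varphi(\sO(g',z))\|^2}\Big) \,\geq\, \frac{c_{g,g'}}{4}\,\|\sE_\cD(g)-\sE_\cD(g')\|^2 .
\]

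The key step is a change-of-measure/coupling argument. Write $a=\sE_\cD(g)$, $a'=\sE_\cD(g')$, and let $P = \mathrm{law}(\sO(g,z))$, $Q = \mathrm{law}(\sO(g',z))$ on $\cO$ (both are pushforwards of $P_z$). For any estimator $\varphi:\cO\to\R^d$, the two expectations above are $\E_{P}\|a-\varphi\|^2$ and $\E_{Q}\|a'-\varphi\|^2$. The standard Le Cam lower bound says that the sum of these two risks is at least $\tfrac12\|a-a'\|^2$ times the overlap $1 - \mathrm{TV}(P,Q)$ (equivalently $\int \min(dP,dQ)$): indeed, on the event where one moves mass common to $P$ and $Q$, the estimator cannot simultaneously be close to $a$ and to $a'$, and $\|a-\varphi\|^2 + \|a'-\varphi\|^2 \geq \tfrac12\|a-a'\|^2$ pointwise. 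Integrating against the common component $\min(dP,dQ)$ and using that this component has mass $1-\mathrm{TV}(P,Q)$ gives $\E_{P}\|a-\varphi\|^2 + \E_{Q}\|a'-\varphi\|^2 \geq \tfrac12(1-\mathrm{TV}(P,Q))\|a-a'\|^2$. Dividing by two matches the claimed bound once we identify $c_{g,g'} = 1 - \dN{LC}(\sO(g,z),\sO(g',z))$ with the overlap $1 - \mathrm{TV}(P,Q)$; this is exactly the definition of Le Cam's distance $\dN{LC}$ deferred to \Appendix{lecam}. Finally, take the supremum over $g,g'\in\cG$ to conclude.

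The main obstacle — and the only genuinely non-routine point — is the measure-theoretic bookkeeping: ensuring the lattice supremum in $\|\cdot\|_{\cG,2}$ really does dominate the two individual risks (it does, since each $\Exp{\|f(g,z)\|^2}$ for fixed $g$ is pointwise below the essential supremum over $\cG$), and handling the case where $P$ and $Q$ are not dominated by a common $\sigma$-finite measure — which is avoided by working directly with the Hahn/Jordan decomposition of $P-Q$ to define the common component $\min(dP,dQ)$ and the overlap $1-\mathrm{TV}(P,Q)$ intrinsically. One should also note that the supremum over $\cG$ can be freely swapped outside since $g,g'$ are chosen first and $\varphi$ only afterwards, and that the bound is vacuous (but still valid) when $c_{g,g'}\le 0$. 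Everything else is the textbook two-point argument.
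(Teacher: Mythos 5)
Your overall architecture---restricting the supremum defining $\lb$ to a pair $\{g,g'\}$, passing from the max to the average of the two risks, and running a two-point argument---is the same as the paper's. But there is a genuine gap at your final identification step. Le Cam's distance as defined in this paper (Appendix B) is \emph{not} the total variation distance: it is the $f$-divergence $\dN{LC}(P,Q)=\frac{1}{2}\int\frac{(p-q)^2}{p+q}\,d\mu$, and the paper proves $\dN{LC}\leq\dN{TV}$. Your overlap argument---integrating the pointwise inequality $\|a-\varphi(o)\|^2+\|a'-\varphi(o)\|^2\geq\frac{1}{2}\|a-a'\|^2$ against the common component $\min(dP,dQ)$ of mass $1-\dN{TV}(P,Q)$---yields the constant $1-\dN{TV}(P,Q)$, which is \emph{smaller} than the claimed $1-\dN{LC}(P,Q)$. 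So you have proved a strictly weaker inequality than the proposition, and the sentence ``this is exactly the definition of Le Cam's distance'' is where the proof breaks.

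To obtain the sharp constant you must compute the Bayes risk of the two-point prior exactly rather than bound it through the TV overlap. Put a uniform prior $B\sim\cB(1/2)$ on $\{g,g'\}$; the optimal $\varphi$ is the posterior mean, and for each observation $o$ the inner minimization $\min_{v}\bigl[p_1(o)\|a-v\|^2+p_2(o)\|a'-v\|^2\bigr]=\frac{p_1(o)p_2(o)}{p_1(o)+p_2(o)}\|a-a'\|^2$ is an exact weighted least-squares identity. Integrating over $o$ and using $\int\frac{p_1p_2}{p_1+p_2}\,d\mu=\frac{1}{2}\bigl(1-\dN{LC}(P,Q)\bigr)$ gives precisely $\frac{1}{4}(1-\dN{LC})\|a-a'\|^2$ for the average risk; this is the computation in the paper's proof. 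The loss in your version is localized in the pointwise bound $\frac{1}{2}\min(p_1,p_2)\leq\frac{p_1p_2}{p_1+p_2}$. Your measure-theoretic remarks (lattice supremum dominating individual risks, Hahn/Jordan decomposition) are fine but peripheral to the actual difficulty.
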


In other words, if two functions $g,g'\in\cG$ are almost indistinguishable using observations (\ie Le Cam's distance between their respective distributions is small), then any estimator will necessarily return a similar value on both. As a consequence, if their expectations $\sE_\cD(g)$ and $\sE_\cD(g')$ are distant, then the estimator will have a large variance for at least one of the two functions. We will use this result in \Sec{applications} to derive lower bounds in several learning setups.

We now show that, if we replace $\|\cdot\|_{\cG,2}$ by the (always greater) norm $\|\cdot\|_{2,\cG}$, the lower bound in \Eq{lb_GL} can be achieved by a simple optimization algorithm.

\begin{proposition}\label{prop:ub_GL}
For any distribution $\cD$, function class $\cG$ and data-dependent oracle $\sO$, we have
\BEQ
\mmratesc{\cG,\sO} \,\leq\, \frac{\ub^2}{2\mu}\,.
\EEQ
\end{proposition}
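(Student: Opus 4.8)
The plan is to exhibit a concrete algorithm achieving the bound and analyze it. The natural candidate is the following two-phase scheme: first, form the best possible estimate $\widehat{g} = \varphi^*\circ\sO$ of the map $\sE_\cD$ on $\cG$, where $\varphi^*$ is (close to) the minimizer in the definition of $\ub = \sigma_{2,\cG}(\sE_\cD|\sO)$ (or an $\epsilon$-minimizer, since the infimum need not be attained); second, run deterministic gradient descent on $\cL$ using $\widehat{g}(\nabla\ell_{x^{(t)}})$ as a surrogate for $\nabla\cL(x^{(t)}) = \sE_\cD(\nabla\ell_{x^{(t)}})$. Crucially, because the oracle seed $z$ is drawn once and reused, the surrogate gradient field $x \mapsto \widehat{g}(\nabla\ell_x)$ is, for fixed $z$, a fixed (deterministic) vector field, and we can treat the iteration as an inexact gradient method with a fixed, state-dependent but $z$-measurable error $e^{(t)}(z) = \widehat{g}(\nabla\ell_{x^{(t)}}) - \nabla\cL(x^{(t)})$.

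The key steps, in order: (i) Define the algorithm as above with step size $\eta = 1/L$ and a large fixed number of iterations $T$; note it lies in $\Arand$ (the query functions are the GD updates composed with $\varphi^*$, the stopping criterion triggers at step $T$). (ii) Run the standard inexact gradient descent analysis for $\mu$-strongly convex, $L$-smooth $\cL$: one shows $\cL(x^{(t+1)}) - \cL^* \leq (1-\mu/L)\big(\cL(x^{(t)})-\cL^*\big) + \frac{1}{2\mu}\|e^{(t)}\|^2$ (or a similar contraction with an additive error term proportional to $\|e^{(t)}\|^2$). Unrolling and letting $T\to\infty$ kills the transient term and leaves $\cL(x_{\sA[\sO]}) - \cL^* \leq \frac{1}{2\mu}\sup_t \|e^{(t)}\|^2$ up to constants; more carefully, summing the geometric series gives an asymptotic bound of the form $\limsup_t \big(\cL(x^{(t)}) - \cL^*\big) \leq \frac{1}{2\mu}\sup_{x}\|\widehat{g}(\nabla\ell_x) - \sE_\cD(\nabla\ell_x)\|^2$. (iii) Take expectation over $z$: since $\nabla\ell_x \in \cG$ for every $x$, we have $\sup_x \|\widehat g(\nabla\ell_x) - \sE_\cD(\nabla\ell_x)\|^2 \leq \sup_{g\in\cG}\|\widehat g(g) - \sE_\cD(g)\|^2$ pointwise in $z$, hence $\Exp{\sup_x\|e^{(t)}\|^2} \leq \Exp{\sup_{g\in\cG}\|(\varphi^*\circ\sO)(g,z) - \sE_\cD(g)\|^2} = \|\sE_\cD - \varphi^*\circ\sO\|_{2,\cG}^2 \leq \ub^2 + \epsilon$, and let $\epsilon\to 0$. (iv) Take the infimum over $\sA\in\Arand$ and the supremum over $\cL\in\cF_{\mbox{\tiny sc}}$ — the supremum is harmless because the whole chain of inequalities is uniform in $\cL$ (the contraction constant depends only on $\mu,L$, and $\nabla\ell_x\in\cG$ holds by definition of $\cF_{\mbox{\tiny sc}}$).

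The main obstacle is the interchange of limits and expectations in step (ii)–(iii): we want a bound on $\Exp{\cL(x_{\sA[\sO]}) - \cL^*}$ where $x_{\sA[\sO]}$ is the iterate returned at the (fixed, $z$-independent) stopping time $T$, but the clean bound $\frac{1}{2\mu}\sup_x\|e^{(t)}\|^2$ is only a $\limsup$. The fix is to run for a finite but large $T = T(\delta)$ so that the transient term $(1-\mu/L)^T (\cL(x^{(0)})-\cL^*)$ is below $\delta$ — but $\cL(x^{(0)}) - \cL^*$ is not uniformly bounded over $\cF_{\mbox{\tiny sc}}$, so $T$ cannot be chosen uniformly. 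One resolves this by noting that the statement is an inequality on an infimum over $\sA$: for each target accuracy one picks the corresponding $T$, giving $\mmratesc{\cG,\sO} \leq \frac{\ub^2}{2\mu} + \delta$ for every $\delta>0$ — actually one must be slightly more careful since $T$ must be fixed before knowing $\cL$; the standard workaround (as in such oracle-complexity arguments) is to instead use a stopping criterion $s^{(t)}$ that detects (via the observed surrogate gradients) when the iterate has stabilized, or to restrict attention to a bounded sublevel set and absorb the remaining subtlety into the $O(\cdot)$ in the definition of the minimax rate. I would check whether the paper's convention of "$\Theta$ up to constants" lets us simply state the bound as $\mmratesc{\cG,\sO} \leq \ub^2/(2\mu)$ with the understanding that it holds in the limit $T\to\infty$, which is consistent with how $\mmratesc{}$ is defined (the algorithm need not terminate, and an infinite-horizon GD that converges is admissible as long as we interpret $x_{\sA[\sO]}$ appropriately — here one can add a stopping rule triggered by $\|\widehat g(\nabla\ell_{x^{(t)}})\|$ small, which is observable, to make it genuinely terminate while losing only lower-order terms).
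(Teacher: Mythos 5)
Your algorithm and its analysis follow the same route as the paper's: inexact gradient descent $x_{t+1} = x_t - \frac{1}{L}\varphi(o_t)$ with $\varphi$ an $\varepsilon$-minimizer in the definition of $\ub$, a one-step contraction with an additive error term, and the error controlled by $\Exp{\sup_{g\in\cG}\|\sE_\cD(g)-\varphi\circ\sO(g,z)\|^2}$ — taking the supremum over $\cG$ before the expectation over $z$, exactly as in your step (iii). You also correctly isolate the one genuine difficulty: for a horizon $T$ fixed in advance, $\sup_{\cL\in\cF_{\mbox{\tiny sc}}}(1-\mu/L)^T(\cL(x_0)-\cL^*)=+\infty$, so no deterministic $T$ works uniformly.

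But you do not close that gap; you offer a menu of candidate fixes, and as written none of them constitutes a proof. Absorbing the issue into an $O(\cdot)$ is not available, since the proposition is an exact inequality $\mmratesc{\cG,\sO}\le \ub^2/(2\mu)$ rather than a bound up to constants, and restricting to a bounded sublevel set changes the function class. Interpreting $x_{\sA[\sO]}$ "in the limit $T\to\infty$" is also ruled out by \Def{alg}: the output is the iterate at the first $t$ with $s^{(t)}=1$, and a non-terminating run counts as infinite loss. The last idea you mention — an observable stopping rule based on the surrogate gradient — is the right one, and here is how the paper implements it: after the first query, fix the horizon $T_z=\kappa\ln\bigl((\|\varphi(o_0)\|^2+\|\sE_\cD-\varphi\circ\sO\|_{2,\cG}^2)/(\varepsilon\mu)\bigr)$, which is a measurable function of $o_0$ and hence a legal stopping criterion. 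The key observation is that the $\mu$-PL inequality (implied by strong convexity) converts the \emph{observed} first surrogate gradient into a bound on the unknown transient, via $\cL(x_0)-\cL^*\le\|\nabla\cL(x_0)\|^2/(2\mu)\le(2\|\varphi(o_0)\|^2+2\|e^{(0)}\|^2)/(2\mu)$, so that
\BEQe
\Exp{\left(1-\tfrac{1}{\kappa}\right)^{T_z}\left(\cL(x_0)-\cL^*\right)}\;\le\;\varepsilon\mu\,\Exp{\frac{\cL(x_0)-\cL^*}{\|\varphi(o_0)\|^2+\|\sE_\cD-\varphi\circ\sO\|_{2,\cG}^2}}\;\le\;2\varepsilon\,,
\EEQe
uniformly over $\cL\in\cF_{\mbox{\tiny sc}}$. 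This is the step your proposal is missing; with it, letting $\varepsilon\to0$ yields exactly the claimed bound. A minor additional point: the per-step error in the descent recursion should be $\frac{1}{2L}\|e^{(t)}\|^2$, not $\frac{1}{2\mu}\|e^{(t)}\|^2$; summing the geometric series then produces the factor $\frac{\kappa}{2L}=\frac{1}{2\mu}$ you state in the limit, so your final constant is correct but the intermediate recursion as written would lose a factor of $\kappa$.
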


The proof of \Proposition{ub_GL} relies on using the simple iterative algorithm $x_{t+1} = x_t - \frac{1}{L} \varphi(o_t)$ where $o_t = \sO(\nabla\ell_{x_t}, z)$ and $\varphi$ is a minimizer of \Eq{var_est}. Note that, if the oracle is $\sO(g,z)=\E_\cD[g(\xi)]$, this amounts to performing gradient descent. The variance of the gradient noise is then bounded by $\ub^2$ by taking the supremum over all functions $g\in\cG$ \emph{before} the expectation over $z$, thus avoiding issues related to the correlation between $x_t$ and $z$.
Of course, such a crude upper bound is often suboptimal, as $\ub$ allows for the function $g$ to be chosen adversarially for each random observation $z$, and thus does not take advantage of the independence between these two quantities. However, we now show that two additional assumptions lead to sharper upper bounds: 1) deterministic oracles and 2) \iid oracles (see \Sec{iid}).

\subsection{Exact risk with deterministic oracles}

Quite remarkably, the upper and lower bounds match in the case of deterministic oracles, thus providing an \emph{exact} relationship between minimax excess risk and minimax estimation error. 

\begin{corollary}\label{cor:deterministic}
If the observations are deterministic, \ie $\sO(g,z) = \tilde{\sO}(g)$ is independent of $z$, then
\BEQ
\mmratesc{\cG,\sO} \,=\, \frac{\sigma_\cG(\sE_\cD|\tilde{\sO})^2}{2\mu}\,,
\EEQ
where $\|\tilde{\sO}\|_\cG = \sup_{g\in\cG} \|\tilde{\sO}(g)\|$.
\end{corollary}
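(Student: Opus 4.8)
The plan is to derive Corollary~\ref{cor:deterministic} directly from Propositions~\ref{prop:lb_GL} and~\ref{prop:ub_GL} by observing that, when the oracle is deterministic, the two semi-norms $\|\cdot\|_{\cG,2}$ and $\|\cdot\|_{2,\cG}$ used in the lower and upper bounds coincide, so that $\lb = \ub = \sigma_\cG(\sE_\cD|\tilde\sO)$. This collapses the sandwich $\lb^2/(2\mu) \le \mmratesc{\cG,\sO} \le \ub^2/(2\mu)$ into an equality.

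Concretely, I would first unpack the definitions. With $\sO(g,z) = \tilde\sO(g)$ independent of $z$, any candidate $\varphi\circ\sO$ in the infimum~\eqref{eq:var_est} is also independent of $z$, hence for a fixed $\varphi$ the quantity $\|\sE_\cD - \varphi\circ\sO\|_{\cG,2} = \sup_{g\in\cG}\sqrt{\Exp{\|\sE_\cD(g) - \varphi(\tilde\sO(g))\|^2}}$ reduces to $\sup_{g\in\cG}\|\sE_\cD(g) - \varphi(\tilde\sO(g))\|$, since the expectation over $z$ is of a constant. The very same reduction applies to $\|\sE_\cD - \varphi\circ\sO\|_{2,\cG} = \sqrt{\Exp{\sup_{g\in\cG}\|\sE_\cD(g) - \varphi(\tilde\sO(g))\|^2}}$, which is again $\sup_{g\in\cG}\|\sE_\cD(g)-\varphi(\tilde\sO(g))\|$ because the supremand no longer depends on $z$. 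Therefore, taking the infimum over $\varphi\in\bM(\cO,\R^d)$ (here $\cY = \cO$, as the oracle has already been applied), both $\lb$ and $\ub$ equal
\[
\inf_{\varphi}\ \sup_{g\in\cG}\ \|\sE_\cD(g) - \varphi(\tilde\sO(g))\| \;=\; \sigma_\cG(\sE_\cD\mid\tilde\sO),
\]
where the last equality is just the definition of the best approximation error~\eqref{eq:best_approx} with the semi-norm $\|\cdot\|_\cG = \sup_{g\in\cG}\|\cdot(g)\|$ stated in the corollary.

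Having established $\lb = \ub = \sigma_\cG(\sE_\cD\mid\tilde\sO)$, I would then invoke Proposition~\ref{prop:lb_GL} for the lower bound $\mmratesc{\cG,\sO}\ge \lb^2/(2\mu) = \sigma_\cG(\sE_\cD\mid\tilde\sO)^2/(2\mu)$ — noting that a deterministic oracle trivially satisfies Assumption~\ref{ass:O} with $\varphi$ built from the translation-invariance structure of $\cG$, which must be checked but is immediate since $\tilde\sO(g+c)$ is a function of $\tilde\sO(g)$ and $c$ by the same argument as in the general oracle case — and Proposition~\ref{prop:ub_GL} for the matching upper bound $\mmratesc{\cG,\sO}\le \ub^2/(2\mu) = \sigma_\cG(\sE_\cD\mid\tilde\sO)^2/(2\mu)$. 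Combining the two inequalities yields the claimed identity.

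The only genuinely delicate point is the measurability bookkeeping around the lattice supremum in $\|\cdot\|_{2,\cG}$: I must make sure that, once $z$ is removed, $\sup_{g\in\cG}\|\sE_\cD(g)-\varphi(\tilde\sO(g))\|$ is the same object whether read as a lattice supremum over the ($z$-degenerate) probability space or as an ordinary supremum of real numbers — which it is, because a family of constant functions has its lattice supremum equal to the pointwise (ordinary) supremum of those constants. Beyond that, there is nothing to prove: the result is a clean specialization of the general sandwich, and I would present it in two or three lines of text plus the display above, with a parenthetical remark that Assumption~\ref{ass:O} is automatically satisfied for deterministic oracles on translation-invariant $\cG$.
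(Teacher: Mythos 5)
Your proof is correct and follows exactly the paper's route: when the oracle is deterministic the two semi-norms $\|\cdot\|_{\cG,2}$ and $\|\cdot\|_{2,\cG}$ coincide (the $z$-expectation and the lattice supremum act on $z$-constant functions), so $\lb=\ub=\sigma_\cG(\sE_\cD|\tilde{\sO})$ and the sandwich from \Proposition{lb_GL} and \Proposition{ub_GL} collapses to the claimed equality. One small caveat: \Assumption{O} is \emph{not} automatically satisfied by a deterministic oracle --- nothing forces $\tilde{\sO}(g+c)$ to be recoverable as a measurable function of $\tilde{\sO}(g)$ and $c$ --- so it should be carried as a standing hypothesis inherited from \Proposition{lb_GL} rather than asserted to hold trivially.
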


In \Sec{applications}, we will use this result to compute the minimax excess risk in one scenario: learning from fixed predetermined data-points (\eg a grid).

\subsection{Refined upper bounds with \iid oracles}
\label{sec:iid}

We now focus on the case where observations are of the form:
\BEQ\label{eq:iid_oracle}
\sO_n(g,z) = \left( \sO(g, z^{(1)}), \dots, \sO(g, z^{(n)}) \right)\,,
\EEQ
where $z = (z^{(1)}, \dots, z^{(n)})$ and the $z^{(i)}$ are \iid random variables. 
For example, if the random variables $z^{(i)}$ are sampled from $\cD$ and $\sO(g,z) = g(z)$, this amounts to classical supervised learning with $n$ samples.
We first provide an upper bound on the minimax excess risk using a simple mini-batch algorithm with warmup.

\begin{proposition}\label{prop:iid}
Let $a > 0$ and $\kappa=L/\mu$. For any distribution $\cD$, function class $\cG$ and \iid oracle $\sO_n$, 
\BEQe
\mmratesc{\cG,\sO_n} \,\leq\, \frac{\sigma_{\cG,2}\left( \sE_\cD | \sO_{\tilde{n}} \right)^2}{2\mu} + \frac{\widetilde{\Delta}}{n^a}\,,
\EEQe
where $\tilde{n} = \left\lfloor\frac{n-1}{1+a\kappa\log n}\right\rfloor$, and $\widetilde{\Delta} = \frac{\sigma_{2,\cG}\left( \sE_\cD | \sO_1 \right)^2}{2\mu}$.
\end{proposition}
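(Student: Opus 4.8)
The plan is to design an explicit algorithm that runs accelerated gradient descent on the strongly convex objective, but where each "gradient" is supplied by the best estimator built from a dedicated batch of oracle samples. Concretely, partition the $n$ i.i.d.\ seeds $z^{(1)},\dots,z^{(n)}$ into two groups: a warmup block $W$ using $n - \tilde n (1 + a\kappa\log n)$-ish of the samples, and a main block of $T \approx a\kappa\log n$ sub-batches each of size $\tilde n$. During the warmup we run the crude single-sample algorithm of \Proposition{ub_GL} on the $W$-block to obtain an iterate $x^{(0)}$ whose expected excess risk is at most $\widetilde\Delta = \sigma_{2,\cG}(\sE_\cD|\sO_1)^2/(2\mu)$ (this is exactly the guarantee of \Proposition{ub_GL} applied with $n=1$, possibly averaged over the warmup samples to make the constant work out, but crucially a \emph{bounded} quantity, not one that shrinks with $n$). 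From $x^{(0)}$ we then run $T$ steps of gradient descent (or its accelerated variant), where at step $t$ the search direction is $\varphi_t(\sO_{\tilde n}(\nabla\ell_{x^{(t)}}, z_{\text{batch}_t}))$ with $\varphi_t$ a minimizer of the estimation problem \Eq{var_est} for the batch oracle $\sO_{\tilde n}$. Since each sub-batch is used at exactly one iteration, the seed $z_{\text{batch}_t}$ is independent of $x^{(t)}$, which is what lets us replace the adversarial $\|\cdot\|_{2,\cG}$ norm by the (smaller) $\|\cdot\|_{\cG,2}$ norm in the per-step noise bound — this is the key structural advantage of having fresh randomness at each iteration, exactly as flagged in the discussion after \Proposition{ub_GL}.

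The second step is the convergence analysis. Write the update as $x^{(t+1)} = x^{(t)} - \frac1L(\nabla\cL(x^{(t)}) + e_t)$, where $e_t = \varphi_t(\sO_{\tilde n}(\nabla\ell_{x^{(t)}},z_{\text{batch}_t})) - \nabla\cL(x^{(t)})$ is the estimation error. Conditioning on $x^{(t)}$ and taking expectation over the fresh batch, $\Exp{\|e_t\|^2 \mid x^{(t)}} \le \sigma_{\cG,2}(\sE_\cD \mid \sO_{\tilde n})^2 =: \delta_{\tilde n}^2$ uniformly, by definition of the minimax estimation error and translation invariance (\Assumption{O}) — the translation invariance is needed precisely so that the estimator $\varphi_t$ tuned for the family $\cG$ works at the shifted gradient $\nabla\ell_{x^{(t)}}$ whatever $x^{(t)}$ is. The standard inexact strongly convex descent recursion then gives, for the gradient-descent stepsize $1/L$,
\BEQe
\Exp{\cL(x^{(t+1)}) - \cL^*} \le \Big(1 - \tfrac{1}{\kappa}\Big)\Exp{\cL(x^{(t)}) - \cL^*} + \tfrac{\delta_{\tilde n}^2}{2\mu}\,,
\EEQe
so unrolling over $T$ steps yields $\Exp{\cL(x^{(T)}) - \cL^*} \le (1-\tfrac1\kappa)^T \widetilde\Delta + \tfrac{\delta_{\tilde n}^2}{2\mu}\cdot\frac{1}{1-(1-1/\kappa)} \le (1-\tfrac1\kappa)^T\widetilde\Delta + \tfrac{\delta_{\tilde n}^2}{2\mu}\,\kappa$; one should instead keep the geometric sum as $\le (1-1/\kappa)^T\widetilde\Delta + \delta_{\tilde n}^2/(2\mu)$ by the usual trick of noting the fixed point of the recursion is $\delta_{\tilde n}^2/(2\mu)$, i.e.\ bounding $\Exp{\cL(x^{(t)})-\cL^*} - \delta_{\tilde n}^2/(2\mu)$ geometrically. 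Then choosing $T = \lceil a\kappa\log n\rceil$ makes $(1-\tfrac1\kappa)^T \le e^{-T/\kappa} \le n^{-a}$, so the warmup term contributes at most $\widetilde\Delta/n^a$, while the standing term is $\sigma_{\cG,2}(\sE_\cD\mid\sO_{\tilde n})^2/(2\mu)$ with $\tilde n = \lfloor (n-1)/(1+a\kappa\log n)\rfloor$, the number of samples affordable per batch once $W$ and the $T$ batches are carved out of the budget $n$. This matches the claimed bound.

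The main obstacle is bookkeeping the sample budget and the warmup guarantee simultaneously: one needs $|W| + T\tilde n \le n$ with $T = \Theta(\kappa\log n)$, which forces $\tilde n \approx n/(a\kappa\log n)$, and one must check that even a single warmup sample (or the $|W|$ leftover samples) suffices to get the \emph{non-vanishing} bound $\widetilde\Delta$ from \Proposition{ub_GL} — this is fine because $\widetilde\Delta$ does not improve with more samples anyway, so allocating just $\lceil \cdot\rceil$ versus the exact floor is harmless. A secondary subtlety is measurability and the lattice-supremum convention in $\|\cdot\|_{2,\cG}$ versus $\|\cdot\|_{\cG,2}$: the warmup step genuinely needs the larger norm because there the seed is reused across iterations, whereas the main loop gets the smaller norm for free from batch independence; keeping these two regimes separate in the write-up is where care is required. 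Finally one should remark (deferring to the appendix) that replacing plain gradient descent by Nesterov acceleration in the main loop would improve the $\kappa$ in $T$ to $\sqrt\kappa$ and hence shrink $\tilde n$'s denominator, but the stated proposition only claims the unaccelerated rate, so the plain recursion above is enough.
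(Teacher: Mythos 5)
Your proposal is correct and follows essentially the same route as the paper's proof: a warmup phase on a single reused sample analyzed via \Proposition{ub_GL} (hence with the larger $\|\cdot\|_{2,\cG}$ norm, yielding the bounded term $\widetilde{\Delta}$), followed by $T=\lceil a\kappa\log n\rceil$ steps of gradient descent on fresh disjoint batches of size $\tilde{n}$, whose independence from the current iterate gives the $\|\cdot\|_{\cG,2}$ per-step noise bound, combined through the standard inexact smooth/PL descent recursion. One small correction: the per-step noise term in your recursion should be $\delta_{\tilde n}^2/(2L)$ rather than $\delta_{\tilde n}^2/(2\mu)$ — it is with that constant that the geometric sum (equivalently, the fixed point of the recursion) yields the standing term $\delta_{\tilde n}^2/(2\mu)$ that you correctly identify as the right answer.
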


\begin{algorithm}[t]
\caption{Minibatch GD with warmup}\label{alg:mini-batch}
\begin{algorithmic}
\REQUIRE iterations $T$, sizes $(n_t)_{t<T}$, functions $(\varphi_k)_{k\in\N^*}$
\ENSURE current iterate $x$
\STATE $m\gets 0, x \gets 0, o\gets\sO(\nabla\ell_x, z^{(1)})$
\STATE $T_{\mbox{\tiny wu}}\gets\kappa\ln\left(\frac{\|\varphi_1(o)\|^2 + \|\sE_\cD-\varphi_1\circ\sO\|_{2,\cG}^2}{\varepsilon\mu}\right)$
\FOR{$t\in\set{0,T_{\mbox{\tiny wu}}-1}$}
\STATE $o\gets\sO(\nabla\ell_x, z^{(1)})$
\STATE $x \gets x - \frac{1}{L} \varphi_1(o)$
\ENDFOR
\FOR{$t\in\set{0,T-1}$}
\STATE $o_i\gets\sO(\nabla\ell_x, z^{(m+i)})$ for $i\in\set{1,n_t}$
\STATE $x \gets x - \frac{1}{L} \varphi_{n_t}(o_1,\dots,o_{n_t})$
\STATE $m\gets m + n_t$
\ENDFOR
\end{algorithmic}
\end{algorithm}

Similarly to \Proposition{ub_GL}, the proof of \Proposition{iid} relies on the use of an iterative algorithm akin to a gradient descent variant, here mini-batch gradient descent with a warm-up phase. The algorithm, described in \Alg{mini-batch}, requires functions $\varphi_k$ for $k\in\N^*$ minimizing \Eq{var_est} for the oracle $\sO_k$, and mini-batch sizes $n_t=\tilde{n}$. 
Note that, apart from the first sample used during the warmup phase, samples are only used once. This may seem suboptimal, as stability theory shows that one can often reuse samples without a significant cost. However, note that the stability of \Alg{mini-batch} depends on the regularity of the functions $\varphi_k$, which is not controlled in general. Moreover, \Proposition{iid} shows that, even with a simple mini-batch scheme without replacement, one can already obtain matching upper and lower bounds up to logarithmic factors, as $\tilde{n}=\Omega(n/\kappa\log{n})$.
In particular, for oracles of the form of \Eq{iid_oracle} for which $\sigma_{2,\cG}\left( \sE_\cD | \sO_1 \right) < +\infty$ and $\sigma_{\cG,2}\left( \sE_\cD | \sO_n \right)=\Theta(n^{-b})$, \Proposition{iid} implies that $\mmratesc{\cG,\sO_n} = \widetilde{\Theta}(n^{-2b})$, 
where $\widetilde\Theta$ hides logarithmic factors (using \Proposition{iid} with $a\geq 2b$).
Finally, the logarithmic factor can be removed when the minimax estimation error is bounded by a quantity of the form $a+b/n$ (see \Sec{applications} for examples of such a bound).

\begin{proposition}\label{prop:iid_increasing_batch}
Let $a,b > 0$, $n\geq 3$, and assume that $\sigma_{\cG,2}\left( \sE_\cD | \sO_n \right)^2 \leq a + b/n$. Then, for any function class $\cG$, distribution $\cD$ and \iid oracle $\sO_n$, we have
\BEQe
\mmratesc{\cG,\sO_n} \,\leq\, \frac{a}{2\mu} + \frac{6\kappa b}{\mu n} + \widetilde{\Delta} e^{-\frac{n}{6\kappa}}\,,
\EEQe
where $\widetilde{\Delta} = \frac{\sigma_{2,\cG}\left( \sE_\cD | \sO_1 \right)^2}{2\mu}$ and $\kappa=L/\mu$.
\end{proposition}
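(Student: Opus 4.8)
The plan is to run \Alg{mini-batch} with a carefully chosen schedule of increasing batch sizes and to control the excess risk through the standard strongly convex descent recursion, where the "gradient noise'' at iteration $t$ is exactly the term optimized by the best approximation error $\sigma_{\cG,2}(\sE_\cD|\sO_{n_t})$. Concretely, after the warm-up phase (which, as in \Proposition{iid}, brings the iterate within a constant factor of $\varepsilon$ of the optimum using only repeated copies of the single sample $z^{(1)}$), we perform $T$ mini-batch gradient steps $x\gets x-\frac1L\varphi_{n_t}(o_1,\dots,o_{n_t})$ with fresh samples. Writing $\varphi_{n_t}$ for a near-minimizer of \Eq{var_est} at level $n_t$ and using $L$-smoothness together with $\mu$-strong convexity, one gets a recursion of the form
\BEQe
\Exp{\cL(x_{t+1})-\cL^*} \,\leq\, \left(1-\tfrac1\kappa\right)\Exp{\cL(x_t)-\cL^*} + \frac{1}{2\mu}\,\sigma_{\cG,2}\!\left(\sE_\cD|\sO_{n_t}\right)^2 ,
\EEQe
where the independence of the batch at step $t$ from the current iterate $x_t$ (the samples are used only once) is what lets us bound the conditional second moment of the update error by $\sigma_{\cG,2}(\sE_\cD|\sO_{n_t})^2$ rather than by the cruder $\sigma_{2,\cG}$ quantity. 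Plugging in the hypothesis $\sigma_{\cG,2}(\sE_\cD|\sO_{n_t})^2\leq a+b/n_t$, this is
\BEQe
\Exp{\cL(x_{t+1})-\cL^*} \,\leq\, \left(1-\tfrac1\kappa\right)\Exp{\cL(x_t)-\cL^*} + \frac{a}{2\mu} + \frac{b}{2\mu n_t}.
\EEQe

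Next I would choose the batch sizes and number of iterations. The idea is: the constant term $a/2\mu$ is unavoidable and accumulates into $\frac{a}{2\mu}\sum_{t}(1-1/\kappa)^{t}\leq \frac{a}{2\mu}\cdot\kappa$ — wait, that overshoots; more carefully, iterating the recursion from $t=0$ to $T-1$ gives a final bound
\BEQe
\Exp{\cL(x_T)-\cL^*} \,\leq\, \left(1-\tfrac1\kappa\right)^{T}\!\Delta_0 \;+\; \frac{a}{2\mu}\sum_{k=0}^{T-1}\left(1-\tfrac1\kappa\right)^{k} \;+\; \frac{b}{2\mu}\sum_{k=0}^{T-1}\left(1-\tfrac1\kappa\right)^{k}\frac{1}{n_{T-1-k}},
\EEQe
where $\Delta_0$ is the post-warm-up suboptimality. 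The geometric sum multiplying $a$ is at most $\kappa$, which would give $\kappa a/2\mu$ — too large by a factor $\kappa$. To fix this I instead absorb the $a$-term differently: since the true target is $a/2\mu + O(\kappa b/\mu n) + \widetilde\Delta e^{-n/6\kappa}$, I will budget a total sample count $n$, set a geometric/linear increasing schedule $n_t$ so that the "recent'' batches dominate the weighted sum $\sum_k (1-1/\kappa)^k/n_{T-1-k}$ and evaluate it to $O(\kappa/n)$, and — crucially — note that the $a/2\mu$ contributions telescope against the decreasing factor only over the last $O(\kappa)$ steps, so the accumulated constant-error is $\frac{a}{2\mu}(1+(1-1/\kappa)+\cdots)=\frac{a}{2\mu}\cdot\frac{1}{1-(1-1/\kappa)}=\frac{a\kappa}{2\mu}$; to get $a/2\mu$ one must not iterate the additive-$a$ term through the full recursion but rather recognize that in the limit the fixed point of $\delta\mapsto(1-1/\kappa)\delta+a/2\mu$ is $\kappa a/2\mu$, so the clean statement actually requires re-deriving the recursion with the sharper contraction available for mini-batch GD, namely using the averaged/accelerated variant or exploiting that $\sigma_{\cG,2}(\sE_\cD|\sO_{n_t})^2$ already contains the full $a$ and the descent lemma can be applied so that the noise enters additively only once per effective horizon. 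I will follow the bookkeeping of \Proposition{iid}: choose $T=\lceil 6\kappa\log(\cdot)\rceil$-type horizon so that $(1-1/\kappa)^T\Delta_0 \le \widetilde\Delta e^{-n/6\kappa}$ after accounting for warm-up sample usage, split the sample budget between warm-up and the $T$ batches, pick $n_t$ increasing (e.g. $n_t \propto$ a fixed fraction times a growing sequence summing to $n$) and then bound $\frac{b}{2\mu}\sum_{k}(1-1/\kappa)^k n_{T-1-k}^{-1} \le \frac{6\kappa b}{\mu n}$ by a direct estimate, while the $a$-term is handled by the observation that with increasing batches the noise floor is reached smoothly and the standard strongly-convex analysis gives a leading constant of exactly $a/2\mu$ up to the exponentially small transient.

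The technical heart, and the step I expect to be the main obstacle, is the joint optimization of the batch schedule $(n_t)$, the warm-up length $T_{\mbox{\tiny wu}}$, and the horizon $T$ under the hard constraint $T_{\mbox{\tiny wu}} + \sum_{t<T} n_t \le n$, so that simultaneously (i) the transient $(1-1/\kappa)^T\Delta_0$ collapses to the claimed $\widetilde\Delta e^{-n/6\kappa}$, (ii) the weighted harmonic sum of batch sizes evaluates to $6\kappa b/\mu n$ with the stated constant, and (iii) the accumulated additive-$a$ error is $a/2\mu$ and not $\kappa a/2\mu$. Point (iii) is what forces the precise form $a+b/n$ in the hypothesis: it is exactly this structure that lets the "constant'' part of the noise be pushed entirely into the final-iterate bound without a $\kappa$ blow-up, by — as in the analysis behind \Proposition{iid_increasing_batch} in such papers — using that the sum $\sum_{k\geq 0}(1-1/\kappa)^k$ times the per-step noise is dominated, for an appropriately front-loaded schedule with $n_0$ already $\Omega(\kappa)$, by the tail where $n_t$ is large; concretely one verifies $\sum_{k=0}^{T-1}(1-1/\kappa)^k\big(a + b/n_{T-1-k}\big) \le \kappa a + \frac{6\kappa b}{n}\cdot\text{(bounded factor)}$ and then checks that the desired RHS reads $\frac{1}{2\mu}$ times this with the $\kappa a$ replaced by $a$ — meaning the correct route is actually to state the descent lemma with step size $1/L$ but to note $\cL(x_{t+1})-\cL^* \le (1-1/\kappa)(\cL(x_t)-\cL^*) + \frac{1}{2L}\|\text{error}_t\|^2$ and that a telescoping/stationary-point argument over the last $\Theta(\kappa)$ iterations with $n_t = \Theta(n/\kappa)$ there yields exactly $\frac{1}{2L}\cdot\kappa\cdot(a+b\kappa/n) = \frac{a}{2\mu} + \frac{b\kappa}{2\mu n}\cdot\kappa$; balancing the constants $6$ vs these factors is the bookkeeping I would carry out in the appendix. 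Everything else — the descent lemma, the strong convexity contraction, the warm-up analysis (identical to \Proposition{iid}), and the independence argument licensing $\sigma_{\cG,2}$ — is routine and already essentially contained in the proofs of \Proposition{ub_GL} and \Proposition{iid}.
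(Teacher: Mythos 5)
Your skeleton (warm-up on a single reused sample, then mini-batch steps with fresh samples and a descent recursion whose noise term is controlled by $\sigma_{\cG,2}(\sE_\cD|\sO_{n_t})^2$ via independence) is exactly the paper's plan, but the execution goes wrong at the recursion itself and this derails the rest of the argument. The correct one-step inequality, obtained from $L$-smoothness with step size $1/L$, is
\BEQe
\Exp{\cL(x_{t+1})-\cL^*} \,\leq\, \Big(1-\tfrac1\kappa\Big)\Exp{\cL(x_t)-\cL^*} + \frac{1}{2L}\,\sigma_{\cG,2}\big(\sE_\cD|\sO_{n_t}\big)^2\,,
\EEQe
with prefactor $\tfrac{1}{2L}$, not $\tfrac{1}{2\mu}$ as you first wrote. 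This single factor dissolves the difficulty you spend most of the proposal fighting: unrolling gives $\frac{a}{2L}\sum_{k=0}^{T-1}(1-1/\kappa)^k \leq \frac{a\kappa}{2L} = \frac{a}{2\mu}$, so the constant part of the noise accumulates to exactly $a/2\mu$ with no $\kappa$ blow-up and nothing to ``absorb differently,'' ``telescope,'' or fix with averaging or acceleration. Your long detour about the fixed point $\kappa a/2\mu$ and the need for a ``sharper contraction'' is chasing a problem that does not exist once the descent lemma is stated correctly (you half-recover this at the very end, but then your arithmetic there introduces a spurious extra factor of $\kappa$ on the $b$-term).

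The second genuine gap is the batch schedule. Your final proposal --- run the last $\Theta(\kappa)$ iterations with $n_t=\Theta(n/\kappa)$ --- respects the sample budget but cannot produce the claimed transient $\widetilde\Delta\, e^{-n/6\kappa}$: after the warm-up the suboptimality is still of order $\widetilde\Delta$, and $(1-1/\kappa)^{\Theta(\kappa)}$ only shrinks it by a constant factor; you need $T=\Theta(n)$ contraction steps. The paper takes $T=\lfloor (n-1)/2\rfloor$ iterations with geometrically increasing batches $n_t=\lceil (n-1)(1-c)c^{T-t-1}/2\rceil$ where $c=\sqrt{1-\kappa^{-1}}$, so that most early batches have size $1$ and $\sum_{t<T}n_t\leq T+(n-1)/2\leq n-1$. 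Then $(1-1/\kappa)^T=c^{2T}\leq e^{-(n-2)/2\kappa}\leq e^{-n/6\kappa}$, and the weighted harmonic sum evaluates cleanly because the weights and the batch sizes share the same geometric ratio: $\sum_t \frac{b\,c^{2(T-t-1)}}{2Ln_t}\leq \frac{b}{L(n-1)(1-c)^2}\leq \frac{4\kappa b}{\mu(n-1)}\leq\frac{6\kappa b}{\mu n}$, using $L(1-c)^2\geq \mu/4\kappa$. Without specifying a schedule with this matching-geometric-ratio structure, your item (ii) remains an unproved claim, and your item (iii) is, as explained above, a non-issue rather than the ``technical heart.''
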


This result is obtained with exponentially increasing mini-batch sizes $n_t = \left\lceil n (1-c)c^{T-t-1}/2 \right\rceil$ where $c = \sqrt{1-\kappa^{-1}}$, and $T=\lfloor n/2 \rfloor$. This allows to have more precision at the end of the optimization, when the error is low and greater precision is required.

\section{APPLICATIONS}\label{sec:applications}

\begin{table*}[t]
\caption{Our upper and lower bounds on the minimax excess risk $\mmratesc{\cG,\sO}$ in several learning scenarios, and up to multiplicative universal constants whose values are available in the appendix.} \label{tab:results}
\begin{center}
\begin{tabular}{l|cc}
\textbf{SCENARIO} & \textbf{LOWER BOUND} & \textbf{UPPER BOUND}\\
\hline \\
Supervised  ($\cGN{Bnd}$) & $\frac{B^2}{\mu n}$ & $\frac{\kappa B^2}{\mu n}$\\
Transfer  ($\cGN{Bnd}$) & $\frac{B^2}{\mu}\left( \dN{TV}(\cD,\cD')^2 + \frac{1}{n}\right)$ & $\frac{B^2}{\mu}\left( \dN{TV}(\cD,\cD')^2 + \frac{\kappa}{n}\right)$\\
Federated  ($\cGN{Bnd}$) & unknown 
& $\inf_{q\in\R^m}\frac{B^2}{\mu}\left( \dN{TV}(\cD,\cD_q)^2 + \sum_{i=1}^m \frac{\kappa q_i^2}{n_i}\right)$\\
Robust  ($\cGN{Bnd}$) & $\frac{B^2}{\mu} \left(\eta^2 + \frac{1}{n}\right)$ & $\frac{B^2}{\mu} \left(\eta^2 + \frac{\kappa}{n}\right)$ \\
Robust  ($\cGN{Lip}$) & unknown & $\frac{B^2}{\mu} \var(\xi) \left(\eta + \frac{\kappa}{n}\right)$ \\
Fixed data ($\cGN{Bnd}$) & $\frac{B^2}{\mu}\left(1 - \ProbUnder{\cD}{\{\xi_i'\}_{i\set{1,n}}}\right)^2$  & $\frac{B^2}{\mu}\left(1 - \ProbUnder{\cD}{\{\xi_i'\}_{i\set{1,n}}}\right)^2$ \\
Fixed data ($\cGN{Lip}$) & $\frac{B^2}{\mu}\Exp{\min_i\|\xi - \xi_i\|}^2$  & $\frac{B^2}{\mu}\Exp{\min_i\|\xi - \xi_i\|}^2$ \\
\end{tabular}
\end{center}
\end{table*}

In this section, we specify the general upper and lower bounds obtained in the previous sections to several statistical learning scenarios. The list of results obtained for the Lipschitz and bounded variation function classes $\cGN{Lip}$ and $\cGN{Bnd}$ are reported in \Tab{results}.

\subsection{Supervised learning}
We now consider the typical supervised learning setup, in which the training samples are drawn \iid according to the target distribution, \ie
\BEQe
\sON{SL}(g) = (g(\xi_1),\dots,g(\xi_n))\,,
\EEQe
where $\xi_i\sim\cD$ are \iid random variables.
A classical approach when using this oracle consists in minimizing the empirical loss, by computing $\hat x_n\in\argmin_{x\in\R^d} \frac{1}{n}\sum_i\ell(x,\xi_i)$ using for instance a gradient descent algorithm.

Applying \Proposition{iid} to the oracle $\sON{SL}$ gives an upper bound involving $\sigma_{\cG,2}(\sE_\cD | \sO^{\mbox{\rm\tiny SL}}_{\tilde{n}})^2/2\mu$ on the minimax excess risk (for $\tilde{n}$ specified in \Proposition{iid}, of order $\widetilde\cO(n/\kappa)$). This quantity is however hard to compute in most cases, as $\sigma_{\cG,2}(\sE_\cD|\sO^{\mbox{\rm\tiny SL}}_{\tilde{n}})$ is a minimum over all measurable functions $\varphi\in\bM(\cO,\R^d)$ (see \Eq{var_est}). However, a simple upper bound can be obtained by using the average over the samples $\varphi(g(\xi_1),\dots,g(\xi_n)) = \frac{1}{n}\sum_i g(\xi_i)$. The use of such a function leads to the usual (mini-batch) gradient descent algorithm on the empirical risk, and to the following proposition, which shows that $\sigma_{2,\cG}(\sE_\cD|\sON{SL})$ can be upper bounded by the form specified in \Proposition{iid_increasing_batch}.
\begin{proposition}\label{prop:upper_SL}
For $n\geq 1$, we have
\BEQ
\sigma_{\cG,2}(\sE_\cD|\sON{SL})^2 \quad\leq\quad \frac{\SV{\cD}{\cG}}{n}\,,
\EEQ
where $\SV{\cD}{\cG} = \sup_{g\in\cG} \var(g(\xi))$ and $\xi\sim\cD$.
\end{proposition}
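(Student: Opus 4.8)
The plan is simply to bound the infimum that defines $\sigma_{\cG,2}\left(\sE_\cD|\sON{SL}\right)$ from above by plugging in one explicit, obviously measurable estimator $\varphi$, namely the empirical mean $\varphi(y_1,\dots,y_n)=\frac1n\sum_{i=1}^n y_i$. If $\SV{\cD}{\cG}=+\infty$ there is nothing to prove, so we may assume $\SV{\cD}{\cG}<+\infty$, which means every $g\in\cG$ is such that $g(\xi)$ is square-integrable under $\cD$ and hence $\sE_\cD(g)=\ExpUnder{\xi\sim\cD}{g(\xi)}$ is well defined; throughout, $\var(g(\xi))$ denotes $\Exp{\|g(\xi)-\sE_\cD(g)\|^2}$, the trace of the covariance, consistently with the definition of $\SV{\cD}{\cG}$ in the statement.

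Recall that under $\sON{SL}$ the oracle seed is $z=(\xi_1,\dots,\xi_n)$ with the $\xi_i$ i.i.d.\ from $\cD$, and $\sON{SL}(g)=(g(\xi_1),\dots,g(\xi_n))$. For the chosen $\varphi$ we then have, for each fixed $g\in\cG$, that $\varphi(\sON{SL}(g))=\frac1n\sum_{i=1}^n g(\xi_i)$, so $\sE_\cD(g)-\varphi(\sON{SL}(g))=-\frac1n\sum_{i=1}^n\bigl(g(\xi_i)-\sE_\cD(g)\bigr)$ is a normalized sum of i.i.d.\ centered $\R^d$-valued random variables. Expanding the squared norm and using independence together with the fact that each summand has mean zero, every off-diagonal term $\Exp{\langle g(\xi_i)-\sE_\cD(g),\,g(\xi_j)-\sE_\cD(g)\rangle}$ with $i\neq j$ vanishes, while each of the $n$ diagonal terms equals $\var(g(\xi))$; hence $\Exp{\|\sE_\cD(g)-\varphi(\sON{SL}(g))\|^2}=\var(g(\xi))/n$.

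Taking the supremum over $g\in\cG$ gives $\|\sE_\cD-\varphi\circ\sON{SL}\|_{\cG,2}^2=\sup_{g\in\cG}\var(g(\xi))/n=\SV{\cD}{\cG}/n$. Since $\sigma_{\cG,2}\left(\sE_\cD|\sON{SL}\right)^2$ is, by definition of the best approximation error, the infimum of $\|\sE_\cD-\varphi'\circ\sON{SL}\|_{\cG,2}^2$ over all measurable $\varphi'$, it is bounded above by the value attained at our specific $\varphi$, which is exactly the claim. The argument is entirely elementary and I do not anticipate any real obstacle; the two points to keep in mind are that we only use the single inequality $\sigma_{\cG,2}\left(\sE_\cD|\sON{SL}\right)^2\le\|\sE_\cD-\varphi\circ\sON{SL}\|_{\cG,2}^2$ valid for our chosen $\varphi$ (so there is no exchange of infimum and supremum to justify), and that the empirical mean is trivially measurable.
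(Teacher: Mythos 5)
Your proof is correct and follows exactly the paper's own argument: bound the infimum defining $\sigma_{\cG,2}(\sE_\cD|\sON{SL})$ by plugging in the empirical mean $\varphi(g_1,\dots,g_n)=\frac{1}{n}\sum_i g_i$ and use independence of the $\xi_i$ to reduce the expected squared error to $\var(g(\xi))/n$ before taking the supremum over $g\in\cG$. The extra care you take with the infinite case and measurability is fine but not needed beyond what the paper states.
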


The quantity $\SV{\cD}{\cG} = \sup_{g\in\cG} \var(g(\xi_1))$ controls the variation of the gradient of the loss over the data distribution, and is easy to compute for simple function classes defined in \Sec{stat_learn}: 
1) Affine functions: $\SV{\cD}{\cGN{Aff}}\leq B^2\var(\xi)$ (with equality if $\Xi=\R^D$ and $D\leq d$), 
2) Lipschitz functions: $\SV{\cD}{\cGN{Lip}}\leq B^2\var(\xi)$ (with equality if $\Xi=\R^D$ and $D\leq d$), 
and 3) Bounded variation: $\SV{\cD}{\cGN{Bnd}}\leq B^2$ (with equality if $\exists A\subset \Xi$ mesurable \st $\ProbUnder{\cD}{A}=\frac{1}{2}$).
 
In particular, our results provide new excess risk bounds for the set of smooth and strongly convex (or PL, see \Appendix{PL}) functions whose gradient is Lipschitz \wrt $x$ and \wrt input data (\ie $\cG=\cGN{Lip}$), by applying the bound $\SV{\cD}{\cGN{Lip}}\leq B^2\var(\xi)$ to \Proposition{upper_SL} and \Proposition{iid_increasing_batch}. 

We now explicit our bounds on a classical setting for which we can compare our results: the bounded variation function class $\cGN{Bnd}$, in which the gradients are contained in a ball, and includes regularized Lipschitz losses (see \Example{reglip}).

\begin{proposition}\label{prop:bounded}
Assume that $\forall c\in[0,1]$, $\exists A\subset\Xi$ measurable \st $\ProbUnder{\cD}{A} = c$. Then, for $n\geq 3$, we have
\BEQ
\label{eq:boundedSL}
\frac{B^2}{8\mu n} \,\leq\, \mmratesc{\cGN{Bnd},\sON{SL}} \,\leq\, \frac{11\kappa B^2}{\mu n}\,.
\EEQ
\end{proposition}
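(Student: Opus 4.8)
The statement follows by specialising the general \iid results of \Sec{gen_bounds} to $\cG=\cGN{Bnd}$, the bulk of the work being constant-chasing. For the \emph{upper bound}, the plan is to combine \Proposition{upper_SL} with the elementary estimate $\SV{\cD}{\cGN{Bnd}}\le B^2$ to get $\sigma_{\cG,2}(\sE_\cD\mid\sON{SL})^2\le B^2/n$, which has the form $a+b/n$ with $a=0$, $b=B^2$ (if one insists on $a>0$ in \Proposition{iid_increasing_batch}, replace $0$ by an arbitrarily small $\delta>0$ and let $\delta\to0$). \Proposition{iid_increasing_batch} then yields
\BEQe
\mmratesc{\cGN{Bnd},\sON{SL}}\;\le\;\frac{6\kappa B^2}{\mu n}\;+\;\widetilde\Delta\,e^{-n/6\kappa},\qquad \widetilde\Delta=\frac{\sigma_{2,\cG}(\sE_\cD\mid\sONw{SL}_1)^2}{2\mu}.
\EEQe
To control $\widetilde\Delta$ I would plug the trivial choice $\varphi={\rm id}$ into the definition of $\sigma_{2,\cG}$: for any $g\in\cGN{Bnd}$ with centre $c_g$, the triangle inequality gives $\NRM{g(\xi)-\sE_\cD(g)}\le\NRM{g(\xi)-c_g}+\NRM{c_g-\sE_\cD(g)}\le 2B$, hence $\sigma_{2,\cG}(\sE_\cD\mid\sONw{SL}_1)^2\le 4B^2$ and $\widetilde\Delta\le 2B^2/\mu$. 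Finally, the warm-up term is absorbed into the $1/n$ term through $e^{-x}\le 1/(ex)$ with $x=n/6\kappa$: $\widetilde\Delta\,e^{-n/6\kappa}\le \tfrac{12\kappa B^2}{e\mu n}\le\tfrac{5\kappa B^2}{\mu n}$ since $12/e<5$, and summing the two terms gives $\tfrac{11\kappa B^2}{\mu n}$.

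For the \emph{lower bound}, I would start from \Proposition{lb_GL}, which reduces the claim to showing $\sigma_{\cG,2}(\sE_\cD\mid\sON{SL})^2\ge B^2/(4n)$. The idea is to shrink the supremum defining this quantity to the two-valued sub-family $g_A:\xi\mapsto Be\,(2\one[\xi\in A]-1)$, indexed by a fixed unit vector $e\in\R^d$ and a measurable set $A\subset\Xi$. Each $g_A$ lies in $\cGN{Bnd}$ (take $c_{g_A}=0$, so $\NRM{g_A(\xi)}=B$), it satisfies $\sE_\cD(g_A)=B\,(2\,\P_\cD(A)-1)\,e$, and the observation $\sON{SL}(g_A)$ is in bijection with the \iid sign pattern $(\one[\xi_i\in A])_{i\le n}\sim\mathrm{Ber}(\P_\cD(A))^{\otimes n}$. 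By the standing assumption $\P_\cD(A)$ can be made equal to any $p\in[0,1]$, so, after projecting the error onto $e$, estimating $\sE_\cD(g_A)$ from $\sON{SL}(g_A)$ is exactly the problem of estimating $B(2p-1)$ from $n$ samples of $\mathrm{Ber}(p)$, worst-case over $p$. Its minimax squared error equals $B^2$ times that of the Bernoulli mean, i.e. $\tfrac{B^2}{(1+\sqrt n)^2}$ (attained by $\widehat p=(\sum_i X_i+\sqrt n/2)/(n+\sqrt n)$; alternatively, \Proposition{lecam} applied to the pair $g_{A_0},g_{A_1}$ with $\P_\cD(A_i)\in\{\tfrac12-\tfrac1{2\sqrt n},\tfrac12+\tfrac1{2\sqrt n}\}$ gives the same $\Theta(B^2/n)$ rate, the two Bernoulli products being statistically close — their Bhattacharyya coefficient is $(1-1/n)^{n/2}$). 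Since $(1+\sqrt n)^2\le 4n$ for all $n\ge1$, this gives $\sigma_{\cG,2}(\sE_\cD\mid\sON{SL})^2\ge B^2/(4n)$, hence $\mmratesc{\cGN{Bnd},\sON{SL}}\ge B^2/(8\mu n)$.

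The structural content is light — a reduction to Bernoulli mean estimation on the lower side, and a direct instantiation of \Proposition{upper_SL}/\Proposition{iid_increasing_batch} on the upper side — so essentially all the effort is in chasing constants, and that is the main obstacle. The delicate point on the lower side is that a single crude two-point argument combined with Pinsker's inequality is already too lossy to reach the constant $\tfrac18$ for small $n$ (\eg $n=3$): one needs either the sharp (Bayesian) minimax value $\tfrac1{4(1+\sqrt n)^2}$ for the Bernoulli parameter, or a tight treatment of the Le Cam distance via the Bhattacharyya/affinity inequality, together with the elementary bound $(1+\sqrt n)^2\le 4n$. On the upper side, the only subtlety is absorbing the geometric warm-up term through $e^{-x}\le 1/(ex)$ and the numerical check $12/e<5$; everything else is immediate.
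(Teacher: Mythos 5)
Your proposal is correct and follows essentially the same route as the paper: the upper bound instantiates Proposition 7 with $\sigma_{\cG,2}(\sE_\cD|\sONw{SL}_n)^2\le B^2/n$ and $\sigma_{2,\cG}(\sE_\cD|\sONw{SL}_1)^2\le 4B^2$, absorbing the warm-up term via $e^{-x}\le 1/(ex)$ and $6+12/e\le 11$, while the lower bound reduces to two-valued gradients and uses the sharp Bayesian minimax value $B^2/(1+\sqrt n)^2$ for Bernoulli mean estimation (the paper derives this explicitly with a $\mathrm{Beta}(\sqrt n/2,\sqrt n/2)$ prior in its Lemma 4) together with $(1+\sqrt n)^2\le 4n$ and Proposition 1. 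You also correctly identify the one delicate point, namely that a crude two-point-plus-Pinsker argument is too lossy to reach the constant $1/8$, which is precisely why the paper takes the Bayesian route.
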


The upper and lower bounds in \Proposition{bounded} match up to a multiplicative factor proportional to $\kappa$, thus providing a relatively tight approximation of the minimax excess risk in this setting.
Also, note that the assumption \wrt the measure $\cD$ in the previous proposition is only necessary for the lower bound to hold, and is automatically verified for continuous distributions. 

\paragraph{Comparison with the literature.} As mentioned above, the bounded variation setup includes the widely studied case of $B$-Lipschitz continuous loss functions. This setting is handled by \citet{sridharan2008fast,bartlett2005local} and \citet[Chapter 4.5.5]{bach2021learning} for regularized risk minimization, and under additional structural assumptions on $\ell$. It is also treated by the stability community, such as in \citet{bousquet2002stability} for regularized objectives, or more generally for $\mu$-strongly convex functions in \citet{hardt2016train}. In all these works, the authors study the excess risk $\cL(\hat x_n)-\inf_{x\in\R^d}\cL(x)$, for $\hat x_n$ function of $\sON{SL}$ and corresponding to the empirical minimizer. They provide bounds of the form $O\left(\frac{B^2}{\mu n}\right)$, which, up to a multiplicative factor proportional to $\kappa$, is the same as ours in Equation \eqref{eq:boundedSL}. Such multiplicative factor could be avoided by considering an optimization algorithm that uses the training samples several times, therefore minimizing effectively the empirical risk (see the discussion below \Proposition{iid}). Such analysis could however necessitate the introduction of a notion of stability w.r.t. the optimization algorithm and is currently kept for future work. Finally, note that our lower bound demonstrates the optimality of the state-of-the-art upper bound in $O\left(\frac{B^2}{\mu n}\right)$, a result which was not provided by the aforementioned works.

\subsection{Transfer learning}

We now turn to the Transfer Learning (TL) oracle, defined as
\BEQe
\sON{TL}(g) = (g(\xi_1'), \dots, g(\xi_n'))\,,
\EEQe 
where $\xi_i'\sim\cD'$ are \iid random variables and $\cD'\neq\cD$. 
This oracle typically encompasses applications in transfer learning such as domain adaptation \citep{bendavid2006domain}, where a task is learnt on a training dataset $\cD'$ that differs from the test distribution $\cD$.

\begin{proposition}\label{prop:TLrad}
We have
\BEQ\label{eq:TL}
\sigma_{\cG,2}(\sE_\cD|\sON{TL})^2 \,\leq\, d_\cG(\cD,\cD')^2 + \frac{\SV{\cD'}{\cG}}{n}\,,
\EEQ
where $d_\cG(\cD,\cD') = \sup_{g\in\cG}\|\sE_\cD(g) - \sE_{\cD'}(g)\|$ is an integral probability metric (IPM).
\end{proposition}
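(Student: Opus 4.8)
The plan is to bound the infimum defining
$\sigma_{\cG,2}(\sE_\cD|\sON{TL})^2 = \inf_{\varphi}\sup_{g\in\cG}\Exp{\|\sE_\cD(g) - \varphi(\sON{TL}(g))\|^2}$
by evaluating it at a single explicit estimator, the empirical mean. Recall that the seed is $z = (\xi_1',\dots,\xi_n')$ with $\xi_i'\sim\cD'$ \iid and $\sON{TL}(g) = (g(\xi_1'),\dots,g(\xi_n'))\in\R^{d\times n}$, so the choice $\varphi(o_1,\dots,o_n) = \frac1n\sum_{i=1}^n o_i$ is measurable and gives $\varphi(\sON{TL}(g)) = \frac1n\sum_{i=1}^n g(\xi_i')$. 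If $\SV{\cD'}{\cG} = +\infty$ the claim is vacuous, so we may assume it is finite, which makes every expectation below finite.

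Fixing $g\in\cG$, I would decompose the error around the source-population mean $\sE_{\cD'}(g)$:
\[
\sE_\cD(g) - \frac1n\sum_{i=1}^n g(\xi_i') = \big(\sE_\cD(g) - \sE_{\cD'}(g)\big) + \Big(\sE_{\cD'}(g) - \frac1n\sum_{i=1}^n g(\xi_i')\Big).
\]
The first summand is deterministic (the ``bias'') and the second is centered, so squaring and taking the expectation kills the cross term; the variance term is then computed by expanding the squared norm and using independence of the $\xi_i'$ to discard the off-diagonal contributions, yielding $\Exp{\big\|\sE_{\cD'}(g) - \frac1n\sum_i g(\xi_i')\big\|^2} = \frac1n\var(g(\xi'))$ with $\xi'\sim\cD'$. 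Hence $\Exp{\big\|\sE_\cD(g) - \frac1n\sum_i g(\xi_i')\big\|^2} = \|\sE_\cD(g) - \sE_{\cD'}(g)\|^2 + \frac1n\var(g(\xi'))$.

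Taking the supremum over $g\in\cG$ and bounding the supremum of a sum by the sum of the suprema gives
\[
\sigma_{\cG,2}(\sE_\cD|\sON{TL})^2 \le \sup_{g\in\cG}\Big(\|\sE_\cD(g) - \sE_{\cD'}(g)\|^2 + \frac1n\var(g(\xi'))\Big) \le \sup_{g\in\cG}\|\sE_\cD(g) - \sE_{\cD'}(g)\|^2 + \frac1n\sup_{g\in\cG}\var(g(\xi')),
\]
which is exactly $d_\cG(\cD,\cD')^2 + \SV{\cD'}{\cG}/n$ by the definitions of the IPM $d_\cG(\cD,\cD')$ and of $\SV{\cD'}{\cG}$.

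There is no genuine obstacle: the argument is a textbook bias--variance computation. The two points needing a line of care are checking that the cross term in the vector-valued decomposition really vanishes (it does, since the bias $\sE_\cD(g) - \sE_{\cD'}(g)$ is constant and the remaining term is centered) and noting that the step $\sup_g(a_g + b_g) \le \sup_g a_g + \sup_g b_g$ only enlarges the bound --- which is also where the estimate is loose, as the maximizer of the bias need not maximize the variance. A more careful choice of $\varphi$ could in principle tighten the variance term, but the empirical mean already yields the stated bound and corresponds to running (mini-batch) gradient descent on the empirical risk built from $\cD'$.
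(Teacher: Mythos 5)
Your proposal is correct and follows essentially the same route as the paper's proof: both plug in the empirical-mean estimator $\varphi(g_1,\dots,g_n)=\frac1n\sum_i g_i$, perform the bias--variance decomposition around $\sE_{\cD'}(g)$, and bound the supremum of the sum by the sum of the suprema. The only difference is that you spell out the vanishing of the cross term and the independence argument, which the paper leaves implicit.
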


The Integral Probability Metrics $d_\cG(\cD,\cD')$ for the function classes $\cGN{Bnd}$, $\cGN{Lip}$, and $\cGN{Aff}$ defined in \Sec{stat_learn} give respectively, the total variation distance, the Wasserstein distance, and the distance between expectations $\|\ExpUnder{\cD}{\xi} - \ExpUnder{\cD'}{\xi'}\|$ in $\R^D$.
From \Eq{TL}, note also that $\sigma_{\cG,2}(\sE_\cD|\sON{TL})^2$ is of the form specified in \Proposition{iid_increasing_batch}. Finally, the case $\cGN{Bnd}$ also provides a lower bound on the minimax estimation error.
\begin{proposition}\label{prop:TLBnd}
Assume that $\cD \ll \cD'$ and $\forall c\in[0,1], \exists q\in\R$ \st $\ProbUnder{\cD'}{\frac{d\cD}{d\cD'}(\xi') \geq q} = c$.
Then, the minimax estimation error $\sigma_{\cGN{Bnd},2}(\sE_{\cD}|\sON{TL})^2$ is 
\BEQ\label{eq:TLBnd}
    \Theta\left(B^2 \left(\dN{TV}\big(\cD,\cD'\big)^2 + \frac{1}{n}\right)\right)\,,
\EEQ
\end{proposition}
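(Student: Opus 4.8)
The plan is to obtain the upper bound for free from \Proposition{TLrad} and the matching lower bound from the two‑point method of \Proposition{lecam}, splitting the latter into a ``bias'' contribution of order $B^2\dN{TV}(\cD,\cD')^2$ and a ``sampling'' contribution of order $B^2/n$. For the upper bound, instantiate \Proposition{TLrad} with $\cG=\cGN{Bnd}$: this reduces to two elementary facts, namely $d_{\cGN{Bnd}}(\cD,\cD')=2B\,\dN{TV}(\cD,\cD')$ — the lower bound uses the test function $\xi\mapsto Be\,\sign\!\big(\tfrac{d\cD}{d\cD'}(\xi)-1\big)$ and the matching upper bound is the definition of total variation — together with $\SV{\cD'}{\cGN{Bnd}}\leq B^2$, as already noted below \Proposition{upper_SL}. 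Plugging these in gives $\sigma_{\cGN{Bnd},2}(\sE_\cD|\sON{TL})^2\leq 4B^2\dN{TV}(\cD,\cD')^2+B^2/n=O\!\big(B^2(\dN{TV}(\cD,\cD')^2+1/n)\big)$.

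For the bias part of the lower bound, write $\rho=\tfrac{d\cD}{d\cD'}$ (well defined since $\cD\ll\cD'$), set $A^\star=\{\rho\geq1\}$ and $p'=\ProbUnder{\cD'}{A^\star}$, so that $\ProbUnder{\cD}{A^\star}=p'+\dN{TV}(\cD,\cD')$, and — using that the $\cD'$‑law of $\rho$ is atomless, as granted by the assumption — pick a set $A'=\{\rho<t_0\}$ with $\ProbUnder{\cD'}{A'}=p'$. The key observation is that $\E_{\cD'}[\rho\,\one_{\rho<t_0}]\leq\ProbUnder{\cD'}{\rho<t_0}=p'$, since the $\cD'$‑conditional mean of $\rho$ on $\{\rho<t_0\}$ is at most its unconditional mean $\E_{\cD'}[\rho]=1$; hence $\ProbUnder{\cD}{A'}\leq p'$. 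Now apply \Proposition{lecam} to $g=Be\,\one_{A^\star}$ and $g'=Be\,\one_{A'}$, both in $\cGN{Bnd}$: because $\ProbUnder{\cD'}{A^\star}=\ProbUnder{\cD'}{A'}=p'$, the observations $\sON{TL}(g)$ and $\sON{TL}(g')$ have the \emph{same} law (an $n$‑fold product of $\mathrm{Bern}(p')$), so $\dN{LC}(\sON{TL}(g),\sON{TL}(g'))=0$, i.e.\ $c_{g,g'}=1$, while $\|\sE_\cD(g)-\sE_\cD(g')\|=B\big(\ProbUnder{\cD}{A^\star}-\ProbUnder{\cD}{A'}\big)\geq B\,\dN{TV}(\cD,\cD')$. \Proposition{lecam} then yields $\sigma_{\cGN{Bnd},2}(\sE_\cD|\sON{TL})^2\geq\tfrac14 B^2\dN{TV}(\cD,\cD')^2$.

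For the sampling part, use a second pair $g=Be\,\one_{E_0}$, $g'=Be\,\one_{E_1}$ with $E_0\subseteq E_1\subseteq\{\rho\geq\tfrac12\}$ and $\ProbUnder{\cD'}{E_1\setminus E_0}=\Theta(1/\sqrt n)$ — such sets exist since $\ProbUnder{\cD'}{\rho\geq\tfrac12}>0$ (otherwise $\E_{\cD'}[\rho]<\tfrac12$) and $\rho$ is atomless under $\cD'$. Then $\sON{TL}(g)$ and $\sON{TL}(g')$ are $n$‑fold products of Bernoullis whose parameters differ by $\Theta(1/\sqrt n)$, so $c_{g,g'}=\Omega(1)$, while $\|\sE_\cD(g)-\sE_\cD(g')\|=B\,\ProbUnder{\cD}{E_1\setminus E_0}\geq\tfrac B2\,\ProbUnder{\cD'}{E_1\setminus E_0}=\Omega(B/\sqrt n)$; \Proposition{lecam} gives $\sigma_{\cGN{Bnd},2}(\sE_\cD|\sON{TL})^2=\Omega(B^2/n)$. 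The degenerate regime $\ProbUnder{\cD'}{\rho\geq\tfrac12}<1/\sqrt n$ is handled separately: one checks directly that it forces $\dN{TV}(\cD,\cD')=\Omega(1)$, so the bias bound already dominates there. Summing the two lower bounds and combining with the upper bound gives $\sigma_{\cGN{Bnd},2}(\sE_\cD|\sON{TL})^2=\Theta\!\big(B^2(\dN{TV}(\cD,\cD')^2+1/n)\big)$.

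I expect the crux to be the bias part: it is not a priori clear that one can make two admissible instances induce \emph{identical} observation laws — so that $c_{g,g'}=1$ and the resulting lower bound does not wash out as $n\to\infty$ — while keeping their $\cD$‑expectations $\Omega(B\,\dN{TV})$ apart. The resolution is that a single observation $g(\xi')$ only encodes the $\cD'$‑measure of $\{g=Be\}$, so two indicator functions over $\cD'$‑equimeasurable sets are oracle‑indistinguishable; one then just has to pick those sets — $\{\rho\geq1\}$ versus a low‑likelihood‑ratio set of the same $\cD'$‑mass — to maximize the gap of $\cD$‑expectations, which is exactly what the conditional‑mean inequality above delivers. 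The remaining work (membership in $\cGN{Bnd}$, the atomlessness consequences of the assumption, and the standard Bernoulli two‑point $\dN{LC}$ estimates) is routine.
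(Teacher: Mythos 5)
Your proof reaches the stated result, and its lower bound takes a genuinely different route from the paper's. The paper obtains both terms of the lower bound from a \emph{single} application of \Proposition{lecam} to the antipodal pair $g=-g'=B\big(2\one\{\frac{d\cD}{d\cD'}\geq q_\varepsilon\}-1\big)e_1$, where $q_\varepsilon$ is the quantile of the likelihood ratio at level $(1+\varepsilon)/2$ and $\varepsilon=1/(2\sqrt n)$: one computation shows $\|\sE_\cD(g)\|^2\geq\frac{B^2}{2}\big(\dN{TV}(\cD,\cD')^2+\varepsilon^2\big)$ while the $n$-fold KL divergence is $O(n\varepsilon^2)$. You instead split the lower bound into two separate two-point arguments. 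Your bias argument — indicator functions of $\cD'$-equimeasurable sets are \emph{exactly} oracle-indistinguishable, so $\dN{LC}=0$ and $c_{g,g'}=1$, while choosing $\{\rho\geq 1\}$ (writing $\rho=\frac{d\cD}{d\cD'}$) against an equimeasurable low-likelihood-ratio set separates the $\cD$-expectations by $B\,\dN{TV}(\cD,\cD')$ — is a clean observation the paper does not make explicit, and it yields a TV term with no degradation in $n$. The price is that you need a second pair for the $1/n$ term, where the paper's single pair does both jobs at once. The upper bound is identical in both treatments (\Proposition{TLrad} with $d_{\cGN{Bnd}}\leq 2B\dN{TV}$ and $\SV{\cD'}{\cGN{Bnd}}\leq B^2$).

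The one step that does not hold as written is in your sampling part: ``$n$-fold products of Bernoullis whose parameters differ by $\Theta(1/\sqrt n)$, so $c_{g,g'}=\Omega(1)$'' is false unless the parameters are bounded away from $0$ and $1$ — for instance with $p_0=1/\sqrt n$ and $p_1=2/\sqrt n$ the $n$-fold KL is of order $\sqrt n$ and $\dN{LC}\to 1$. Consequently your dichotomy at $\ProbUnder{\cD'}{\rho\geq 1/2}<1/\sqrt n$ leaves the regime $1/\sqrt n\leq\ProbUnder{\cD'}{\rho\geq 1/2}\ll 1$ uncovered. The repair is immediate: put the threshold at a constant. If $m:=\ProbUnder{\cD'}{\rho\geq 1/2}\leq 3/4$, then $\dN{TV}(\cD,\cD')\geq\E_{\cD'}\big[(1-\rho)\one\{\rho<1/2\}\big]\geq\tfrac12(1-m)\geq\tfrac18$, so your bias bound already dominates $B^2(\dN{TV}^2+1/n)$ up to a constant. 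If $m>3/4$, the quantile assumption lets you take $E_0=\{\rho\geq q_0\}$ and $E_1=\{\rho\geq q_1\}$ with $\ProbUnder{\cD'}{E_0}=1/4$ and $\ProbUnder{\cD'}{E_1}=1/4+c/\sqrt n$; both thresholds are then at least $1/2$, so $E_0\subseteq E_1\subseteq\{\rho\geq 1/2\}$, the Bernoulli parameters lie in $[1/4,1/2]$, and the chi-square bound gives $n\,\dN{KL}=O(c^2)$, hence $c_{g,g'}=\Omega(1)$ for $c$ small enough. With that modification your argument is complete.
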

where $\dN{TV}$ is the Total Variation distance (Appendix~\ref{appendix:lecam}).
Using \Proposition{iid_increasing_batch}, we obtain upper and lower bounds on the minimax excess risk that are within a multiplicative factor proportional to $\kappa$ (see \Tab{results}).

\paragraph{Comparison with the literature.} Our bound can be put into perspective with the generalization bounds derived by the Domain Adaptation (DA) community. For instance, 
\citet{bendavid2006domain,blitzer2007learning} provide excess risk bounds for DA of the form $O\big(\sqrt{d_{\rm VC}/n} + d_\cH(\cD,\cD')\big)$ for algorithms learning from $n$ samples drawn from a training data distribution $\cD'$, on a test distribution $\cD$, for a hypothesis class $\cH$ of VC dimension $d_{\rm VC}$.
From \Proposition{TLrad}, we can prove (see \Tab{results}), that in the bounded variation setting, the minimax excess risk can be upper bounded by $\mmratesc{\cGN{Bnd},\sON{TL}} = \cO(\frac{B^2}{\mu}\left( \dN{TV}(\cD,\cD')^2 + \frac{\kappa}{n}\right))$. 
At first sight we could conclude that our bound is always better than the first one since it exhibits a fast rate with respect to $n$ and since $d_{\rm VC}$ is significantly larger (it can be infinite) than the constants of our bound. However, these two bounds cannot directly be compared as the setups are not perfectly matching. In particular, we are able to obtain a fast rate component in $\cO(1/n)$ thanks to the strong convexity assumption, a setup which, to the best of our knowledge, was not explicitly considered in previous analyses (see \cite{redko2020survey} for a survey on the theoretical guarantees of DA). A more detailed discussion on the difference between our distance on the gradient $d_\cG(\cD,\cD')$ and that of prior works on the function value $d_\cH(\cD,\cD')$ is available in \Appendix{dH_vs_dG}.

\subsection{Federated learning}

We now consider a setting in which $m$ local agents are willing to collaborate in order to minimize their shared (or sometimes personal) excess risk, a setup known as (Personalized) Federated Learning (FL) \citep{advances_FL}. Let $(\cD_i)_{i\in\set{1,m}}$ be a set of \emph{local} distributions, 
and let $\sON{FL}(g,z) = (g(\xi^i_j))_{i\in\set{1,m},j\in\set{1,n_i}}$ for $\xi^i_j\sim\cD_i$ \iid random variables ($n_i$ samples from agent $i$).
We have the following minimax excess risk upper bounds that extend previous results \citep{even2022on,ding2022collaborative}, for the (P)FL oracle and objective (where the objective distribution $\cD$ is $\cD=\sum_i p_i\cD_i$ for FL and $\cD=\cD_1$ for PFL).
\begin{proposition}\label{prop:federated}
We have
\BEQe
    \sigma_{\cG,2}(\sE_{\cD}|\sON{FL})^2 \leq \inf_{q\in\R^m} d_\cG\big(\cD,\cD_q\big)^2 + \sum_{i=1}^m \frac{q_i^2\SV{\cD_i}{\cG}}{n_i}\,,
\EEQe
where $\cD_q = \sum_{i=1}^m q_i\cD_i$.
\end{proposition}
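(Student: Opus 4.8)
The plan is to exhibit, for each weight vector $q\in\R^m$, one explicit estimator and bound its error by a bias--variance decomposition; this is the multi-agent version of the argument behind \Proposition{TLrad} (which is the case $m=1$, $q_1=1$, $\cD_1=\cD'$). Fix $q\in\R^m$ and let $\varphi_q$ send an observation $(o^i_j)_{i\in\set{1,m},\,j\in\set{1,n_i}}$ to the weighted empirical mean $\sum_{i=1}^m q_i\,\frac1{n_i}\sum_{j=1}^{n_i}o^i_j$. This map is linear into $\R^d$, hence measurable, so it is admissible in the infimum defining $\sigma_{\cG,2}(\sE_\cD|\sON{FL})$. For $g\in\cG$ it produces, through the oracle, the random vector $\widehat g_q := \sum_{i=1}^m q_i\bar g_i$ with $\bar g_i := \frac1{n_i}\sum_{j=1}^{n_i} g(\xi^i_j)$; since $\xi^i_j\sim\cD_i$, its expectation is $\sum_i q_i\sE_{\cD_i}(g) = \sE_{\cD_q}(g)$, where $\sE_{\cD_q} := \sum_i q_i\sE_{\cD_i}$ is the (possibly signed) linear functional attached to $\cD_q$ --- note that no positivity or normalization of $q$ is used, $\cD_q$ intervening only through this functional.

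Next I would write, for each fixed $g$, the bias--variance identity $\Exp{\|\sE_\cD(g)-\widehat g_q\|^2} = \|\sE_\cD(g)-\sE_{\cD_q}(g)\|^2 + \Exp{\|\widehat g_q-\sE_{\cD_q}(g)\|^2}$, the cross term vanishing because $\Exp{\widehat g_q-\sE_{\cD_q}(g)}=0$. The bias term is at most $d_\cG(\cD,\cD_q)^2$ by definition of the IPM. For the variance term, I would use that the families $\{\xi^i_j\}_j$ are mutually independent across $i$ and that each $\bar g_i-\sE_{\cD_i}(g)$ is centered, so the $m$ vectors $q_i(\bar g_i-\sE_{\cD_i}(g))$ are pairwise uncorrelated; hence $\Exp{\|\widehat g_q-\sE_{\cD_q}(g)\|^2} = \sum_i q_i^2\,\Exp{\|\bar g_i-\sE_{\cD_i}(g)\|^2}$, and the standard i.i.d.\ averaging within agent $i$ gives $\Exp{\|\bar g_i-\sE_{\cD_i}(g)\|^2} = \var_{\cD_i}(g)/n_i$ with $\var_{\cD_i}(g) := \ExpUnder{\xi\sim\cD_i}{\|g(\xi)-\sE_{\cD_i}(g)\|^2}$.

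Finally I would take the supremum over $g\in\cG$ on both sides: since $\|\sE_\cD(g)-\sE_{\cD_q}(g)\|\le d_\cG(\cD,\cD_q)$ and $\var_{\cD_i}(g)\le\SV{\cD_i}{\cG}$ for every $g\in\cG$, the right-hand side is bounded by $d_\cG(\cD,\cD_q)^2+\sum_{i=1}^m q_i^2\SV{\cD_i}{\cG}/n_i$, which no longer depends on $g$. Because $\|\cdot\|_{\cG,2}$ places the supremum over $\cG$ outside the expectation over the seed, this is a plain supremum of reals and the lattice/measurability subtlety attached to $\|\cdot\|_{2,\cG}$ does not arise. As $\varphi_q$ is one admissible choice, $\sigma_{\cG,2}(\sE_\cD|\sON{FL})^2\le d_\cG(\cD,\cD_q)^2+\sum_i q_i^2\SV{\cD_i}{\cG}/n_i$, and taking the infimum over $q\in\R^m$ yields the statement. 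I do not expect a substantive obstacle: the whole proof is the same bias--variance bookkeeping as in \Proposition{upper_SL} and \Proposition{TLrad}, the only new ingredient being the cross-agent independence that decouples the variance into a weighted sum of per-agent terms.
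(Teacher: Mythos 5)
Your proposal is correct and follows essentially the same route as the paper's proof: the weighted empirical mean $\varphi_q$, the bias--variance decomposition with the cross term vanishing, the IPM bound on the bias, the cross-agent independence decoupling the variance into $\sum_i q_i^2\var_{\cD_i}(g)/n_i$, and finally the supremum over $g$ followed by the infimum over $q$. No gaps.
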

Intuitively, this bound allows to trade bias on the target distribution (first term) with variance of the local gradient (second term).
Note that $\cD_q$ may not be a probability measure, as the weights $q_i$ are not necessarily positive and summing to $1$.

\subsection{Robust learning}
We now consider a setting in which a fraction $\eta$ of the data points may be arbitrarily corrupted \citep{https://doi.org/10.48550/arxiv.1803.03241}. To simplify the analysis, we will assume that these outliers are drawn according to an unknown (potentially very bad) distribution $\cD_o$. The oracle is thus defined as
\BEQ
\sON{RL}(g,z) = (g(\xi_1'),\dots,g(\xi_n'))\,,
\EEQ
where $\xi_i'\sim (1-\eta)\cD + \eta\cD_o$ are \iid random variables.
Note that this setting can be considered as a particular case of transfer learning. However, our objective is to obtain bounds that do not depend on the outlier distribution $\cD_o$, and we thus focus on distant outliers such that $\dN{TV}\big(\cD,\cD_o\big) = 1$ (highest possible value for the total variation).
First, the bounded variation setting is here very simple, as the ouliers cannot perturb the estimation to a large degree. In such a case, applying \Proposition{TLrad} where $\cD'$ is the corrupted training dataset gives the upper bound:
\BEQ
\sigma_{\cGN{Bnd},2}(\sE_\cD|\sON{RL})^2 = \Theta\left(B^2 \left(\eta^2 + \frac{1}{n}\right)\right)\,.
\EEQ
In the more challenging (and realistic) case of Lipschitz gradients w.r.t. the data points, the outliers can reach very large gradient values and thus completely break the average. To avoid this issue, we can use the robust mean estimation algorithm in \citet[Algorithm 1]{SteinhardtCV18} as estimator of the gradient of the population risk. This gives the following upper bound: 
\BEQ
\sigma_{\cGN{Lip},2}(\sE_\cD|\sON{RL})^2 \leq c B^2 \var(\xi) \left(\eta + \frac{1}{n}\right)\,,
\EEQ
where $c$ is a universal constant, $\eta\leq 1/4$ is the fraction of outliers, $n$ is the total number of samples and $\var(\xi)$ is the variance of the true data distribution (without the outliers).

\subsection{Learning from fixed data-points}
In supervised learning, the \iid assumption on the training dataset is key to obtain fast convergence \wrt the number of samples. However, training data-points are sometimes imposed and predetermined, for example following a pattern such as once every day or year for temporal data, or on a 2d grid for geophysical data (\eg weather forecasts).
In such a case, the minimax excess risk will depend on the distance between this training data and the target distribution.
We thus consider the oracle defined as
\BEQ
\sON{FD}(g,z) = (g(\xi_1'),\dots,g(\xi_n'))\,,
\EEQ
where $(\xi_1',\dots,\xi_n')$ are fixed prior to the optimization.
As the oracle is deterministic, \Corollary{deterministic} allows to obtain the exact value of the minimax excess risk.

\begin{proposition}\label{prop:fixed_learn}
We have
\BEQ
\mmratesc{\cGN{Bnd},\sON{FD}} = \frac{2B^2}{\mu}\left(1 - \ProbUnder{\cD}{\{\xi_i'\}_{i\set{1,n}}}\right)^2
\EEQ
and
\BEQ
\mmratesc{\cGN{Lip},\sON{FD}} = \frac{B^2}{2\mu}\Exp{\min_i\|\xi - \xi_i'\|}^2\,.
\EEQ
\end{proposition}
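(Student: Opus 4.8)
The plan is to invoke \Corollary{deterministic} (both function classes and the oracle $\sON{FD}$ satisfy \Assumption{O}): since $\sON{FD}(g,z) = (g(\xi_1'),\dots,g(\xi_n'))$ does not depend on $z$, that corollary reduces the two identities to computing the best approximation error $\sigma_\cG(\sE_\cD|\sON{FD}) = \inf_{\varphi}\sup_{g\in\cG}\NRM{\sE_\cD(g) - \varphi\big(g(\xi_1'),\dots,g(\xi_n')\big)}$ for $\cG=\cGN{Bnd}$ and $\cG=\cGN{Lip}$; once these are shown to equal $2B\big(1-\ProbUnder{\cD}{\{\xi_i'\}_{i\in\set{1,n}}}\big)$ and $B\,\ExpUnder{\xi\sim\cD}{\min_i\NRM{\xi-\xi_i'}}$ respectively, the stated values follow by squaring and dividing by $2\mu$. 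In each case I would prove the upper bound by exhibiting a single explicit estimator $\varphi$, and the matching lower bound with the two-point argument of \Proposition{lecam}: because the oracle is deterministic, any two functions $g,g'\in\cG$ that coincide on $\{\xi_1',\dots,\xi_n'\}$ produce exactly the same observation, so Le Cam's distance between the induced laws is $0$ and $\sigma_\cG(\sE_\cD|\sON{FD})^2 \ge \tfrac14\NRM{\sE_\cD(g)-\sE_\cD(g')}^2$.

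\textbf{Bounded variation.} Write $S=\{\xi_1',\dots,\xi_n'\}$ and $p=\ProbUnder{\cD}{S}$, and split $\sE_\cD(g) = \sum_{\xi\in S}\cD(\{\xi\})\,g(\xi) + \int_{\Xi\setminus S} g\,d\cD$. The first term is recovered exactly from the observation $o=(g(\xi_1'),\dots,g(\xi_n'))$, so the only uncertainty lies in the integral over the unobserved $(1-p)$-mass outside $S$. For $g\in\cGN{Bnd}$ with center $c_g$ one has $\NRM{g(\xi)-c_g}\le B$ for all $\xi$, hence $\NRM{\int_{\Xi\setminus S}g\,d\cD - (1-p)c_g}\le (1-p)B$, and also $\NRM{o_1-c_g}\le B$; consequently the estimator that adds the guess $(1-p)\,o_1$ to the observable term $\sum_{\xi\in S}\cD(\{\xi\})\,g(\xi)$ is within $2B(1-p)$ of $\sE_\cD(g)$, so $\sigma_{\cGN{Bnd}}(\sE_\cD|\sON{FD})\le 2B(1-p)$. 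For the lower bound I would use, for a unit vector $u$ and any $c\in\R^d$, the pair $g_\pm = c\,\one_S + (c\pm 2Bu)\,\one_{\Xi\setminus S}$: both belong to $\cGN{Bnd}$ (with center $c\pm Bu$), both equal $c$ on $S$, and $\NRM{\sE_\cD(g_+)-\sE_\cD(g_-)} = 4B(1-p)$, so \Proposition{lecam} gives $\sigma_{\cGN{Bnd}}(\sE_\cD|\sON{FD})\ge 2B(1-p)$.

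\textbf{Lipschitz.} Here I would use a nearest-point estimator: let $A_1,\dots,A_n$ be a measurable partition of $\Xi$ with $A_i$ contained in the set of points whose nearest fixed point is $\xi_i'$, and set $\varphi(o)=\sum_i\cD(A_i)\,o_i$. Then, using $\NRM{\xi-\xi_i'}=\min_j\NRM{\xi-\xi_j'}$ on $A_i$ together with the Lipschitz property of $g$,
\BEAS
\NRM{\sE_\cD(g)-\varphi(o)} &\le& \sum_i\int_{A_i}\NRM{g(\xi)-g(\xi_i')}\,d\cD(\xi)\\
&\le& B\sum_i\int_{A_i}\NRM{\xi-\xi_i'}\,d\cD(\xi) \;=\; B\,\ExpUnder{\xi\sim\cD}{\min_i\NRM{\xi-\xi_i'}}\,,
\EEAS
giving $\sigma_{\cGN{Lip}}(\sE_\cD|\sON{FD})\le B\,\ExpUnder{\xi\sim\cD}{\min_i\NRM{\xi-\xi_i'}}$. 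For the matching lower bound, take $g_\pm(\xi) = \pm B\big(\min_i\NRM{\xi-\xi_i'}\big)u$ for a unit vector $u$: since $\xi\mapsto\min_i\NRM{\xi-\xi_i'}$ is $1$-Lipschitz, $g_\pm\in\cGN{Lip}$, it vanishes on $S$, and $\NRM{\sE_\cD(g_+)-\sE_\cD(g_-)} = 2B\,\ExpUnder{\xi\sim\cD}{\min_i\NRM{\xi-\xi_i'}}$; \Proposition{lecam} then closes the gap, and \Corollary{deterministic} concludes.

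I expect the only delicate point to be pinning down the right constant in the $\cGN{Bnd}$ case: the worst admissible behaviour on the unobserved region is not a single displacement of size $B$ but an \emph{antipodal} pair at mutual distance $2B$ around a common admissible center, which is exactly what makes that region contribute $2B(1-p)$ and makes the two-point lower bound meet the estimator's upper bound. The $\cGN{Lip}$ case and all remaining estimates are routine triangle-inequality computations; measurability of both proposed estimators is immediate since each is affine in the observation $o$; and, in contrast with the supervised-learning setting, no regularity assumption on $\cD$ is needed, because the hard instances are built directly around the fixed points $\xi_i'$.
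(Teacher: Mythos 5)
Your proposal is correct and follows essentially the same route as the paper: reduce to the best approximation error via \Corollary{deterministic}, exhibit the probability-weighted estimators for the upper bounds, and apply the two-point method of \Proposition{lecam} with pairs of functions that agree on $\{\xi_1',\dots,\xi_n'\}$ (the paper uses exactly your $g_\pm$ instances with $c=0$, and your antipodal-pair observation for $\cGN{Bnd}$ is precisely why the constant is $2B(1-p)$ there as well). The only cosmetic difference is that the paper's $\cGN{Bnd}$ estimator guesses the unobserved mass with the average $\frac{1}{n}\sum_i o_i$ rather than with $o_1$, which yields the same $2B(1-p)$ bound.
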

As $\cGN{Bnd}$ does not assume any local regularity \wrt data, knowing the value of the gradient on the data points does not provide any information on the gradient on the rest of the distribution. However, the Lipschitz assumption allows for smaller minimax excess risk that tends to $0$ as the number of samples $n$ tends to $+\infty$.

\section{CONCLUSION}

In this paper, we introduced a novel unified framework for the minimax excess risk control of a large panel of statistical learning problems. We focused on first-order optimization methods with data-dependent gradient oracles and showed, thanks to the new notion of \emph{best approximation error}, that what matters is the ability of the given gradient oracle to approximate the true gradient of the population risk. Thanks to our general framework that encompasses numerous applications, we showed that this notion leads to sharp minimax excess risk bounds in most considered cases. 

Our work focuses on specific regularity assumptions and applications due to lack of space and for clarity of exposition; we believe that our promising results and framework extend to other classical regularity assumption sets, and to other applications mentioned in our paper, which we leave for future work.

\subsubsection*{Acknowledgements}
This work was supported by the French government managed by the Agence Nationale de la Recherche (ANR) through France 2030 program with the reference ANR-23-PEIA-005 (REDEEM project). It was also funded in part by the Groupe La Poste, sponsor of the Inria Foundation, in the framework of the FedMalin Inria Challenge. Laurent Massoulié was supported by the French government under management of Agence Nationale de la Recherche as part of the “Investissements d’avenir” program, reference ANR19-P3IA-0001 (PRAIRIE 3IA Institute).

\bibliographystyle{apalike}
\bibliography{bibliography}

\newpage
\section*{Checklist}

 \begin{enumerate}

 \item For all models and algorithms presented, check if you include:
 \begin{enumerate}
   \item A clear description of the mathematical setting, assumptions, algorithm, and/or model. [Yes]
   \item An analysis of the properties and complexity (time, space, sample size) of any algorithm. [Yes]
   \item (Optional) Anonymized source code, with specification of all dependencies, including external libraries. [Not Applicable]
 \end{enumerate}

 \item For any theoretical claim, check if you include:
 \begin{enumerate}
   \item Statements of the full set of assumptions of all theoretical results. [Yes]
   \item Complete proofs of all theoretical results. [Yes]
   \item Clear explanations of any assumptions. [Yes]     
 \end{enumerate}

 \item For all figures and tables that present empirical results, check if you include:
 \begin{enumerate}
   \item The code, data, and instructions needed to reproduce the main experimental results (either in the supplemental material or as a URL). [Not Applicable]
   \item All the training details (e.g., data splits, hyperparameters, how they were chosen). [Not Applicable]
         \item A clear definition of the specific measure or statistics and error bars (e.g., with respect to the random seed after running experiments multiple times). [Not Applicable]
         \item A description of the computing infrastructure used. (e.g., type of GPUs, internal cluster, or cloud provider). [Not Applicable]
 \end{enumerate}

 \item If you are using existing assets (e.g., code, data, models) or curating/releasing new assets, check if you include:
 \begin{enumerate}
   \item Citations of the creator If your work uses existing assets. [Not Applicable]
   \item The license information of the assets, if applicable. [Not Applicable]
   \item New assets either in the supplemental material or as a URL, if applicable. [Not Applicable]
   \item Information about consent from data providers/curators. [Not Applicable]
   \item Discussion of sensible content if applicable, e.g., personally identifiable information or offensive content. [Not Applicable]
 \end{enumerate}

 \item If you used crowdsourcing or conducted research with human subjects, check if you include:
 \begin{enumerate}
   \item The full text of instructions given to participants and screenshots. [Not Applicable]
   \item Descriptions of potential participant risks, with links to Institutional Review Board (IRB) approvals if applicable. [Not Applicable]
   \item The estimated hourly wage paid to participants and the total amount spent on participant compensation. [Not Applicable]
 \end{enumerate}

 \end{enumerate}

\newpage
\onecolumn
\appendix

\section{Link with the Polyak-\Loja{} condition}\label{appendix:PL}

Recall the definition of the Polyak-\Loja{} condition, which is weaker than strong convexity and can be satisfied by non-convex functions. 

\begin{definition}(Polyak-\Loja{})
    Let $f:\R^d\rightarrow\R$ be differentiable, and $\mu>0$. We say that $f$ is $\mu$-Polyak-\Loja{} ($\mu$-PL for short) if it is bounded from below, and if for all $x\in\R^d$
    \BEQ
    f(x) - \inf f \leq \frac{1}{2\mu}\|\nabla f(x)\|^2\; .
    \EEQ
\end{definition}

Importantly, $\mu$-strongly convex functions are also $\mu$-PL and we therefore have $\cF_{\mbox{\tiny sc}}(\cG,\cD, \mu, L) \subset \cF_{\mbox{\tiny pl}}(\cG,\cD, \mu, L)$ (abbreviated to $\cF_{\mbox{\tiny pl}}$ below), the set of $\mu$-PL and $L$-smooth objective functions $\cL(x) = \ExpUnder{\xi\sim\cD}{\ell(x,\xi)}$ such that $\forall x\in\R^d$, $\nabla\ell_x \in \cG$. As an immediat consequence, we also have the relation $\mmratesc{\cG,\sO} \leq \mmratepl{\cG,\sO}$ and all the lower bounds presented for $\mmratesc{\cG,\sO}$ in the main paper are also immediately valid for $\mmratepl{\cG,\sO}$. 

To make all our results valid for both set of functions $\cF_{\mbox{\tiny sc}}$ and $\cF_{\mbox{\tiny pl}}$, all the upper bounds derived in the paper are actually proved, in the following sections, for $\mu$-PL objective functions. 

\section{Le Cam's distance between probability distributions}\label{appendix:lecam}

First, we recall the definition of two standard divergences between probability distributions. Let $P,Q$ be two probability distributions such that $dP(dx) = p(x)d\mu(x)$ and $dQ(x) = q(x)d\mu(x)$ for some common dominating measure $\mu$.
\BIT
\item \textbf{$f$-divergences:} Let $f:\R_+\mapsto\R\cup\{+\infty\}$ be a convex function such that $f(1) = 0$ and $\lim_{t\to 0^+} f(t) = f(0)$. The $f$-divergence between $P$ and $Q$ is defined as
\BEQ
D_f(P,Q) = \int f\left( \frac{p(x)}{q(x)} \right) q(x) d\mu(x)\,.
\EEQ
\item \textbf{Total variation:} The total variation distance is defined as
\BEQ
\dN{TV}(P,Q) = \frac{1}{2}\int |p(x) - q(x)| d\mu(x)\,.
\EEQ
\item \textbf{Kullback-Leibler:} The Kullback-Leibler divergence is defined as
\BEQ
\dN{KL}(P,Q) = \int \ln\left( \frac{p(x)}{q(x)}\right)p(x) d\mu(x)\,.
\EEQ
\EIT

Note that both $\dN{TV}$ and $\dN{KL}$ are $f$-divergences with, respectively, $f(t) = |t-1|/2$ and $f(t) = t\ln(t)$.  Below, we recall a useful property of $f$-divergences that is going to be used later.

\begin{property} \label{pty:divergence}$ D_f = D_h$ if and only if $f(t) = h(t) + c(t-1)$ for some constant $c\in\R$.
\end{property}

We now provide a definition for Le Cam's distance.

\begin{definition}
Let $P,Q$ be two probability distributions such that $dP(x) = p(x)\mu(x)$ and $dQ(x) = q(x)d\mu(x)$ for some common dominating measure $\mu$. We denote as \emph{Le Cam's distance} between $P$ and $Q$ the quantity
\BEQ
\dN{LC}(P,Q) = \frac{1}{2}\int \frac{(p(x) - q(x))^2}{p(x) + q(x)} d\mu(x)\,.
\EEQ
\end{definition}

Another definition for $\dN{LC}$ is the $f$-divergence obtained with the (convex) function $f(t) = \frac{(1-t)^2}{2(1+t)}$, or equivalently thanks to Property \ref{pty:divergence}, with the function $h(t)=f(t)-\frac{1}{2}(t-1) = \frac{1-t}{1+t}$.

By definition, $\dN{LC}$ is symmetric, and we have the following relationship between Le Cam's distance and other standard $f$-divergences, thanks again to Property \ref{pty:divergence}.

\begin{lemma}
For any $P,Q$, we have
\BEQ
\dN{LC}(P,Q)\leq \dN{TV}(P,Q) \quad\mbox{ and }\quad \dN{LC}(P,Q)\leq \dN{KL}(P,Q)\,.
\EEQ
\end{lemma}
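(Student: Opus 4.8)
The plan is to reduce both inequalities to pointwise comparisons between the generators of the three $f$-divergences involved, using the freedom---granted by Property~\ref{pty:divergence}---to modify a generator by a linear term $c(t-1)$ without changing its $f$-divergence when evaluated at a pair of \emph{probability} measures (since then $\int(p-q)\,d\mu = 0$). Recall that $\dN{TV}$ and $\dN{KL}$ are the $f$-divergences of $f_{\mathrm{TV}}(t) = |t-1|/2$ and $f_{\mathrm{KL}}(t) = t\ln t$, while $\dN{LC}$ is the $f$-divergence of $f_{\mathrm{LC}}(t) = \frac{(1-t)^2}{2(1+t)}$ (as recalled above). By Property~\ref{pty:divergence} we may replace $f_{\mathrm{KL}}$ by $\tilde f_{\mathrm{KL}}(t) = f_{\mathrm{KL}}(t) - (t-1) = t\ln t - t + 1$, which is nonnegative and vanishes at $t=1$. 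It then suffices to prove the two pointwise bounds $f_{\mathrm{LC}}(t) \le f_{\mathrm{TV}}(t)$ for $t\ge 0$ and $f_{\mathrm{LC}}(t) \le \tilde f_{\mathrm{KL}}(t)$ for $t > 0$, since integrating $g(p/q)$ against $q\,d\mu \ge 0$ is monotone in $g$ and $D_{\tilde f_{\mathrm{KL}}} = D_{f_{\mathrm{KL}}} = \dN{KL}$. (Points of the dominating measure where $q = 0$ are treated by the usual conventions; there the $\dN{LC}$ integrand is finite while the $\dN{KL}$ integrand is $+\infty$, so nothing goes wrong.)

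First I would dispatch the total variation bound, which is elementary: for all $a,b \ge 0$ one has $|a-b| \le a+b$, hence $\frac{(a-b)^2}{a+b} \le |a-b|$ (equal to $0$ when $a=b=0$), and integrating over $\mu$ gives $\dN{LC}(P,Q) = \frac12\int \frac{(p-q)^2}{p+q}\,d\mu \le \frac12\int |p-q|\,d\mu = \dN{TV}(P,Q)$. Equivalently, $f_{\mathrm{LC}}(t) \le f_{\mathrm{TV}}(t)$ reduces, after dividing by $|1-t|$ for $t \neq 1$ (the case $t=1$ being trivial), to $|1-t| \le 1+t$, valid for every $t \ge 0$.

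The substantive step is the Kullback--Leibler bound. Set $\psi(t) = \tilde f_{\mathrm{KL}}(t) - f_{\mathrm{LC}}(t) = t\ln t - t + 1 - \frac{(1-t)^2}{2(1+t)}$ on $(0,\infty)$; I want $\psi \ge 0$. Using $\frac{d}{dt}\frac{(1-t)^2}{2(1+t)} = \frac{(t-1)(t+3)}{2(t+1)^2} = \frac12 - \frac{2}{(t+1)^2}$, one gets $\psi'(t) = \ln t - \frac12 + \frac{2}{(t+1)^2}$ and $\psi''(t) = \frac1t - \frac{4}{(t+1)^3}$. Now $\psi''(t) \ge 0 \iff (t+1)^3 \ge 4t \iff t^3 + 3t^2 - t + 1 \ge 0$, and indeed $t^3 + 3t^2 - t + 1 = t^3 + (3t^2 - t + 1) > 0$ for $t > 0$ because $3t^2 - t + 1$ has negative discriminant $1 - 12 < 0$ and positive leading coefficient. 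Thus $\psi$ is convex on $(0,\infty)$ with $\psi(1) = 0$ and $\psi'(1) = 0 - \frac12 + \frac12 = 0$, so it attains its minimum $0$ at $t=1$; hence $f_{\mathrm{LC}} \le \tilde f_{\mathrm{KL}}$, and substituting $t = p/q$ and integrating against $q\,d\mu$ concludes.

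I expect the main obstacle to be the KL inequality: a direct pointwise comparison of $f_{\mathrm{LC}}$ with $t\ln t$ fails for $t < 1$ (there $t\ln t < 0 \le f_{\mathrm{LC}}(t)$), so one must first re-center the KL generator via Property~\ref{pty:divergence} and only then run the convexity argument, which ultimately rests on the elementary polynomial inequality $(t+1)^3 \ge 4t$; everything else is bookkeeping.
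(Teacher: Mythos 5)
Your proof is correct and follows essentially the same strategy as the paper's: reduce each inequality to a pointwise comparison of $f$-divergence generators, using Property~\ref{pty:divergence} to shift generators by a multiple of $(t-1)$. The only difference is in the KL step, where you compare $\frac{(1-t)^2}{2(1+t)}$ directly with the recentered forward-KL generator $t\ln t - t + 1$ (via a correct convexity argument), whereas the paper compares the LC generator $\frac{1-t}{1+t}$ with $-\ln t + \frac{t-1}{2}$, obtaining a bound by the reverse KL and then invoking the symmetry of $\dN{LC}$; your route dispenses with that symmetry step but is otherwise equivalent.
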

\begin{proof}
A simple functional analysis gives $\frac{1-t}{1+t} + \frac{t-1}{2} \leq \frac{1}{2}|t-1|$  for $t\geq 0$, thus directly implying the first inequality. For the second, we have $\frac{1-t}{1+t} \leq -\ln(t) + \frac{t-1}{2}$ for $t\geq 0$ and, as the $f$-divergence with $f(t) = -\ln(t)$ corresponds to the reverse KL and $\dN{LC}$ is symmetric, we have $\dN{LC}(P,Q) = \dN{LC}(Q,P) \leq \dN{KL}(P,Q)$.
\end{proof}

The link with the Kullback-Leibler divergence will be useful to derive proofs in the \iid oracle regime.
We now provide a proof of Le Cam's two point method adapted to our setting.

\begin{proof}[\textbf{Proof of \Proposition{lecam}}]
Let $g_1,g_2\in\cG$ be two functions in the functions class. Then,
\BEQ
\BA{lll}
\lb^2 &=& \inf_{\varphi\in\bM(\cY,\cZ)} \sup_{g\in\cG} \ExpUnder{z}{\|\sE_\cD(g) - \varphi\circ \sO(g,z)\|^2}\\
&\geq& \inf_{\varphi\in\bM(\cY,\cZ)} \sup_{g\in\{g_1,g_2\}} \ExpUnder{z}{\|\sE_\cD(g) - \varphi\circ \sO(g,z)\|^2}\\
&\geq& \inf_{\varphi\in\bM(\cY,\cZ)} \ExpUnder{G,z}{\|\sE_\cD(G) - \varphi\circ \sO(G,z)\|^2}\,,\\
\EA
\EEQ
where $G = B g_1 + (1-B)g_2$ and $B\sim\cB(1/2)$ is a Bernoulli random variable of parameter $1/2$. The infimum over measurable functions $\varphi$ is now attained for the conditional expectation $\Exp{\sE_\cD(G)|\sO(G,z)}$, and a simple calculation gives
\BEQ
\Exp{\sE_\cD(G)|\sO(G,z)} = \sE_\cD(g_1)\frac{p_1(O(G,z))}{p_1(O(G,z)) + p_2(O(G,z))} + \sE_\cD(g_2) \frac{p_2(O(G,z))}{p_1(O(G,z)) + p_2(O(G,z))}\,,
\EEQ
where $p_1,p_2$ are the Radon-Nykodym densities of, respectively, $\sO(g_1,z)$ and $\sO(g_2,z)$ \wrt to a common dominating measure $\mu$. Combining the two previous equaitons, we get
\BEQ
\BA{lll}
\lb^2 &\geq& \ExpUnder{G,z}{\|\sE_\cD(G) - \frac{\sE_\cD(g_1)\, p_1(O(G,z)) + \sE_\cD(g_2)\, p_2(O(G,z))}{p_1(O(G,z)) + p_2(O(G,z))}\|^2}\\
&=& \frac{1}{2}\ExpUnder{z}{\|(\sE_\cD(g_1) - \sE_\cD(g_2))\frac{p_2(O(g_1,z))}{p_1(O(g_1,z)) + p_2(O(g_1,z))}\|^2 + \|(\sE_\cD(g_1) - \sE_\cD(g_2))\frac{p_1(O(g_2,z))}{p_1(O(g_2,z)) + p_2(O(g_2,z))}\|^2}\\
&=& \frac{\|\sE_\cD(g_1) - \sE_\cD(g_2)\|^2}{2} \left(\int \frac{p_2(o)^2}{(p_1(o) + p_2(o))^2}p_1(o) d\mu(o) + \int \frac{p_1(o)^2}{(p_1(o) + p_2(o))^2}p_2(o) d\mu(o)\right)\\
&=& \frac{\|\sE_\cD(g_1) - \sE_\cD(g_2)\|^2}{2} \int \frac{p_1(o)p_2(o)^2 + p_2(o)p_1(o)^2}{(p_1(o) + p_2(o))^2} d\mu(o)\\
&=& \frac{\|\sE_\cD(g_1) - \sE_\cD(g_2)\|^2}{2} \int \frac{p_1(o)p_2(o)}{p_1(o) + p_2(o)} d\mu(o)\\
&=& \frac{\|\sE_\cD(g_1) - \sE_\cD(g_2)\|^2}{4} \left(1 - \frac{1}{2}\int \frac{(p_1(o)-p_2(o))^2}{p_1(o) + p_2(o)} d\mu(o)\right)\\
&=& \frac{\|\sE_\cD(g_1) - \sE_\cD(g_2)\|^2}{4} \Big(1 - \dN{LC}(O(g_1,z),O(g_2,z))\Big)
\EA
\EEQ
\end{proof}

\section{Comparison between distances on gradients and function values}\label{appendix:dH_vs_dG}
We now discuss the differences between $d_\mathcal{H}(\mathcal{D},\mathcal{D}')$ where $\mathcal{H} \supset \{\xi\mapsto \ell(x,\xi)~:~x\in\mathbb{R}^d\}$ contains the values of the loss, and $d_\mathcal{G}(\mathcal{D},\mathcal{D}')$, where $\mathcal{G} \supset \{\xi\mapsto \nabla_x \ell(x,\xi)~:~x\in\mathbb{R}^d\}$ contains gradients of the loss.
First, note that discrepancies in loss value are usually used to control the generalisation error on the iterates, as $|\mathbb{E}[\ell(x_{\sA[\sO]}^{(t)},\xi)] - \mathbb{E}[\ell(x_{\sA[\sO]}^{(t)},\xi')]| \leq d_\mathcal{H}(\mathcal{D},\mathcal{D}')$ as long as $\xi\mapsto\ell(x_{\sA[\sO]}^{(t)},\xi)\in\mathcal{H}$. Unfortunately, these discrepancies are infinite in our setting, as we now show:
If $\mathcal{H}_1 = \{\xi\mapsto \ell(x,\xi)~:~x\in\mathbb{R}^d \mbox{ and } \mathbb{E}[\ell(\cdot,\xi)]\in\mathcal{F}_{sc}(\mathcal{G},\mathcal{D},\mu,L)\}$ is the set of strongly-convex and smooth loss functions considered in this paper, then choosing $\ell^g(x,\xi) = \frac{\mu}{2}\|x\|^2 + \langle g(\xi),x\rangle$ gives
\BEQ
d_{\mathcal{H}_1}(\mathcal{D},\mathcal{D}') \geq \sup_{x\in\mathbb{R}^d}|\mathbb{E}[\ell^g(x,\xi) - \ell^g(x,\xi')]| = \sup_{x\in\mathbb{R}^d} |\langle\mathbb{E}[g(\xi)] - \mathbb{E}[g(\xi')],x\rangle| = +\infty\,,
\EEQ
as soon as $\mathbb{E}[g(\xi)] \neq \mathbb{E}[g(\xi')]$ (i.e. $d_{\mathcal{G}}(\mathcal{D},\mathcal{D}') > 0$).
The discrepancy in function value is thus unsuited to the strongly convex and smooth setting without additional assumptions on the domain of $x$ or boundedness of the loss.
However, one could argue that the difference in loss is only necessary on the algorithm's output $x_{\sA[\sO]}$ (or equivalently the algorithm's iterates $x_{\sA[\sO]}^{(t)}$) instead of the whole space. Unfortunately, this quantity is also infinite, as, $\forall g\in\mathcal{G}$ and $\forall c\in\mathbb{R}^d$, the function $\ell^{g,c}(x,\xi) = \frac{\mu}{2}\|x\|^2 + \langle g(\xi),x - c\rangle$ is $\mu$-strongly convex and $\mu$-smooth, and its gradient belongs to $\mathcal{G}$ (by translation invariance of $\mathcal{G}$, see Assumption 1). Note that we have only added a constant term \wrt $x$ which is thus invisible to algorithms that rely on the gradient. Thus, the constant $c$ has no impact on the algorithm's output $x_{\sA[\sO]}$, and, for $\mathcal{H}_2 = \{\xi\mapsto \ell(x_{\sA[\sO]},\xi)~:~\mathbb{E}[\ell(\cdot,\xi)]\in\mathcal{F}_{sc}(\mathcal{G},\mathcal{D},\mu,L)\}$, we have, as soon as $\mathbb{E}[g(\xi)] \neq \mathbb{E}[g(\xi')]$,
\BEQ
d_{\mathcal{H}_2}(\mathcal{D},\mathcal{D}') \geq \sup_{c\in\mathbb{R}^d}|\mathbb{E}[\ell^{g,c}(xx_{\sA[\sO]},\xi) - \ell^{g,c}(x_{\sA[\sO]},\xi')]| = \sup_{c\in\mathbb{R}^d} |\langle\mathbb{E}[g(\xi)] - \mathbb{E}[g(\xi')],x_{\sA[\sO]} - c\rangle| = +\infty\,.
\EEQ
The same result holds if one replaces $x_{\sA[\sO]}$ by $\arg\min_x \mathbb{E}[\ell(x,\xi')]$, $\arg\min_x \mathbb{E}[\ell(x,\xi)]$, or any set independent of $c$.

\section{Proofs of \Sec{gen_bounds}}

\subsection{Proof of our lower bound}

We start by proving our first result (\Proposition{lb_GL}), the lower bound on the minimax excess risk that exhibits $\lb$ as limiting factor. This result is a direct consequence of the following lemma:
\begin{lemma}\label{lem:lb_GEM}
Let $\varepsilon> 0$, $\cG\subset\bF(\Xi,\R^d)$ be a function space and $\sO$ a data-dependent oracle verifying \Assumption{O}. Then, for any optimization algorithm $\sA\in\Arand$, there exists an objective function $\cL\in\cF_{\mbox{\tiny sc}}(\cG,\cD, \mu, L)$ such that
\BEQe
\Exp{\cL(x_{\sA[\sO]}) - \inf_{x\in\R^d} \cL(x)} \geq \frac{\lb^2}{2\mu} - \varepsilon\,.
\EEQe
\end{lemma}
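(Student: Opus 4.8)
The plan is to reduce the optimization problem to an estimation problem by constructing a family of quadratic objective functions indexed by functions $g\in\cG$, chosen so that (i) the gradient observations through the oracle are essentially uninformative about which $g$ was selected beyond what an estimator of $\sE_\cD(g)$ could extract, and (ii) minimizing the objective forces the algorithm's output to be a good estimator of $\frac{1}{\mu}\sE_\cD(g)$. Concretely, for $g\in\cG$ I would set $\ell^g(x,\xi) = \frac{\mu}{2}\|x\|^2 + \langle g(\xi) - \sE_\cD(g), x\rangle + \langle \sE_\cD(g), x\rangle$; more simply, take $\cL_g(x) = \E_{\xi\sim\cD}[\frac{\mu}{2}\|x\|^2 + \langle g(\xi), x\rangle] = \frac{\mu}{2}\|x\|^2 + \langle \sE_\cD(g), x\rangle$, whose gradient is $\nabla\ell^g_x = \mu x + g \in \cG$ by translation invariance (Assumption~\ref{ass:O}), so $\cL_g\in\cF_{\mbox{\tiny sc}}(\cG,\cD,\mu,L)$ provided $\mu\le L$. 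Its unique minimizer is $x^*_g = -\frac{1}{\mu}\sE_\cD(g)$ and $\cL_g(x) - \cL_g^* = \frac{\mu}{2}\|x - x^*_g\|^2 = \frac{1}{2\mu}\|\mu x + \sE_\cD(g)\|^2$.

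Next I would handle the key subtlety: the observation at iterate $x$ is $\sO(\nabla\ell^g_x, z) = \sO(g + \mu x, z)$, which by Assumption~\ref{ass:O} equals $\varphi(\sO(g,z), \mu x)$ for a fixed measurable $\varphi$. Since the algorithm's iterates $x^{(t)}_{\sA[\sO]}$ are themselves measurable functions of the past observations and the seeds $r, z$, an induction on $t$ shows that the entire trajectory — and in particular the output $x_{\sA[\sO]}$ — is a measurable function of $\sO(g,z)$, $z$ and $r$ alone, say $x_{\sA[\sO]} = \Psi(\sO(g,z), z, r)$, with $\Psi$ \emph{not depending on} $g$. The base case is $x^{(0)} = q^{(0)}(\emptyset, r)$, constant in $g$; the induction step uses that $m^{(t+1)}$ is obtained from $m^{(t)}$ by appending $\varphi(\sO(g,z), \mu x^{(t)})$, and $x^{(t)}$ is already of the desired form. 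This is the step I expect to be the main obstacle, since it requires carefully tracking the recursion and invoking translation invariance at each iteration; the stopping criterion must also be folded in (if the algorithm never stops, the excess risk is $+\infty$, which only helps the lower bound).

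Finally I would take the supremum over $g$ inside and pass to a randomized adversary. Averaging over a worst-case choice, for any $\sA$,
\[
\sup_{\cL\in\cF_{\mbox{\tiny sc}}} \Exp{\cL(x_{\sA[\sO]}) - \cL^*} \;\ge\; \sup_{g\in\cG}\; \E_{z,r}\!\left[\tfrac{1}{2\mu}\|\mu\,\Psi(\sO(g,z),z,r) + \sE_\cD(g)\|^2\right].
\]
Setting $\varphi_{\sA}(o) := -\E_r[\mu\,\Psi(o, z, r)]$ — wait, more carefully: define for each $z$ the candidate estimator $\tilde\varphi(o,z) = -\mu\,\E_r[\Psi(o,z,r)]$, which is a measurable function of $o$ (and $z$), so $-\mu\Psi$ followed by expectation over $r$ yields a map of the form $\varphi\circ\sO$ up to the dependence on $z$; by Jensen (or by noting the infimum over $\varphi\in\bM(\cY,\cZ)$ in \eqref{eq:var_est} is over all measurable maps, and $o\mapsto \sO(g,z)$'s information is exactly what $\Psi$ uses), the right-hand side is at least $\frac{1}{2\mu}\inf_{\varphi\in\bM(\cY,\cZ)}\sup_{g\in\cG}\E_z\|\sE_\cD(g) - \varphi\circ\sO(g,z)\|^2 = \frac{\lb^2}{2\mu}$. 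Strictly speaking one only needs to exhibit, for the given $\sA$, a single $g$ achieving the bound up to $\varepsilon$: since $\lb$ is defined as an infimum, pick $\varphi$ near-optimal, observe $\varphi\circ\sO$ cannot beat the algorithm-induced estimator on every $g$ simultaneously, and conclude there exists $g$ with $\E_z\|\sE_\cD(g) - \mu\,\Psi(\sO(g,z),z,r)\|^2 \ge \lb^2 - 2\mu\varepsilon$ after averaging in $r$. Taking $\cL = \cL_g$ for that $g$ gives the claim; \Proposition{lb_GL} then follows immediately by taking the infimum over $\sA$ and letting $\varepsilon\to 0$.
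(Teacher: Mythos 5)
Your proposal is correct and follows essentially the same route as the paper: the hard quadratic instances $\ell^g(x,\xi)=\frac{\mu}{2}\|x\|^2+\langle g(\xi),x\rangle$, the translation-invariance induction showing the algorithm's output is a measurable function of $(r,\sO(g,z))$, and the Jensen/averaging step over $r$ that reduces the excess risk to the minimax estimation error $\lb^2/2\mu$. The only cosmetic wrinkle is the extra $z$ argument you give $\Psi$; your own induction (base case constant in $g$ and $z$, inductive step appending $\varphi(\sO(g,z),\mu x^{(t)})$) shows the iterates depend on $z$ only through $\sO(g,z)$, so that argument is vestigial and the comparison with estimators of the form $\varphi\circ\sO$ goes through exactly as in the paper.
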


\begin{proof}
For any $g\in\cG$, let
\BEQ\label{eq:lh}
\ell^g(x,\xi) = \frac{\mu}{2}\|x\|^2 + \scprod{x}{g(\xi)}\,.
\EEQ
First, note that $\ell^g$ is $\mu$-strongly convex and $\mu$-smooth \wrt $x$ (and $\mu \leq L$), and $\nabla_x \ell^g(x,\xi) = \mu x + g(\xi) \in \cG$ by stability of $\cG$ by translation.
By \Assumption{O}, there exists a measurable function $\varphi:\cG\times\R^d\to\cO$ such that $\sO(\nabla\ell^g_{x_{\sA[\sO]}^{(t)}}, z) = \varphi(\sO(g, z), \mu x_{\sA[\sO]}^{(t)})$, and 
we now show that the output of the algorithm $x_{\sA[\sO]}$ is a measurable function of $r$ and $\sO(g, z)$.

\begin{lemma}
For any optimization algorithm $\sA$ of \Definition{alg}, there exists a function $\psi_\sA$ such that the ouptut $x_{\sA[\sO]}$ of $\sA$ applied to any objective function $\ell^g$ defined in \Eq{lh} for $g\in\cG$ is
\BEQ
\mu\,x_{\sA[\sO]} \quad=\quad \psi_\sA(r,\sO(g,z))\,.
\EEQ
\end{lemma}

\begin{proof}
First, note that $\mu x_{\sA[\sO]}^{(0)} = \mu q^{(0)}(r)$ is a measurable function of $r$.
By induction over $t\geq 0$, there exists measurable functions $\psi_{x,\sA}^{(t)}:\cR\times\cO\to\R^d$ such that $\mu x_{\sA[\sO]}^{(t)} = \psi_{x,\sA}^{(t)}(r,\sO(g, z))$ and $\psi_{s,\sA}^{(t)}:\cR\times\cO\to\{0,1\}$ such that $s_{\sA[\sO]}^{(t)} = \psi_{s,\sA}^{(t)}(r,\sO(g, z))$. 
Without loss of generality, we assume that $s_{\sA[\sO]}^{(t)} = 1$ only once, as we can replace $s_{\sA[\sO]}^{(t)}$ on all iterations after the first $1$ by $0$.
Thus, we have
\BEQe
\mu\,x_{\sA[\sO]} \quad=\quad \mu \sum_{t=0}^{+\infty} s_{\sA[\sO]}^{(t)} x_{\sA[\sO]}^{(t)}\quad=\quad \sum_{t=0}^{+\infty} \psi_{s,\sA}^{(t)}(r,\sO(g, z))\psi_{x,\sA}^{(t)}(r,\sO(g, z))\,,
\EEQe
that is a measurable function of $r$ and $\sO(g, z)$ as a limit of measurable functions.
\end{proof}

Moreover, we have $\nabla \cL^g(x) = \mu x + \Exp{g(\xi)}$, and $\inf_{x\in\R^d} \cL^g(x) = -\|\Exp{g(\xi)}\|^2/2\mu$.
This gives $\cL^g(x_{\sA[\sO]}) - \inf_{x\in\R^d} \cL^g(x) = \|\mu x_{\sA[\sO]} + \Exp{g(\xi)}\|^2/2\mu$.
Now,
\begin{align*}
\sup_{g\in\cG} \Exp{\cL^g(x_{\sA[\sO]}) - \inf_{x\in\R^d} \cL^g(x)} =& \sup_{g\in\cG} \frac{\Exp{\|-\psi_\sA(r,\sO(g,z)) - \sE_\cD(g)\|^2}}{2\mu}\\
\geq& \frac{\lb^2}{2\mu}\,,
\end{align*}
where the last inequality follows from Jensen's inequality on $r$ and the definition of $\lb$.
\end{proof}

\subsection{General upper-bound and subsequent corollaries}

\begin{lemma}\label{lem:ub_GEM}
Let $\varphi\in\bM(\cO,\R^d)$ and $\cL\in\cF_{\mbox{\tiny pl}}(\cG,\cD, \mu, L)$. Then, the iterates $x_0 = 0$ and $x_{t+1} = x_t - \frac{1}{L} \varphi(o_t)$ where $o_t = \sO(\nabla\ell_{x_t}, z)$ achieve an approximation error
\BEQe
\Exp{\cL(x_t) - \inf_{x\in\R^d} \cL(x)} \leq \Delta \rho^t + \frac{\|\sE_\cD - \varphi\circ\sO\|_{2,\cG}^2}{2\mu}\,,
\EEQe
where $\rho = 1 - \mu/L$ and $\Delta = \cL(x_0) - \inf_{x\in\R^d} \cL(x)$.
\end{lemma}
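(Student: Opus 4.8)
The statement is a standard descent-lemma argument for gradient descent with a biased/noisy gradient oracle, so the plan is to track the one-step progress of the iteration $x_{t+1} = x_t - \frac1L \varphi(o_t)$ and then unroll the recursion. Write $g_x := \nabla\ell_x \in \cG$, so that $\nabla\cL(x) = \sE_\cD(g_x) = \ExpUnder{\xi\sim\cD}{g_x(\xi)}$, and introduce the ``gradient estimation error'' $e_t := \varphi(o_t) - \sE_\cD(g_{x_t})$, where $o_t = \sO(g_{x_t}, z)$. The update is then $x_{t+1} = x_t - \frac1L\big(\nabla\cL(x_t) + e_t\big)$. Denote $\cL^* = \inf_x \cL(x)$ and $\delta_t := \cL(x_t) - \cL^*$.

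First I would apply $L$-smoothness of $\cL$ at $x_t, x_{t+1}$ to get
\BEQe
\cL(x_{t+1}) \leq \cL(x_t) - \frac1L\scprod{\nabla\cL(x_t)}{\nabla\cL(x_t)+e_t} + \frac{1}{2L}\NRM{\nabla\cL(x_t)+e_t}^2\,,
\EEQe
and then simplify the right-hand side. Expanding gives $\cL(x_{t+1}) \leq \cL(x_t) - \frac{1}{2L}\NRM{\nabla\cL(x_t)}^2 + \frac{1}{2L}\NRM{e_t}^2$ (the cross terms cancel exactly: $-\frac1L\scprod{\nabla\cL}{\nabla\cL} - \frac1L\scprod{\nabla\cL}{e_t} + \frac1{2L}\NRM{\nabla\cL}^2 + \frac1L\scprod{\nabla\cL}{e_t} + \frac1{2L}\NRM{e_t}^2$). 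Next, I invoke the $\mu$-PL inequality $\NRM{\nabla\cL(x_t)}^2 \geq 2\mu(\cL(x_t)-\cL^*) = 2\mu\,\delta_t$ (recall from \Appendix{PL} that the upper bounds are proved for the larger class $\cF_{\mbox{\tiny pl}}$), obtaining $\delta_{t+1} \leq (1 - \mu/L)\,\delta_t + \frac{1}{2L}\NRM{e_t}^2 = \rho\,\delta_t + \frac{1}{2L}\NRM{e_t}^2$.

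Then I take expectations over the randomness $z$ and unroll: $\Exp{\delta_t} \leq \rho^t\delta_0 + \frac{1}{2L}\sum_{k=0}^{t-1}\rho^{t-1-k}\Exp{\NRM{e_k}^2}$. The key point is to bound $\Exp{\NRM{e_k}^2}$ uniformly in $k$ by $\|\sE_\cD - \varphi\circ\sO\|_{2,\cG}^2$: since $e_k = \varphi(\sO(g_{x_k},z)) - \sE_\cD(g_{x_k})$ with $g_{x_k}\in\cG$ (possibly correlated with $z$), we have $\NRM{e_k}^2 \leq \sup_{g\in\cG}\NRM{\varphi(\sO(g,z)) - \sE_\cD(g)}^2$ pointwise in $z$, hence $\Exp{\NRM{e_k}^2} \leq \Exp{\sup_{g\in\cG}\NRM{(\sE_\cD - \varphi\circ\sO)(g,z)}^2} = \|\sE_\cD - \varphi\circ\sO\|_{2,\cG}^2$. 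Finally, $\sum_{k=0}^{t-1}\rho^{t-1-k} \leq \sum_{j\geq 0}\rho^j = \frac{1}{1-\rho} = \frac{L}{\mu}$, so the noise term is bounded by $\frac{1}{2L}\cdot\frac{L}{\mu}\cdot\|\sE_\cD - \varphi\circ\sO\|_{2,\cG}^2 = \frac{\|\sE_\cD - \varphi\circ\sO\|_{2,\cG}^2}{2\mu}$, which yields the claimed inequality with $\Delta = \delta_0 = \cL(x_0) - \cL^*$.

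\textbf{Main obstacle.} The analytic core is routine; the only subtlety — and the reason the statement uses the $\|\cdot\|_{2,\cG}$ norm (supremum inside the expectation) rather than $\|\cdot\|_{\cG,2}$ — is precisely the correlation between the iterate $x_k$ (and hence $g_{x_k}$) and the oracle seed $z$. Because $z$ is drawn once before optimization and reused, one cannot condition and use an unbiasedness/variance argument; bounding $\NRM{e_k}^2$ pointwise by the worst-case-over-$\cG$ quantity before taking the expectation is what makes the argument go through, and I would make sure to state this carefully (including the measurability of $\sup_{g\in\cG}\NRM{(\sE_\cD-\varphi\circ\sO)(g,z)}^2$, guaranteed by the lattice-supremum convention noted in the footnote).
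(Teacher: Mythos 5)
Your proof is correct and follows essentially the same route as the paper's: the same smoothness expansion yielding $-\frac{1}{2L}\|\nabla\cL(x_t)\|^2+\frac{1}{2L}\|e_t\|^2$, the same use of the $\mu$-PL inequality, the same pointwise bound $\|e_t\|^2\leq\sup_{g\in\cG}\|\sE_\cD(g)-\varphi\circ\sO(g,z)\|^2$ to handle the correlation between $x_t$ and $z$, and the same geometric unrolling. Your closing remark on why the $\|\cdot\|_{2,\cG}$ norm is needed matches the paper's own discussion of this point.
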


\begin{proof}
First, recall that, for any $x\in\R^d$, $\nabla\ell_x\in\cG$ and $\ExpUnder{\xi\sim\cD}{\nabla_x\ell(x,\xi)} = \sE_\cD(\nabla\ell_x)$. Thus,
\BEQe
\BA{lll}
\Exp{\|\nabla \cL(x_t) - \varphi(o_t)\|^2} &=& \Exp{\|\sE_\cD(\nabla\ell_{x_t}) - \varphi\circ\sO(\nabla\ell_{x_t}, z)\|^2}\\
&\leq& \Exp{\sup_{g\in\cG}\|\sE_\cD(g) - \varphi\circ\sO(g, z)\|^2}\\
&=& \|\sE_\cD - \varphi\circ \sO\|_{2,\cG}^2
\EA
\EEQe
Then, by smoothness, we have
\BEQe
\BA{lll}
\cL(x_{t+1}) - \cL(x_t) &\leq& -\frac{1}{L} \scprod{\nabla \cL(x_t)}{\varphi(o_t)} + \frac{1}{2L}\|\varphi(o_t)\|^2\\
&=& -\frac{1}{2L}\|\nabla \cL(x_t)\|^2 + \frac{1}{2L}\|\nabla \cL(x_t) - \varphi(o_t)\|^2
\EA
\EEQe
Moreover, as $\cL$ is $\mu$-PL, we have
\BEQe
\|\nabla \cL(x_t)\|^2 \geq 2\mu \left(\cL(x_t) - \inf_{x\in\R^d} \cL(x)\right)\,.
\EEQe
Combining the two previous equations and taking the expectation gives
\BEQe
\Exp{\cL(x_{t+1}) - \cL(x_t)} \leq -\frac{1}{\kappa}\Exp{\cL(x_t) - \inf_{x\in\R^d} \cL(x)} + \frac{ \|\sE_\cD - \varphi\circ \sO\|_{2,\cG}^2}{2L}\,.
\EEQe
A simple recurrence gives
\BEQe
\Exp{\cL(x_t) - \inf_{x\in\R^d} \cL(x)} \leq \left(1 - \frac{1}{\kappa}\right)^t \Exp{\cL(x_0) - \inf_{x\in\R^d} \cL(x)} +  \frac{ \|\sE_\cD - \varphi\circ \sO\|_{2,\cG}^2}{2\mu}\,.
\EEQe
\end{proof}

\begin{proof}[\textbf{Proof of \Proposition{ub_GL}}]
We need to select a number of steps in \Lemma{ub_GEM} sufficient to reduce the first term in $\Delta (1-1/\kappa)^t$ to any given precision $\varepsilon > 0$. After the first iteration, we fix the number of iterations as $T_z=\kappa\ln\left(\frac{\|\varphi(o_0)\|^2 + \|\sE_\cD-\varphi\circ\sO\|_{2,\cG}^2}{\varepsilon\mu}\right)$. Note that this stopping time depends only on the observation $o_0$ at the first iteration, and can thus be computed after this iteration. Then, we have

\BEQe
\begin{aligned}
   \Exp{\cL(x_{T_z}) - \inf_{x\in\R^d} \cL(x)} &\leq \Exp{\left( 1 - \frac{1}{\kappa} \right)^{T_z} (\cL(x_0) - \inf_{x\in\R^d} \cL(x))}\\
   &\quad + \Exp{\frac{1}{2L}\sum_{t=0}^{T_z-1} \left( 1 - \frac{1}{\kappa} \right)^{T_z-t-1}\|\nabla \cL(x_t) - \varphi(o_t)\|^2}\\
   &\leq \varepsilon\mu\Exp{\frac{\cL(x_0) - \inf_{x\in\R^d} \cL(x)}{\|\varphi(o_0)\|^2 + \|\sE_\cD-\varphi\circ\sO\|_{2,\cG}^2}}\\
   &\quad + \Exp{\frac{1}{2L}\sum_{t=0}^{T_z-1} \left( 1 - \frac{1}{\kappa} \right)^{T_z-t-1}\sup_{g\in\cG}\|\sE_\cD(g) - \varphi\circ\sO(g,z)\|^2}\\
   &\leq 2\varepsilon + \frac{\Exp{\sup_{g\in\cG}\|\sE_\cD(g) - \varphi\circ\sO(g,z)\|^2}}{2\mu}\\
   &= 2\varepsilon + \frac{\|\sE_\cD - \varphi\circ\sO\|_{2,\cG}^2}{2\mu}\,,
\end{aligned}
\EEQe
where the last inequality follows from the $\mu$-PL condition and $\cL(x_0) - \inf_{x\in\R^d} \cL(x) \leq \|\nabla\cL(x_0)\|^2/2\mu \leq (2\|\varphi(o_0)\|^2 + 2\|\sE_\cD(\nabla\ell_{x_0}) - \varphi\circ\sO(\nabla\ell_{x_0},z)\|^2)/2\mu$ (see the beginning of the proof in \Lemma{ub_GEM}), leading to
\BEQe
\begin{aligned}
\Exp{\frac{\cL(x_0) - \inf_{x\in\R^d} \cL(x)}{\|\varphi(o_0)\|^2 + \|\sE_\cD-\varphi\circ\sO\|_{2,\cG}^2}} &\leq \frac{1}{2\mu}\Exp{\frac{2\|\varphi(o_0)\|^2 + 2\|\sE_\cD(\nabla\ell_{x_0}) - \varphi\circ\sO(\nabla\ell_{x_0},z)\|^2}{\|\varphi(o_0)\|^2 + \|\sE_\cD-\varphi\circ\sO\|_{2,\cG}^2}}\\
&\leq \frac{1}{\mu}\Exp{1 + \frac{\sup_{g\in\cG}\|\sE_\cD(g) - \varphi\circ\sO(g,z)\|^2}{\|\sE_\cD-\varphi\circ\sO\|_{2,\cG}^2}}\\
&= \frac{2}{\mu}\,.
\end{aligned}
\EEQe

Finally, taking $\varphi\in\bM(\cO,\R^d)$ such that $\|\sE_\cD-\varphi\circ\sO\|_{2,\cG} \leq \sigma_{2,\cG}(\sE_\cD|\sO) + \varepsilon$ and $\varepsilon\to 0$ gives the desired result.
\end{proof}

\begin{proof}[\textbf{Proof of \Corollary{deterministic}}]
If $\sO(g,z) = \tilde{\sO}(g)$ is independent of $z$, then $\lb = \ub = \sigma_\cG(\sE_\cD|\tilde{\sO})$ as all norms are equal, and \Proposition{ub_GL} immediately gives the desired result.
\end{proof}

\section{Proofs of \Sec{iid}}

Recall that for the following proofs, the oracle is assumed to be of the form $\sON{}$ ($n$ \iid observations).

\begin{lemma}\label{lem:mini-batch}
For any $n\geq 1$, let $\varphi_n$ be such that $\|\sE_\cD - \varphi_n\circ\sO_n\|_{\cG,2}^2 \leq \sigma_{\cG,2}\left( \sE_\cD | \sO_n \right)^2 + \varepsilon$, and $(n_1,\dots,n_T)$ be non-negative integers such that $\sum_{t < T} n_t \leq n$. Then, the iterates $x_0 = 0$ and
\BEQ
x_{t+1} = x_t - \frac{1}{L} \varphi_{n_t}\left(o_{t,1},\dots,o_{t,n_t}\right)\,,
\EEQ
where $o_{t,k} = \sO(\nabla\ell_{x_t}, z^{(N_t+k)})$ is a (fresh) \iid observation and $N_t = \sum_{i < t} n_i$, achieve after $T$ iterations an approximation error
\BEQ
\Exp{\cL(x_T) - \cL^*} \leq \Delta \rho^T + \sum_{t=0}^{T-1} \frac{\sigma_t^2\rho^{T-t-1}}{2L} + \frac{\varepsilon}{2\mu}\,,
\EEQ
where $\sigma_t = \sigma_{\cG,2}\left( \sE_\cD | \sO_{n_t} \right)$, $\cL^* = \inf_{x\in\R^d} \cL(x)$, $\rho = 1 - \mu/L$ and $\Delta = \cL(x_0) - \cL^*$.
\end{lemma}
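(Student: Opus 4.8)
The plan is to run the same descent analysis as in \Lemma{ub_GEM}, but carefully tracking the fact that at iteration $t$ the gradient estimate $\varphi_{n_t}(o_{t,1},\dots,o_{t,n_t})$ is built from $n_t$ fresh \iid seeds, so its error can be controlled by $\sigma_{\cG,2}(\sE_\cD|\sO_{n_t})^2 + \varepsilon$ \emph{conditionally on $x_t$}. First I would fix a function $\cL\in\cF_{\mbox{\tiny pl}}(\cG,\cD,\mu,L)$ and set $g_t = \nabla\ell_{x_t}\in\cG$, $\hat g_t = \varphi_{n_t}(o_{t,1},\dots,o_{t,n_t})$. By smoothness, exactly as in \Lemma{ub_GEM}, one gets the one-step inequality
\BEQe
\cL(x_{t+1}) - \cL(x_t) \leq -\frac{1}{2L}\|\nabla\cL(x_t)\|^2 + \frac{1}{2L}\|\nabla\cL(x_t) - \hat g_t\|^2\,,
\EEQe
and then the $\mu$-PL bound $\|\nabla\cL(x_t)\|^2 \geq 2\mu(\cL(x_t)-\cL^*)$ turns this into
\BEQe
\cL(x_{t+1}) - \cL^* \leq \rho\,(\cL(x_t)-\cL^*) + \frac{1}{2L}\|\nabla\cL(x_t) - \hat g_t\|^2\,,
\EEQe
with $\rho = 1-\mu/L$.

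The key step is bounding $\Exp{\|\nabla\cL(x_t) - \hat g_t\|^2}$ by $\sigma_t^2 + \varepsilon$ where $\sigma_t = \sigma_{\cG,2}(\sE_\cD|\sO_{n_t})$. The point is that $\nabla\cL(x_t) = \sE_\cD(g_t)$ and $\hat g_t = \varphi_{n_t}\circ\sO_{n_t}(g_t, z_{[t]})$ where $z_{[t]} = (z^{(N_t+1)},\dots,z^{(N_t+n_t)})$ is a block of seeds \emph{disjoint} from those used to produce $x_t$ (since $\sum_{i<t} n_i = N_t$ and samples are used once). Hence, conditioning on $x_t$ (equivalently on the seeds $z^{(1)},\dots,z^{(N_t)}$ and $r$), the block $z_{[t]}$ is still \iid with the original law, $g_t$ is a fixed element of $\cG$, and
\BEQe
\Exp{\|\sE_\cD(g_t) - \varphi_{n_t}\circ\sO_{n_t}(g_t,z_{[t]})\|^2 \,\big|\, x_t} \leq \sup_{g\in\cG}\Exp{\|\sE_\cD(g) - \varphi_{n_t}\circ\sO_{n_t}(g,z)\|^2} = \|\sE_\cD - \varphi_{n_t}\circ\sO_{n_t}\|_{\cG,2}^2 \leq \sigma_t^2 + \varepsilon\,.
\EEQe
Taking total expectation gives $\Exp{\|\nabla\cL(x_t)-\hat g_t\|^2} \leq \sigma_t^2 + \varepsilon$. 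Note this is precisely why the statement uses the $\|\cdot\|_{\cG,2}$ norm (supremum inside the expectation over $g$ is not needed here, unlike in \Proposition{ub_GL}, because independence lets us fix $g = g_t$ before taking expectation over the fresh seeds).

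Plugging this into the recursion and unrolling gives
\BEQe
\Exp{\cL(x_T)-\cL^*} \leq \rho^T \Delta + \sum_{t=0}^{T-1}\rho^{T-t-1}\frac{\sigma_t^2+\varepsilon}{2L} = \rho^T\Delta + \sum_{t=0}^{T-1}\frac{\sigma_t^2\rho^{T-t-1}}{2L} + \frac{\varepsilon}{2L}\sum_{t=0}^{T-1}\rho^{T-t-1}\,,
\EEQe
and since $\sum_{t=0}^{T-1}\rho^{T-t-1} \leq \sum_{j\geq 0}\rho^j = \kappa = L/\mu$, the last term is at most $\varepsilon/2\mu$, yielding the claim. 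The main subtlety to get right is the measurability/independence bookkeeping in the conditioning step: one must argue that $x_t$ is a measurable function of $(r, z^{(1)},\dots,z^{(N_t)})$ only — which follows by induction from the update rule, since each $o_{s,k}$ for $s<t$ uses seed index $N_s + k \leq N_t$ — so that the fresh block $z_{[t]}$ is genuinely independent of $(g_t, x_t)$. Everything else is the standard smooth $+$ PL descent computation already carried out in \Lemma{ub_GEM}.
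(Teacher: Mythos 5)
Your proposal is correct and follows essentially the same route as the paper: the smoothness plus $\mu$-PL one-step contraction, the recursion, and the key observation that $x_t$ depends only on $(r, z^{(1)},\dots,z^{(N_t)})$ so the fresh block of seeds lets you bound $\Exp{\|\nabla\cL(x_t)-\hat g_t\|^2}$ by $\sigma_t^2+\varepsilon$ via the $\|\cdot\|_{\cG,2}$ norm. Your handling of the conditioning (and of the accumulated $\varepsilon$ term via the geometric sum) is in fact slightly more explicit than the paper's, but the argument is the same.
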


\begin{proof}
As in Lemma \ref{lem:ub_GEM}, we have, using smoothness and the $\mu$-PL condition:
\BEQe
\cL(x_{t+1}) - \cL(x_t) \leq -\frac{1}{\kappa}\left(\cL(x_t) - \inf_{x\in\R^d} \cL(x)\right) + \frac{1}{2L}\|\nabla \cL(x_t) - \varphi_{n_t}(o_t)\|^2\,.
\EEQe
where $o_t=\sO(\nabla\ell_{x_t}, z^{(N_t+1)}), \dots, \sO(\nabla\ell_{x_t}, z^{(N_t+n_t)})$. Thus, by a simple recursion:
\BEQe
   \cL(x_T) - \inf_{x\in\R^d} \cL(x) \leq \Delta \left( 1 - \frac{1}{\kappa} \right)^T + \frac{1}{2L}\sum_{t=0}^{T-1} \left( 1 - \frac{1}{\kappa} \right)^{T-t-1}\|\nabla \cL(x_t) - \varphi_{n_t}(o_t)\|^2 \,.
\EEQe
To conclude, we take the expectation on both sides of the inequality: since $\varphi_{n_t}(o_t)$ is independent from $(z^{(i)})_{1\leq i \leq N_t}$ and $\nabla \cL(x_t)$ only depends on $(z^{(i)})_{1\leq i \leq N_t}$, $\nabla \cL(x_t)$ and $\varphi_{n_t}(o_t)$ are independent and thus $\E\|\nabla \cL(x_t) - \varphi_{n_t}(o_t)\|^2\leq \sigma_{\cG,2}\left( \sE_\cD | \sO_{n_t} \right)^2+\varepsilon$, by definition of $\varphi_{n_t}$.
\end{proof}

\begin{proof}[\textbf{Proof of \Proposition{iid}}]
We first start with a warm-up phase in which we only use the first observation $\sO(\nabla\ell_{x_t}, z^{(1)})$ for a number of steps sufficient to reduce the first term in Lemma \ref{lem:ub_GEM} in $\Delta (1-1/\kappa)^t$ to any given precision $\varepsilon > 0$. We then fix the number of iterations of this warm-up phase (after the first iteration) as $T_z=\kappa\ln\left(\frac{\|\varphi(o_0)\|^2 + \|\sE_\cD-\varphi\circ\sO_1\|_{2,\cG}^2}{\varepsilon\mu}\right)$. This gives
\BEQe
   \Exp{\cL(x_{T_z}) - \inf_{x\in\R^d} \cL(x)} \quad\leq\quad 2\varepsilon + \frac{\|\sE_\cD - \varphi\circ\sO_1\|_{2,\cG}^2}{2\mu} \quad\leq\quad 2\varepsilon + \frac{\sigma_{2,\cG}(\sE_\cD|\sO_1)^2 + \varepsilon}{2\mu}\,,
\EEQe
for a function $\varphi\in\bM(\cO,\R^d)$ such that $\|\sE_\cD-\varphi\circ\sO_1\|_{2,\cG}^2 \leq \sigma_{2,\cG}(\sE_\cD|\sO_1)^2 + \varepsilon$.
Then, we apply \Lemma{mini-batch} starting at $x_0'=x_{T_z}$ with a number of steps $T=\lceil a\kappa \log n \rceil$ and a fixed mini-batch size of $n_t = N = \left\lfloor\frac{n-1}{1 + a\kappa\log n}\right\rfloor$. This ensure that $\sum_{t < T} n_t \leq n - 1$ and gives
\BEQ
\BA{lll}
\Exp{\cL(x_{T+T_z}) - \inf_{x\in\R^d} \cL(x)} &\leq& \widetilde{\Delta} \left( 1 - \frac{1}{\kappa} \right)^T + \frac{\sigma_{\cG,2}\left( \sE_\cD | \sO_N \right)^2}{2\mu} + \frac{\varepsilon}{2\mu}\\
&\leq& \widetilde{\Delta} e^{-T/\kappa} + \frac{\sigma_{\cG,2}\left( \sE_\cD | \sO_N \right)^2}{2\mu} + \frac{\varepsilon}{2\mu}\\
&\leq& \widetilde{\Delta} n^{-a} + \frac{\sigma_{\cG,2}\left( \sE_\cD | \sO_N \right)^2}{2\mu} + \frac{\varepsilon}{2\mu}\,,
\EA
\EEQ
where $\widetilde{\Delta}=\Exp{\cL(x_{T_z}) - \inf_{x\in\R^d} \cL(x)} \leq 2\varepsilon + \frac{\sigma_{2,\cG}(\sE_\cD|\sO_1)^2 + \varepsilon}{2\mu}$. Finally, letting $\varepsilon$ tend to $0$ concludes the proof.
\end{proof}

\begin{proof}[\textbf{Proof of \Proposition{iid_increasing_batch}}]
Similarly to the proof of \Proposition{iid}, we start with a warm-up phase of $T_z=\kappa\ln\left(\frac{\|\varphi(o_0)\|^2 + \|\sE_\cD-\varphi\circ\sO_1\|_{2,\cG}^2}{\varepsilon\mu}\right)$ steps using only the first observation $\sO(\nabla\ell_{x_t}, z^{(1)})$.
This gives, for a function $\varphi\in\bM(\cO,\R^d)$ such that $\|\sE_\cD-\varphi\circ\sO_1\|_{2,\cG}^2 \leq \sigma_{2,\cG}(\sE_\cD|\sO_1)^2 + \varepsilon$, an approximation error
\BEQe
   \Exp{\cL(x_{T_z}) - \inf_{x\in\R^d} \cL(x)} \quad\leq\quad 2\varepsilon + \frac{\|\sE_\cD - \varphi\circ\sO_1\|_{2,\cG}^2}{2\mu} \quad\leq\quad 2\varepsilon + \frac{\sigma_{2,\cG}(\sE_\cD|\sO_1)^2 + \varepsilon}{2\mu}\,.
\EEQe
We then apply \Lemma{mini-batch} starting at $x_0'=x_{T_z}$ with a number of steps $T=\lfloor (n-1)/2 \rfloor$ and an increasing mini-batch of size $n_t = \left\lceil (n-1) (1-c)c^{T-t-1}/2 \right\rceil$ where $c = \sqrt{1-\kappa^{-1}}$. This ensures that $\sum_{t < T} n_t \leq \sum_{t < T} (1 + (n-1) (1-c)c^{T-t-1}/2) \leq T + \frac{(n-1)(1-c)}{2(1-c)} \leq \frac{n-1}{2} + \frac{n-1}{2} = n-1$. Thus, applying \Lemma{mini-batch} gives
\BEQ
\BA{lll}
\Exp{\cL(x_T) - \inf_{x\in\R^d} \cL(x)} &\leq& \widetilde{\Delta} c^{2T} + \frac{1}{2L}\sum_{t=0}^{T-1} c^{2(T-t-1)}\left( a + \frac{b}{n_t} \right) + \frac{\varepsilon}{2\mu}\\
&\leq& \widetilde{\Delta} e^{-T/\kappa} + \frac{1}{2L}\sum_{t=0}^{T-1} \frac{2b c^{T-t-1}}{(n-1)(1-c)} + \frac{a+\varepsilon}{2\mu}\\
&\leq& \widetilde{\Delta} e^{-\frac{n-2}{2\kappa}} + \frac{b}{(n-1)L(1-c)^2} + \frac{a+\varepsilon}{2\mu}\,,
\EA
\EEQ
where $\widetilde{\Delta}=\Exp{\cL(x_{T_z}) - \inf_{x\in\R^d} \cL(x)} \leq 2\varepsilon + \frac{\sigma_{2,\cG}(\sE_\cD|\sO_1)^2 + \varepsilon}{2\mu}$. We conclude by noting that $L(1-c)^2 = \mu \kappa(1-\sqrt{1-\kappa^{-1}})^2 \geq \mu\kappa(2\kappa)^{-2} = \mu/4\kappa$ and, if $n\geq 3$, we have $n-2\geq n/3$ and $n-1\geq 2n/3$.
\end{proof}

\section{Proofs of \Sec{applications}}

\subsection{Empirical risk minimization}

\begin{proof}[\textbf{Proof of \Proposition{upper_SL}}]
Let $\varphi(g_1,\dots,g_n) = \frac{1}{n}\sum_i g_i$ be the average over the $n$ data points, then we directly have $\|\sE_\cD - \varphi\circ\sON{SL}\|_{\cG,2}^2 = \sup_{g\in\cG} \Exp{\|\frac{1}{n} \sum_i (g(\xi_i) - \ExpUnder{\xi\sim\cD}{g(\xi)})\|^2} = \sup_{g\in\cG} \var(g(\xi_1))/n$ as the data samples $\xi_i$ are \iid random variables.
\end{proof}

We next prove the upper bounds on $\SV{\cD}{\cG} = \sup_{g\in\cG} \var(g(\xi_1))$ provided after \Proposition{upper_SL} in the main text.

\begin{proof}[Proof of the upper bounds on $\SV{\cD}{\cG}$]
We begin with affine functions. Let $(g:\xi\mapsto A\xi + b)\in\cGN{Aff}$. We have $\var(g(\xi))=\var(A\xi)=\esp{\NRM{A(\xi-\E\xi)}^2}\leq B^2 \esp{\NRM{\xi-\E\xi}}^2=B^2\var(\xi)$, since $\rho(A)\leq B$. By taking a supremum over $g\in\cGN{Aff}$, we have the desired result.
The equality is obtained by taking any rank $D$ projection for $A$, in the case $D\leq d$.

For Lipschitz functions: let $g\in\cGN{Lip}$. We have $\var(g(\xi))=\esp{\NRM{g(\xi)-\E g(\xi)}^2}=\frac{1}{2}\esp{\NRM{g(\xi)- g(\xi')}^2}$, where $\xi'\sim \cD$ is independent from $\xi$. Thus, using the Lipschitzness of $g$, $\var(g(\xi))\leq B^2\frac{1}{2}\esp{\NRM{\xi- \xi'}^2}=B^2\var(\xi)$, and we take the supremum over $g$.

For functions with bounded variations: let $g\in\cGN{Bnd}$. There exists $c\in\R^d$ such that for all $\xi$, $\NRM{g(\xi)-c}\leq B$.
Using $\var(g(\xi))=\esp{\NRM{g(\xi)-\Exp{g(\xi)}}^2}\leq\esp{\NRM{g(\xi)- c}^2}\leq B^2$, we have $\var(g(\xi))\leq B^2$, and we take the supremum over $g$.
For the equality, we take $g$ such that $g(\xi)=(B,0,\ldots,0)$ for $\xi\in A$ and $g(\xi)=(-B,0,\ldots,0)$ for $\xi\notin A$.
\end{proof}

We finally specify our SL results for $\cGN{Bnd}$ with the example or regularized SL.

\begin{lemma}\label{lem:sigma_bnd}
Assume that $\forall p\in[0,1]$, $\exists A\subset\Xi$ measurable \st $\ProbUnder{\cD}{A} = p$. Then, for $n\geq 1$, we have
\BEQ
\frac{B}{1 + \sqrt{n}} \,\leq\, \sigma_{\cGN{Bnd},2}(\sE_\cD|\sON{SL}) \,\leq\, \frac{B}{\sqrt{n}}\,.
\EEQ
Moreover, the average $\varphi(x) = \frac{1}{n}\sum_i x_i$ is asymptotically optimal, as $\|\sE_\cD - \varphi\circ\sON{SL}\|_{\cGN{Bnd},2}^2 = B^2/n$.
\end{lemma}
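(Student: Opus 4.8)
The plan is to handle the two bounds separately: the upper bound (together with the companion equality for the empirical average) follows directly from \Proposition{upper_SL}, whereas the lower bound is obtained by reducing to the classical minimax risk for estimating a Bernoulli parameter. Concretely, applying \Proposition{upper_SL} with $\cG=\cGN{Bnd}$ gives $\sigma_{\cGN{Bnd},2}(\sE_\cD|\sON{SL})^2\leq\SV{\cD}{\cGN{Bnd}}/n$, and combining with the bound $\SV{\cD}{\cGN{Bnd}}\leq B^2$ established just after \Proposition{upper_SL} yields $\sigma_{\cGN{Bnd},2}(\sE_\cD|\sON{SL})\leq B/\sqrt n$. For the empirical average $\varphi(x)=\frac1n\sum_i x_i$, the proof of \Proposition{upper_SL} in fact shows $\|\sE_\cD-\varphi\circ\sON{SL}\|_{\cGN{Bnd},2}^2=\SV{\cD}{\cGN{Bnd}}/n$; under the stated assumption there is a set $A$ with $\ProbUnder{\cD}{A}=1/2$, so the equality case of $\SV{\cD}{\cGN{Bnd}}\leq B^2$ gives $\SV{\cD}{\cGN{Bnd}}=B^2$ and hence $\|\sE_\cD-\varphi\circ\sON{SL}\|_{\cGN{Bnd},2}^2=B^2/n$. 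Together with the lower bound proved next this gives $\frac{B^2}{(1+\sqrt n)^2}\leq\sigma_{\cGN{Bnd},2}(\sE_\cD|\sON{SL})^2\leq B^2/n$, and since $n/(1+\sqrt n)^2\to1$ the average is asymptotically optimal.

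\textbf{Lower bound via reduction to Bernoulli estimation.} For each $p\in[0,1]$ choose (by assumption) a measurable $A_p\subset\Xi$ with $\ProbUnder{\cD}{A_p}=p$ and set $g_p(\xi)=(2\one[\xi\in A_p]-1)\,Be_1$, with $e_1$ the first basis vector. Each $g_p$ takes values in $\{-Be_1,Be_1\}$, hence lies in $\cGN{Bnd}$ (center $0$), and $\sE_\cD(g_p)=(2p-1)Be_1$. Fix a measurable $\varphi:\cO\to\R^d$. Writing $Y_i=\one[\xi_i\in A_p]$, the variables $Y_1,\dots,Y_n$ are i.i.d.\ Bernoulli$(p)$ and the observation $\sON{SL}(g_p,z)=(g_p(\xi_1),\dots,g_p(\xi_n))$ is a deterministic bijective function of $(Y_1,\dots,Y_n)$, so $\varphi(\sON{SL}(g_p,z))=\tilde\varphi(Y_1,\dots,Y_n)$ for some measurable $\tilde\varphi$. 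Setting $\hat p=\big(\langle\tilde\varphi(Y),e_1\rangle+B\big)/(2B)$ and using $\|a e_1-v\|^2\geq(a-\langle v,e_1\rangle)^2$, one gets $\Exp{\|\sE_\cD(g_p)-\varphi(\sON{SL}(g_p,z))\|^2}\geq 4B^2\,\Exp{(\hat p-p)^2}$. Taking $\sup_p$ (which only uses the sub-family $\{g_p\}_{p\in[0,1]}\subset\cGN{Bnd}$) and then $\inf_\varphi$, and observing that as $\varphi$ varies $\hat p$ realizes every estimator of $p$ based on $n$ i.i.d.\ Bernoulli$(p)$ samples, we obtain $\sigma_{\cGN{Bnd},2}(\sE_\cD|\sON{SL})^2\geq 4B^2\inf_{\hat p}\sup_{p\in[0,1]}\Exp{(\hat p-p)^2}$. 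The latter is the classical minimax risk for Bernoulli mean estimation under squared loss, equal to $\frac{1}{4(1+\sqrt n)^2}$: indeed the equalizer estimator $\frac{\sqrt n\,\bar Y+1/2}{1+\sqrt n}$ has constant risk $\frac{1}{4(1+\sqrt n)^2}$ (a direct bias--variance computation) and is the posterior mean for the $\mathrm{Beta}(\sqrt n/2,\sqrt n/2)$ prior, hence minimax. This gives $\sigma_{\cGN{Bnd},2}(\sE_\cD|\sON{SL})\geq B/(1+\sqrt n)$.

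\textbf{Main obstacle.} The only delicate step is the lower bound: one must set up the reduction carefully so that the infimum over the oracle-side map $\varphi$ corresponds exactly to the infimum over Bernoulli mean estimators (embedding into $\R^d$ via projection onto $e_1$, and using that the law of $\sON{SL}(g_p,z)$ depends on $A_p$ only through $p$), and then invoke the \emph{exact} minimax constant $\frac{1}{4(1+\sqrt n)^2}$ rather than a loose $\Theta(1/n)$ estimate, which is precisely what makes the upper and lower bounds match up to the ``$1+$'' in the denominator.
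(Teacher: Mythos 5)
Your proposal is correct and follows essentially the same route as the paper: the upper bound via \Proposition{upper_SL} with $\SV{\cD}{\cGN{Bnd}}\leq B^2$, and the lower bound by restricting to two-valued functions $g_p\in\{-Be_1,Be_1\}$, reducing to Bernoulli mean estimation, and extracting the exact constant $\tfrac{1}{4(1+\sqrt n)^2}$ from the $\mathrm{Beta}(\sqrt n/2,\sqrt n/2)$ prior. The only cosmetic difference is that the paper computes the Bayes risk explicitly for a general $\mathrm{Beta}(a,b)$ prior and then optimizes over $a,b$, whereas you invoke the classical equalizer-estimator argument directly.
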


\begin{proof}
First, the upper bound is a direct application of \Proposition{upper_SL} for $\cGN{Bnd}$, and is achieved for the average $\varphi(x) = \frac{1}{n}\sum_i x_i$.
To prove the lower bound, we replace the supremum over all functions in $\cGN{Bnd}$ by functions of the form $g:\Xi\to\{-Bv,Bv\}$ where $\|v\|=1$.
\begin{align*}
\sigma_{\cGN{Bnd},2}(\sE_\cD|\sON{SL})^2
&\geq \inf_{\varphi\in\bM(\cO,\R^d)} \sup_{g\in\bF(\Xi,\{-Bv,Bv\})} \Exp{\| 
\sE_\cD(g) - \varphi(g(\xi_1),\dots,g(\xi_n)) \|^2}\\
&\geq B^2 \inf_{\varphi\in\bM(\cO,\R^d)} \sup_{p\in[0,1]} \Exp{\left(
2p - 1 - \varphi(G_1,\dots,G_n)\right)^2}\,,
\end{align*}
where $G_i=\one\{g(\xi_i) = Bv\}$ are \iid Bernoulli random variables of parameter $p=\Prob{g(\xi_1) = Bv}$. We now replace the probability $p$ by a random variable $P\sim\mbox{\rm Beta}(a,b)$ for $a,b>0$, which gives
\begin{align*}
\sigma_{\cGN{Bnd},2}(\sE_\cD|\sON{SL})^2
&\geq B^2 \inf_{\varphi\in\bM(\cO,\R^d)} \Exp{\left(
2P - 1 - \varphi(G_1,\dots,G_n)\right)^2}\\
&= 4B^2 \inf_{\varphi\in\bM(\cO,\R^d)} \Exp{\left(
P - \varphi(G_1,\dots,G_n)\right)^2}\\
&= 4B^2\, \Exp{\left(
P - \Exp{P~|~G_1,\dots,G_n}\right)^2}\\
&= 4B^2\, \Exp{\left(
P - \Exp{P~|~K}\right)^2}\,,
\end{align*}
where $K=\sum_i G_i$ is a Binomial distribution of parameters $n$ and $p$, as the Bernoulli r.v. are identically distributed. A simple calculation gives that $P|K\sim\mbox{\rm Beta}(K+a,n-K+b)$, which allows us to compute the quantities $\Exp{P^2} = \frac{ab}{(a+b)^2(1+a+b)} + \frac{a^2}{(a+b)^2}$ and $\Exp{P~|~K} = \frac{K+a}{n+a+b}$. We thus obtain
\begin{align*}
\sigma_{\cGN{Bnd},2}(\sE_\cD|\sON{SL})^2
&\geq 4B^2\, \left(\Exp{
P^2} - \Exp{\Exp{P~|~K}^2}\right)\\
&= 4B^2\, \left(\frac{ab}{(a+b)^2(1+a+b)} + \frac{a^2}{(a+b)^2} - \Exp{\left(\frac{K+a}{n+a+b}\right)^2}\right)\\
&= \frac{4B^2 ab}{(a+b)(1+a+b)(n+a+b)}\,,
\end{align*}
where the last equality is obtained using $\Exp{(K+a)^2} = \Var(K) + (\Exp{K} + a)^2 = \frac{nab(n+a+b)}{(a+b)^2(1+a+b)} + \left(\frac{a(n+a+b)}{a+b}\right)^2$. Finally, choosing $a=b=\sqrt{n}/2$ gives the desired result.
\end{proof}

\begin{proof}[\textbf{Proof of \Proposition{bounded}}]
Noticing that $\sigma_{\cG,2}\left( \sE_\cD | \sO_n \right)^2 \leq B^2/n$ and $\sigma_{2,\cG}\left( \sE_\cD | \sO_1 \right)^2 \leq 4B^2$, we can use \Proposition{iid_increasing_batch} with $a=0$ in order to obtain
\BEQe
\mmratesc{\cGN{Bnd},\sON{SL}} \quad\leq\quad \frac{6\kappa B^2}{\mu n}+ \frac{2 B^2}{\mu} e^{-\frac{n}{6\kappa}}\,.
\EEQe
The right handside of \Proposition{bounded} is obtained by using $e^{-\frac{n}{6\kappa}} \leq \frac{6\kappa}{ne}$ and $6+12/e=10.41...\leq 11$.
The left handside of the desired inequality is then proved using the lower bound of \Lemma{sigma_bnd} together with the general lower bound of \Lemma{lb_GEM}.
\end{proof}

\subsection{Transfer learning}

\begin{proof}[\textbf{Proof of \Proposition{TLrad}}]
Let $\varphi(g_1,\dots,g_n) = \frac{1}{n}\sum_i g_i$ be the average over the $n$ data points, then we directly have, for $\xi\sim\cD$,
\BEQ
\BA{lll}
\|\sE_\cD - \varphi\circ\sON{TL}\|_{\cG,2}^2 &=& \sup_{g\in\cG} \Exp{\|\Exp{g(\xi)} - \frac{1}{n} \sum_i g(\xi'_i)\|^2}\\
&=& \sup_{g\in\cG} \|\Exp{g(\xi)} -\Exp{g(\xi'_1)}\|^2 + \frac{\var(g(\xi'_1))}{n}\\
&\leq& \sup_{g\in\cG} \|\Exp{g(\xi)} -\Exp{g(\xi'_1)}\|^2 + \sup_{g\in\cG} \frac{\var(g(\xi'_1))}{n}\\
&=& d_\cG(\cD,\cD')^2 + \frac{\|\sV_{\cD'}\|_\cG}{n}\,.\\
\EA
\EEQ
\end{proof}

\begin{proof}[\textbf{Proof of \Proposition{TLBnd}}]
By assumption, $\forall\varepsilon\in(0,1], \exists q_\varepsilon\in\R$ \st $\ProbUnder{\cD'}{\frac{d\cD}{d\cD'}(\xi') \geq q_\varepsilon} = (1+\varepsilon)/2$. We apply \Proposition{lecam} to $\sON{TN}$ with $g(\xi)=-g'(\xi)=B\left(2\one\left\{\frac{d\cD}{d\cD'}(\xi) \geq q_\varepsilon\right\} - 1\right)e_1$ where $e_1=(1,0,\dots)^\top$ is the first basis vector. First, note that $g,g'\in\cGN{Bnd}$, which gives
\BEQ
\lb^2 \geq \left(1 - \dN{LC}(\sON{TL}(g,z), \sON{TL}(-g,z))\right)\,\|\sE_\cD(g)\|^2\,,
\EEQ
where $\dN{LC}(p,q)$ is Le Cam's distance (see \Appendix{lecam}). By definition,
\BEQ
\BA{lll}
\|\sE_\cD(g)\|^2 &=& B^2 \ExpUnder{\cD}{2\one\left\{\frac{d\cD}{d\cD'}(\xi') \geq q_\varepsilon\right\} - 1}^2\\
&=& B^2 \ExpUnder{\cD'}{\left(2\one\left\{\frac{d\cD}{d\cD'}(\xi') \geq q_\varepsilon\right\} - 1 \right)\frac{d\cD}{d\cD'}(\xi')}^2\\
&=& B^2 \left(\ExpUnder{\cD'}{\left(2\one\left\{\frac{d\cD}{d\cD'}(\xi') \geq q_\varepsilon\right\} - 1 \right)\left(\frac{d\cD}{d\cD'}(\xi') - q_\varepsilon\right)} + q_\varepsilon\varepsilon \right)^2\\
&=& B^2 \left(\ExpUnder{\cD'}{\left|\frac{d\cD}{d\cD'}(\xi') - q_\varepsilon\right|} + q_\varepsilon\varepsilon \right)^2\,.
\EA
\EEQ
Moreover, we have $\ExpUnder{\cD'}{\left|\frac{d\cD}{d\cD'}(\xi') - q_\varepsilon\right|} \geq 2\dN{TV}(\cD,\cD') - |q_\varepsilon - 1|$ and $\ExpUnder{\cD'}{\left|\frac{d\cD}{d\cD'}(\xi') - q_\varepsilon\right|} \geq |q_\varepsilon - 1|$. As a consequence, we have $\ExpUnder{\cD'}{\left|\frac{d\cD}{d\cD'}(\xi') - q_\varepsilon\right|} \geq \dN{TV}(\cD,\cD')$ and $\ExpUnder{\cD'}{\left|\frac{d\cD}{d\cD'}(\xi') - q_\varepsilon\right|} + q_\varepsilon\varepsilon \geq |q_\varepsilon - 1| + q_\varepsilon\varepsilon \geq \varepsilon$, which gives
\BEQ
\|\sE_\cD(g)\|^2 \geq B^2 \max\left\{\dN{TV}(\cD,\cD'), \varepsilon\right\}^2 \geq \frac{B^2}{2} \left(\dN{TV}(\cD,\cD')^2 + \varepsilon^2\right)\,.
\EEQ
Finally, we conclude by noting that, as $g$ only takes two values ($-Be_1$ and $Be_1$), we have that, if $N=|\{i\in\set{1,n}~|~g(\xi_i') = Be_1\}|$ and $\varepsilon \leq 1/2$, then
\BEQ
\BA{lll}
\dN{LC}(\sON{TL}(g,z), \sON{TL}(-g,z)) &\leq& \dN{KL}(\sON{TL}(g,z), \sON{TL}(-g,z))\\
&=& (2\Exp{N}-n)\ln\left( \frac{1+\varepsilon}{1-\varepsilon} \right)\\
&=& n\varepsilon\ln\left( \frac{1+\varepsilon}{1-\varepsilon} \right)\\
&\leq& 2\ln(3)n\varepsilon^2\,,
\EA
\EEQ
and taking $\varepsilon = 1/(2\sqrt{n})$ gives the desired result
\BEQ
\lb^2 \geq \frac{\left(2 - \ln(3)\right)B^2}{16} \left(\dN{TV}(\cD,\cD')^2 + \frac{1}{n}\right)\,.
\EEQ
\end{proof}

\subsection{(Personalized) Federated Learning}

We now prove the following upper bound:
\begin{equation*}
    \sigma_{\cG,2}(\sE_{\cD}|\sON{FL})^2 \quad\leq\quad \inf_{q\in\R^m}d_\cG\left(\cD,\sum_i q_i\cD_i\right)^2 + \sum_{i=1}^m q_i^2\frac{\SV{\cD_i}{\cG}}{n_i}\,.
\end{equation*}

\begin{proof}[\textbf{Proof of \Proposition{federated}}]
    We start by upper bounding using the function $\varphi((g^i_j)_{i\in\set{1,m},j\in\set{1,n_i}})=\sum_{i=1}^m\sum_{j=1}^{n_i} \frac{q_i}{n_i} g^i_j$, leading to
\begin{align*}
    \sigma_{\cG,2}(\sE_{\cD}|\sON{FL})^2&\leq \sup_{g\in\cG} \Exp{\NRM{\sE_\cD(g) - \sum_{i=1}^m\sum_{j=1}^{n_i} \frac{q_i}{n_i} g(\xi^i_j)  }^2}\\
    &=\sup_{g\in\cG} \esp{\NRM{\sE_\cD(g) - \Exp{\sum_{i=1}^m\sum_{j=1}^{n_i} \frac{q_i}{n_i} g(\xi^i_j)}    }^2}+\var\left(\sum_{i=1}^m\sum_{j=1}^{n_i} \frac{q_i}{n_i} g(\xi^i_j)\right)\\
    &=\sup_{g\in\cG} \esp{\NRM{\sE_\cD(g) - \sum_{i=1}^m q_i \sE_{\cD_i}(g) }^2}+\sum_{i=1}^m\frac{q_i^2}{n_i}\var\left( g(\xi^i_1)\right)\\
    &\leq d_\cG\left(\cD,\sum_i q_i \cD_i\right)^2 + \sum_i \frac{q_i^2}{n_i} \SV{\cD_i}{\cG}\,.
\end{align*}
We conclude by taking the infimum over $(q_i)$.
\end{proof}

\subsection{Robust Learning}

We now prove the two results of the robust learning section.
First, the upper and lower bound for $\cGN{Bnd}$,
\BEQ
\sigma_{\cGN{Bnd},2}(\sE_\cD|\sON{RL})^2 = \Theta\left(B^2 \left(\eta^2 + \frac{1}{n}\right)\right)\,,
\EEQ
is obtained by applying \Proposition{TLBnd} to $\cD' = (1-\eta)\cD + \eta\cD_o$, and noting that,
\BEQ
\dN{TV}(\cD,\cD') = \frac{1}{2}\sup_{f\in\bF(\Xi,[-1,1])} \sE_\cD(f) - \sE_{\cD'}(f) = \frac{\eta}{2}\sup_{f\in\bF(\Xi,[-1,1])} \sE_\cD(f) - \sE_{\cD_o}(f) = \eta\,,
\EEQ
as $\sE_{\cD'}(f) = (1-\eta)\sE_\cD(f) + \eta \sE_{\cD_o}(f)$ and, by assumption, $\dN{TV}(\cD,\cD_o) = 1$.
The second result, for $\cGN{Lip}$,
\BEQ\label{eq:RLproof}
\sigma_{\cGN{Lip},2}(\sE_\cD|\sON{RL})^2 \leq c B^2 \var(\xi) \left(\eta + \frac{1}{n}\right)\,,
\EEQ
is proved as follows: let $\varphi$ be the robust mean estimator of \citet[Algorithm 1]{SteinhardtCV18}. First, note that, for a set of data points $(\xi_1,\dots,\xi_m)$, we have, with $\lambda_{\max}(M)$ denoting the largest singular value of the symmetric matrix $M$,
\BEQ
\BA{lll}
\lambda_{\max}\left( \frac{1}{m} \sum_{i=1}^m (g(\xi_i) - \bar{g}_m)(g(\xi_i) - \bar{g}_m)^\top \right) &=& \max_{x~:~\|x\|\leq 1} \frac{1}{m} \sum_{i=1}^m (x^\top(g(\xi_i) - \bar{g}_m))^2\\
&\leq& \frac{1}{m} \sum_{i=1}^m \|g(\xi_i) - \bar{g}_m\|^2\\
&\leq& \frac{B^2}{m} \sum_{i=1}^m \|\xi_i - \bar{\xi}_m\|^2\,,
\EA
\EEQ
where $\bar{g}_m$ and $\bar{\xi}_m$ are, respectively, the averages of $g(\xi_i)$ and $\xi_i$ over all data points. Thus, assuming without loss of generality that we place all the outliers at the end of the sequence $(\xi_1,\dots,\xi_n)$, we can use $\sigma_0^2 = \frac{B^2}{(1-\eta)n} \sum_{i=1}^{(1-\eta)n} \|\xi_i - \bar{\xi}_{(1-\eta)n}\|^2$ in Proposition 16 of \citet{SteinhardtCV18}, and get that, if $\eta \leq 1/4$, the robust mean estimator always returns a value $\varphi\circ \sON{RL}(g,z)$ such that
\BEQ
\|\bar{g}_{(1-\eta)n} - \varphi\circ\sON{RL}(g,z)\|^2 \leq \frac{c^2 B^2}{(1-\eta)n} \sum_{i=1}^{(1-\eta)n} \|\xi_i - \bar{\xi}_{(1-\eta)n}\|^2 \eta
\EEQ
where $c=40$ is a universal constant. Finally, we take the expectation over the samples (i.e. $z$) and have
\BEQ
\BA{lll}
\|\sE_\cD - \varphi\circ \sON{RL}\|_{\cGN{Lip}}^2 &\leq& 2\sup_{g\in\cGN{Lip}} \Exp{\|\sE_\cD(g) - \bar{g}_{(1-\eta)n}\|^2 + \|\bar{g}_{(1-\eta)n} - \varphi\circ\sON{RL}(g,z)\|^2}\\
&\leq& \frac{2B^2\var(\xi)}{(1-\eta)n} + \frac{2c^2 B^2}{(1-\eta)n} \sum_{i=1}^{(1-\eta)n} \Exp{\|\xi_i - \bar{\xi}_{(1-\eta)n}\|^2}\eta\,.
\EA
\EEQ
We conclude by showing that $\Exp{\|\xi_i - \bar{\xi}_{(1-\eta)n}\|^2} = (1-\frac{1}{n})\var(\xi) \leq \var(\xi)$ and $\frac{1}{1-\eta} \leq 4/3$, giving \Eq{RLproof} with the constant $c=3200$.

\subsection{Learning with fixed data points}
 We now provide a proof of the upper and lower bounds for the minimax excess risk under the fixed data learning scenario.

\begin{proof}[\textbf{Proof of \Proposition{fixed_learn}}]
First, as $\sON{FD}$ is deterministic, \Corollary{deterministic} immediately gives that
\BEQ
\mmratesc{\cGN{Bnd},\sON{FD}} = \frac{\sigma_{2,\cGN{Bnd}}(\sE_\cD|\sON{FD})^2}{2\mu} \quad\mbox{and}\quad \mmratesc{\cGN{Lip},\sON{FD}} = \frac{\sigma_{2,\cGN{Lip}}(\sE_\cD|\sON{FD})^2}{2\mu}\,.
\EEQ
We thus only need to show that $\sigma_{2,\cGN{Bnd}}(\sE_\cD|\sON{FD}) = 2B\left(1 - \ProbUnder{\cD}{\{\xi_i'\}_{i\set{1,n}}}\right)$ and $\sigma_{2,\cGN{Lip}}(\sE_\cD|\sON{FD}) = B\Exp{\min_i\|\xi - \xi_i'\|}$ to conclude.

\textbf{Case $\cGN{Bnd}$:} First, let us assume that all $\xi_i$ are distinct, as we can otherwise remove the duplicates without any loss of generality. Then, consider the estimator $\varphi(g_1,\dots,g_n) = \sum_{i=1}^n \ProbUnder{\cD}{\{\xi_i'\}} g_i + (1-\ProbUnder{\cD}{\{\xi_i'\}_{i\set{1,n}}})\frac{1}{n}\sum_i g_i$, which can be used here as the $\xi_i'$ are fixed data points and $\cD$ is known. Thus, using $\varphi$, we have
\BEQ
\BA{lll}
\sigma_{2,\cGN{Bnd}}(\sE_\cD|\sON{FD}) &\leq& \sup_{g\in\cGN{Bnd}} \|\sE_\cD(g) - \varphi(g(\xi_1'),\dots,g(\xi_n'))\|\\
&=& \sup_{g\in\cGN{Bnd}} \|\Exp{\one\{\xi\notin\{\xi_i'\}_{i\in\set{1,n}}\} (g(\xi) - \frac{1}{n}\sum_i g(\xi_i'))}\|\\
&\leq& \sup_{g\in\cGN{Bnd}} \Exp{\one\{\xi\notin\{\xi_i'\}_{i\in\set{1,n}}\} \|g(\xi) - \frac{1}{n}\sum_i g(\xi_i')\|}
\\
&\leq& 2B\left(1 - \ProbUnder{\cD}{\{\xi_i'\}_{i\set{1,n}}}\right)\,.
\EA
\EEQ
The lower bound is obtained using \Proposition{lecam} with $g(\xi) = -g'(\xi) = \one\{\xi\notin\{\xi_i'\}_{i\in\set{1,n}}\}2B e_1 $, where $e_1$ is the first basis vector. This gives $\sON{FD}(g,z) = \sON{FD}(g',z) = (0,\dots,0)$, and thus $\dN{TV}(\sON{FD}(g,z),\sON{FD}(g',z)) = 0$ and
\BEQ
\sigma_{2,\cGN{Bnd}}(\sE_\cD|\sON{FD}) \geq \|\sE_\cD(g)\| = 2B\left(1 - \ProbUnder{\cD}{\{\xi_i'\}_{i\set{1,n}}}\right)\,,
\EEQ
which concludes the proof.

\textbf{Case $\cGN{Lip}$:} Again, we assume, without loss of generality, that all $\xi_i$ are distinct. For the upper bound, we use the estimator $\varphi(g_1,\dots,g_n) = \sum_{i=1}^n \ProbUnder{\cD}{\argmin_j \|\xi - \xi_j'\| = i} g_i$, which gives
\BEQ
\BA{lll}
\sigma_{2,\cGN{Lip}}(\sE_\cD|\sON{FD}) &\leq& \sup_{g\in\cGN{Lip}} \|\sE_\cD(g) - \varphi(g(\xi_1'),\dots,g(\xi_n'))\|\\
&=& \sup_{g\in\cGN{Lip}} \|\Exp{\sum_{i=1}^n \one\{\argmin_j \|\xi - \xi_j'\| = i\} (g(\xi) - g(\xi_i'))}\|\\
&\leq& \sup_{g\in\cGN{Lip}} \Exp{\sum_{i=1}^n \one\{\argmin_j \|\xi - \xi_j'\| = i\} \|g(\xi) - g(\xi_i')\|}\\
&\leq& B\,\Exp{\sum_{i=1}^n \one\{\argmin_j \|\xi - \xi_j'\| = i\} \|\xi - \xi_i'\|}\\
&=& B\,\Exp{\min_i\|\xi - \xi_i'\|}\,.
\EA
\EEQ
For the lower bound, we apply \Proposition{lecam} with $g(\xi) = -g(\xi) = B\min_i\|\xi - \xi_i'\|e_1$, which is $B$-Lipschitz by construction. As $\sON{FD}(g,z) = \sON{FD}(g',z) = (0,\dots,0)$, we have $\dN{TV}(\sON{FD}(g,z),\sON{FD}(g',z)) = 0$ and
\BEQ
\sigma_{2,\cGN{Lip}}(\sE_\cD|\sON{FD}) \geq \|\sE_\cD(g)\| = B\,\Exp{\min_i\|\xi - \xi_i'\|}\,,
\EEQ
which concludes the proof.
\end{proof}

\end{document}